\newcommand{\subto}{\mathrm{subject\ to}}
\def\1{\bm{1}}
\def\rmX{{\mathbf{X}}}
\DeclareMathAlphabet{\mathsfit}{\encodingdefault}{\sfdefault}{m}{sl}
\SetMathAlphabet{\mathsfit}{bold}{\encodingdefault}{\sfdefault}{bx}{n}
\def\gG{{\mathcal{G}}}
\def\gN{{\mathcal{N}}}
\def\gP{{\mathcal{P}}}
\def\gS{{\mathcal{S}}}
\def\gY{{\mathcal{Y}}}
\def\gZ{{\mathcal{Z}}}
\def\sR{{\mathbb{R}}}
\newcommand{\E}{\mathbb{E}}
\newcommand{\R}{\mathbb{R}}
\DeclareMathOperator*{\argmin}{arg\,min}
\DeclareMathOperator{\Tr}{Tr}
\newcommand{\Norm}[1]{\ensuremath{\lVert #1 \rVert}}                  % Norm
\newcommand{\NormI}[1]{\ensuremath{\lVert #1 \rVert}_1}               % L1 Norm 
\newcommand{\NormII}[1]{\ensuremath{\lVert #1 \rVert}_2}              % L2 Norm 
\newcommand{\InNorm}[1]{{\left\vert\kern-0.2ex\left\vert\kern-0.2ex\left\vert #1 
    \right\vert\kern-0.2ex\right\vert\kern-0.2ex\right\vert}}                    % Induced Norm.
\newcommand{\InNormII}[1]{{\left\vert\kern-0.2ex\left\vert\kern-0.2ex\left\vert #1 
    \right\vert\kern-0.2ex\right\vert\kern-0.2ex\right\vert}_2}                    % Induced 2 Norm (Spectral Norm).
\newcommand{\InNormInfty}[1]{{\left\vert\kern-0.2ex\left\vert\kern-0.2ex\left\vert #1 
    \right\vert\kern-0.2ex\right\vert\kern-0.2ex\right\vert}_{\infty}}           % Induced Infinity norm.
\newcommand{\Abs}[1]{\ensuremath{\lvert #1 \rvert}}                              % Absolute value.
\newcommand{\AAbs}[1]{\ensuremath{\left \lvert #1 \right \rvert}}                              % Absolute value auto braces.
\newcommand{\iid}{i.i.d.~}                                                        % IID.
\newcommand{\defeq}{\overset{\mathrm{def}}{=}}                                   % Defnition equality
\DeclareMathOperator*{\union}{\cup}
\newcommand{\Best}[1]{\color{black}{\mathbf{#1}}}
\newtheorem{definition}{Definition}
\newtheorem{proposition}{Proposition}
\newtheorem{lemma}{Lemma}
\newtheorem{theorem}{Theorem}
\newtheorem{remark}{Remark}
\newtheorem{corollary}{Corollary}
\newtheorem{condition}{Condition}
\newtheorem{example}{Example}
\newcommand{\apptitle}[1]{
\def\toptitlebar{
	\hrule height4pt
	\vskip .25in}

\def\bottomtitlebar{
	\vskip .25in
	\hrule height1pt
	\vskip .25in}

\thispagestyle{plain}
\hsize\textwidth
\linewidth\hsize \toptitlebar {\centering
{\Large\bf SUPPLEMENTARY MATERIAL \\ #1 \par}}
\vspace{-0.1in} \bottomtitlebar
}
\newcommand{\Erdos}{Erdős} 
\newcommand{\Renyi}{Rényi }
\newcommand{\Z}{\mathcal{Z}}
\newcommand{\G}{\mathcal{G}}
\newcommand{\Y}{\mathcal{Y}}
\newcommand{\PA}{\text{PA}}
\newcommand{\subjto}{\text{subject to}}
\newcommand{\topoRandom}{\mathrm{Random\_Topo}}
\newcommand{\topoNotears}{\mathrm{Notears\_Topo}}
\newcommand{\topoGreedyRandom}{\mathrm{Random\_Greedy\_Topo}}
\newcommand{\MLP}{\textsf{MLP}}
\newcommand{\ttheta}{\tilde{\theta}}
\newcommand{\notears}{\textsc{Notears }}
\newcommand{\dagma}{\textsc{Dagma }}
\renewcommand{\gG}{G}
\newcommand{\opt}{\theta^{*}}
\newcommand{\ordopt}{\opt_{\pi}}
\newcommand{\Ys}{s_{\text{small}}}
\newcommand{\Yl}{s_{\text{large}}}
\newcommand{\Ordopt}{\Theta^{*}_\pi}
\def\truedag{W^{\dagger}}
\title{Optimizing NOTEARS Objectives via\\ Topological Swaps}
\author{
	Chang Deng$^\dag$ \qquad
	Kevin Bello$^{\dag\ddag}$ \qquad
	Bryon Aragam$^\dag$\qquad
	Pradeep Ravikumar$^\ddag$\\	
	$^\dag$Booth School of Business, The University of Chicago, Chicago, IL 60637\\
	$^{\ddag}$Machine Learning Department, Carnegie Mellon University, Pittsburgh, PA 15213
}
\begin{document}

\maketitle

\begin{abstract}
  Recently, an intriguing class of non-convex optimization problems has emerged in the context of learning directed acyclic graphs (DAGs). These problems involve minimizing a given loss or score function, subject to a non-convex continuous constraint that penalizes the presence of cycles in a graph. In this work, we delve into the optimization challenges associated with this class of non-convex programs. To address these challenges, we propose a bi-level algorithm that leverages the non-convex constraint in a novel way. The outer level of the algorithm optimizes over topological orders by iteratively swapping pairs of nodes within the topological order of a DAG. A key innovation of our approach is the development of an effective method for generating a set of candidate swapping pairs for each iteration. At the inner level, given a topological order, we utilize off-the-shelf solvers that can handle linear constraints. The key advantage of our proposed algorithm is that it is guaranteed to find a local minimum or a KKT point under weaker conditions compared to previous work and finds solutions with lower scores. Extensive experiments demonstrate that our method outperforms state-of-the-art approaches in terms of achieving a better score. Additionally, our method can also be used as a post-processing algorithm to significantly improve the score of other algorithms. Code implementing the proposed method is available at \href{https://github.com/Duntrain/TOPO}{{https://github.com/duntrain/topo}}.
\end{abstract}

\section{Introduction}
\label{sec:introduction}

We study a class of constrained nonconvex optimization problems defined as follows:
\begin{align}
\label{eq:general_opt}
	\min_\Theta  Q(\Theta)\;\; \subto\;\; h(W(\Theta)) = 0,
\end{align}
where $\Theta \in \sR^l$ corresponds to all model parameters, and $W(\Theta) \in \sR^{d\times d}$ is a weighted adjacency matrix ---representing the structure of a directed graph of $d$ nodes---induced by $\Theta$.
Moreover, $Q: \sR^l \to \sR$ is a (possibly non-convex) differentiable function that we will refer to as the score or loss function; while $h: \sR^{d\times d} \to [0,\infty)$ is a nonnegative \textbf{non-convex} differentiable function that penalizes cycles in the weighted adjacency matrix $W(\Theta)$, and whose level set at zero corresponds to directed acyclic graphs (DAGs).

The class of problems \eqref{eq:general_opt} arose in the paper by \citet{xun2018} in the context of learning the underlying structure of a structural equation model (SEM), typically assumed to be a DAG.
In \citet{xun2018}, the challenges of combinatorial optimization were replaced by those of differentiable non-convex optimization.
While global optimality remains intractable in general, the key advantage of the class of problems \eqref{eq:general_opt} is that it admits the use of general purpose non-linear optimizers.
Due to the latter, several studies have built upon the work of \citet{xun2018}, usually by either proposing a new characterization of $h$ \citep[e.g.,][]{yu2019dag,bello2022}, or using different score functions $Q$ \citep[e.g.,][]{xun2018, zheng2020, Ignavier2020, yu2019dag, lachapelle2019gradient}.
All, however, with a clear lack of optimality guarantees.

Based on these formulations,
\citet{dennis2020} and \citet{ng2022convergence} studied some of the optimization-theoretic curiosities associated with this class of problems.
\citet{dennis2020} provides local optimality guarantees 
assuming \textit{linear models} and a \textit{convex} score $Q$, while \citet{ng2022convergence} studies the convergence challenges of \eqref{eq:general_opt}.
The focus of our work is studying optimality for the class of problems \eqref{eq:general_opt} in a more general setting, i.e., admitting a possibly non-convex score $Q$ and nonlinear models.
We pay close attention to the Karush-Kuhn-Tucker (KKT) optimality conditions, building upon similar results first studied in \citet{dennis2020}. The KKT conditions are known to be a necessary first-order characterization of optimal solutions under some regularity conditions, and form the backbone of nonlinear programming \citep{bertsekas1997nonlinear,boyd2004convex}.

More specifically, we show that by an equivalent reformulation of the KKT conditions, we can find \emph{better} solutions to \eqref{eq:general_opt} --- that is, KKT points and/or local minima with better (i.e. lower) score --- while also relaxing the conditions required in previous work. 
The key idea is to relate the KKT conditions to an optimal topological sort and leverage the fact that solving the continuous program for a fixed ordering is often tractable.
Although not every topological sort corresponds to a local minimum in the continuous formulation, we show that our method can indeed be rigorously interpreted as iteratively selecting better and better local minimizers until no improvement can be found. 
Our method also avoids explicitly enforcing the acyclicity constraint $h$, and instead uses the continuous characterization \emph{indirectly} via the KKT conditions.

\paragraph{Contributions.}
To this end, we make the following specific contributions:

\begin{enumerate}
    \item We propose a bi-level optimization algorithm, in which the outer level optimizes over topological orders and the inner level optimizes the score given a specific order. To optimize over orders, we use a novel technique for selecting candidate pairs of nodes to be swapped, which is described in detail in Section \ref{sec:order}. This approach involves iteratively swapping pairs of nodes within the topological order of a DAG, and utilizes the KKT conditions as a guide for determining which pairs to consider swapping. To optimize the score given a specific order, we utilize state-of-the-art solvers that are able to solve the problems to stationary points.
    \item We prove that our method searches between local minima and strictly decreases the score at each iteration (Section~\ref{sec:analysis}). 
    We furthermore show that our method provably finds local minimizers under strictly weaker conditions compared to previous work (Lemma~\ref{lemma:example}). In particular, we show that the concept of `irreducibility' introduced in \citet{dennis2020} is not necessary to ensure local optimality, and provide an explicit example as demonstration (Appendix~\ref{sec:prev_work}).
    \item We conduct a comprehensive set of experiments in multiple settings to evaluate the performance of our algorithm against state-of-the-art methods for solving problem~\eqref{eq:general_opt}. The results of our experiments, summarized in Section~\ref{sec:exp},  demonstrate 
    that our method is able to find minimizers with lower scores (compared to existing algorithms) that are guaranteed to be either local minima or KKT points.
\end{enumerate}

An attractive feature of our method is its flexibility as it can be used both as a standalone algorithm and as a post-processing step when provided with a pre-computed DAG as an initialization. Although the underlying optimization problem is nonconvex and plagued by poor local minima, our results demonstrate that it is still possible to discover suitable local minima with improved scores. This is a noteworthy achievement given that nonconvex problems of this nature are often considered challenging and difficult to optimize.

\section{Related Work}
\label{sec:related_work}

Most closely related to our work are methods that build on the non-convex continuous constrained formulation of \citet{xun2018}, \citep[e.g.,][]{yu2019dag, zheng2020, lachapelle2019gradient, Ignavier2020, zhu2020causal,romain2020bregman,bello2022}. 
In contrast to this previous work, our focus is on \emph{optimality conditions}, i.e. ensuring that we find a DAG that satisfies the KKT optimality conditions (in fact, it will be a local minimizer) of an equivalent formulation to that of \citet{xun2018}. Similar to our work, recent work \citep{dennis2020,ng2022convergence} has begun to study the optimization-theoretic aspects of this problem.
In contrast to \citet{dennis2020}, which is only guaranteed to return \emph{some} local minimizer, our method iteratively jumps from one local minimizer to another until a stopping criterion is met. The latter allows our method to seek out for more favorable local minimizers, that is, DAGs that attain lower scores. \citet{ng2022convergence} studies a different question, namely the convergence of methods for solving these problems.

Although our emphasis is on optimization, it is useful to provide some context from the graphical modeling literature as well.
Most algorithms for learning DAGs fall into two main categories: score-based methods that optimize a score function, and constraint-based methods that use independence tests.
Since the program \eqref{eq:general_opt} is modeled after traditional score-based methods, we only mention a few classical constraint-based algorithms such as: the PC algorithm \citep{spirtes1991}, a general algorithm that learns the Markov equivalence class; max-min parents and children \citep[MMPC,][]{tsamardinos2006}; and a variety of algorithms based on local Markov boundary search such as grow-shrink \citep[GS,][]{margaritis1999bayesian,margaritis2003learning} and incremental association \citep[IAMB,][]{tsamardinos2003algorithms}.

Score based methods assign a score to a candidate DAG structure based on how well it fits the observed data, and then attempts to find the highest scoring structure. Classical score functions include the log-likelihood based BIC and AIC scores as well as Bayesian scores under different parameter priors \citep{geiger2002}.
Other related work that study the Gaussian setting are given by \citet{aragam2015concave, ghoshal2017learning, ghoshal18}, and in the non-Gaussian case by \citet{loh2014high}.
On the side of approximate algorithms, notable methods include greedy search \citep{chickering2002}, order search \citep{marc2005,scanagatta2015learning,park2017bayesian}, and the LP-relaxation based method proposed by \citet{jaakkola10a}. There are also exact algorithms such as GOBNILP \citep{cussens2012} and bene \citep{silander2012}.

Another line of work \citep{marc2005,xiang2013lasso,raskutti2018learning,drton2018causal,ye2020optimizing,squires2020permutation,solus2021consistency,wang2021ordering} studies order-based methods which bear a superficial relationship to our algorithm, but it is worth emphasizing that none of them theoretically analyze optimization properties such as KKT theory, local optimality guarantees or apply to \textit{arbitrary} smooth losses. 
More specifically, \citep{ye2020optimizing} is restricted to log-likelihood based scores and \citep{raskutti2018learning,squires2020permutation,solus2021consistency} require faithfulness (related) assumptions. \citep{silander2012,xiang2013lasso} are exact methods that only work with a small number of nodes.

\section{Notation and Background}
\label{sec:preliminaries}

In this section, we establish the notation and provide context for the class of problems \eqref{eq:general_opt}.

\subsection{Nonlinear DAG models}
\label{sec:nonlinear_dag_models}

We let $\gG = (V,E)$ denote a \emph{directed} graph of $d$ nodes, with vertex set $V = [d] := \{1,\ldots, d\}$ and edge set $E \subset V \times V$, where $(i,j) \in E$ indicates the presence of a directed edge from node $i$ to node $j$.
For a graph $\gG$, we associate each node $i \in V$ to a random variable $X_i$, and use $X = (X_1,\ldots,X_d)$ to denote the $d$-dimensional random vector.

We consider \emph{structural equation models} \citep[SEMs][]{peters2017elements}, in which each node $X_j$ is determined by a function $f_j:\R^d\to\R$ of its parents and independent noise $z = (z_1, \ldots, z_d)\in\R^{d}$ as follows:
\begin{align}
    \label{eq:sem}
    X_j = f_j(X,z_j), \quad
    \partial_k f_j = 0 \text{ if } k\notin\PA_j^\gG,
\end{align}
 where $\PA_j^\gG = \{ i \in V \mid  (i,j) \in E\}$ denotes the set of parents of node $j$ in $\gG$.
Note that we write $f_j$ as a function of every other variable, and separately impose a restriction on the dependence through the partial derivatives, as in \citet{zheng2020}. This is equivalent to the usual formulation $X_j = f_j(\PA_j^\gG,z_j)$, and is adopted for mathematical convenience in the sequel.  Standard examples of SEMs include linear SEMs \citep[e.g.,][]{peters2014identifiability,loh2014high} and additive noise models \citep{peters2014causal}.

With this notation, the graphical structure implied by an SCM $f=(f_1,\ldots,f_d)$ can be represented by the following $d\times d$ weighted adjacency matrix:
\begin{align}
\label{eq:adj:scm}
    W = W(f)
    = (w_{ij}),
    \quad
    w_{ij} = \NormII{\partial_i f_j}.
\end{align}
In practice, a family of functions is defined to approximate the nonlinear functions $f_j$; 
common examples include multilayer perceptrons (MLP) \citep{zheng2020,lachapelle2019gradient}, and basis expansions \citep{zheng2020,buhlmann2014cam}. 
See Appendix~\ref{app:sec:more_details_F} for a detailed discussion on these families of functions.

We use $\Theta$ to denote all the model parameters used for approximating $f$.
However, not all of these parameters are utilized for inducing the graphical structure implied by $f$. To differentiate, we use $\theta \subset \Theta$ to denote the subset of parameters that are used for inducing the weighted adjacency matrix $W$, and $\ttheta = \Theta \setminus \theta$ to denote the remaining model parameters.
In other words, we have the following relationship: $W(f) = W(\Theta) = W(\theta)$.

To simplify notation and improve the clarity of presentation, we present the case where there is a single parameter $\theta_{ij}$ \footnote{$\theta_{ij}$ can be a vector, it is required that $[W(\theta)]_{ij}=0$ if and only if $\theta_{ij} =0$, see Appendix~\ref{app:sec:more_details_F} for more discussions. } per candidate edge $(i,j)$, i.e., $[W(\theta)]_{ij} = \theta_{ij}$ and $W(\theta)=\theta$.
However, note that all of our results hold for the general case and are thoroughly treated in the technical proofs provided in Appendix~\ref{sec:proofs}.

\subsection{Score functions}
\label{sec:score_functions}

The class of programs \eqref{eq:general_opt} requires a loss/score function $Q$. 
We briefly review commonly used scores in the literature.
Let $\rmX = [\mathbf{x}_1,\cdots,\mathbf{x}_d]\in \sR^{n \times d}$ denote the observed data matrix.
Let $\Theta_i$ denote the parameters used to approximate $f_i$,
we use $f_{\Theta_i}$ to denote $f_i$ approximated by $\Theta_i$.

Since the score function depends on the observed data, in this subsection, we use $Q(\Theta; \rmX)$ to denote the score on $\Theta$ given $\rmX$.
Then, some possible score functions include:

\textbf{Least squares.} $Q(\Theta; \rmX) = \frac{1}{2n}\sum_{i=1}^d\NormII{\mathbf{x}_i - f_{\Theta_i}(\rmX)}^2$ for linear SEMs with equal noise variances \citep{loh2014high}.

\textbf{Negative log-likelihood.} $Q(\Theta;\rmX) =\frac{1}{2}\sum_{i=1}^d \log(\NormII{\mathbf{x}_i - f_{\Theta_i}(\rmX)}^2) $ for additive SEMs with Gaussian errors \citep{buhlmann2014cam}.

\textbf{Logistic loss.} Let $\mathbf{1}_n$ denote the $n$-dimensional vector of ones. Then, we have $Q(\Theta;\rmX)  = \frac{1}{n}\sum_{i=1}^d \mathbf{1}_n^\top (\log (\mathbf{1}_n+\text{exp}(f_i(\rmX)))-\mathbf{x}_i\circ f_{\Theta_i}(\rmX))$ for generalized linear models with binary variables \citep{zheng2020}.

In the sequel, we simplify notation by writing $Q(\Theta)$ instead of $Q(\Theta;\rmX)$.

\begin{remark}
It is important to emphasize that in practical applications, the choice of score $Q$ is crucial: In order for solutions to this problem to be useful, ideally the minimizer(s) of $Q$ should correspond to the true underlying DAG. This problem has been extensively studied  \citep{geiger2002,chickering2002,van2013ell_, loh2014high,nandy2018, aragam2019globally}, so we do not pursue it further here. For example, in recent work, \citet{reisach2021beware} show how certain scores are not scale invariant, which may be an issue in practice, but is simply an artifact of the score function, as originally pointed out by \citet{loh2014high}. By contrast, our explicit goal is to study the optimization-theoretic aspects of objectives \eqref{eq:general_opt}, and not to propose new algorithms for learning causal DAGs.
\end{remark}

\subsection{Continuous non-convex characterizations of DAGs}
\label{sec:gradient_based_review}

To conclude this section, we next provide a brief overview of the existing options for the function $h$.
We remind the reader that for presentation simplicity we have $W(\theta) = \theta$, as discussed at the end of Section \ref{sec:nonlinear_dag_models}.
\begin{condition}\label{cond:h}
The function $h$ has the following form:
\[h(B) = \sum_{i=1}^dc_i\Tr( B^i),\]
where $c_i>0$ for any $i$.
\end{condition}
\begin{corollary}[\citealp{dennis2020} Theorem 1]\label{cor:h=0}
    If $h$ satisfies Condition \ref{cond:h}, then we have that $h(B)=0$ if and only if $B$ corresponds to a DAG, for any nonnegative matrix $B$.
\end{corollary}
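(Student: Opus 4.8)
The goal is to show that for a nonnegative matrix $B \in \sR^{d\times d}$, the condition $h(B) = \sum_{i=1}^d c_i \Tr(B^i) = 0$ (with all $c_i > 0$) holds if and only if $B$ corresponds to a DAG, i.e., the directed graph whose edge set is $\{(j,k) : B_{jk} > 0\}$ is acyclic.

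My plan is to connect the trace of matrix powers to closed walks in the associated directed graph. The key observation is that for a nonnegative matrix $B$, the entry $(B^i)_{jk}$ equals a sum of products of entries of $B$ over all length-$i$ directed walks from $j$ to $k$; since all entries are nonnegative, $(B^i)_{jk} > 0$ if and only if there exists at least one length-$i$ directed walk from $j$ to $k$ in the graph encoded by the support of $B$. Consequently, $\Tr(B^i) = \sum_{j=1}^d (B^i)_{jj} \geq 0$, and $\Tr(B^i) > 0$ if and only if the graph has a closed directed walk of length exactly $i$.

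The main steps are then: (1) Establish the walk-counting interpretation of $(B^i)_{jk}$ for nonnegative $B$, so that positivity of matrix power entries is equivalent to walk existence. (2) For the "if" direction: if $B$ is a DAG, no directed cycles exist, hence no closed walks of any positive length exist (a closed walk of positive length would contain a cycle), so $\Tr(B^i) = 0$ for all $i = 1, \ldots, d$, giving $h(B) = 0$. (3) For the "only if" direction: suppose $B$ is not a DAG, so its graph contains a directed cycle; such a cycle on a simple digraph with $d$ nodes has length at most $d$, say length $\ell \in \{1, \ldots, d\}$, which yields a closed walk of length $\ell$, hence $\Tr(B^\ell) > 0$, and since $c_\ell > 0$ and every term $c_i \Tr(B^i) \geq 0$, we get $h(B) \geq c_\ell \Tr(B^\ell) > 0$. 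The contrapositive gives the claim.

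The one technical point requiring a little care — and the only real obstacle — is the bound that any directed cycle can be reduced to a \emph{simple} directed cycle of length at most $d$, so that the relevant closed walk length falls within the range $\{1, \ldots, d\}$ covered by the finite sum defining $h$. This follows from a standard argument: given any closed walk, if it repeats a vertex internally one can excise the sub-walk between the two occurrences, and iterate until a simple cycle remains, whose length is at most $d$ since it visits distinct vertices. One must also handle self-loops ($\ell = 1$) as a degenerate case, but the walk-counting identity covers this uniformly since $(B^1)_{jj} = B_{jj}$. Everything else is routine, and since all $c_i$ are strictly positive and all traces of powers of a nonnegative matrix are nonnegative, there is no cancellation to worry about.
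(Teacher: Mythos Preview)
Your proof is correct and is the standard combinatorial argument for this fact. The paper itself does not give a proof of this corollary; it simply cites it as Theorem~1 of \citet{dennis2020}, so there is no ``paper's own proof'' to compare against. Your walk-counting argument --- that $(B^i)_{jj}$ is a nonnegative sum over closed walks of length $i$ starting and ending at $j$, that a DAG has no such closed walks, and that a non-DAG contains a simple cycle of length at most $d$ so some $\Tr(B^\ell)>0$ for $\ell\in\{1,\dots,d\}$ --- is exactly the reasoning underlying the cited result, and your handling of the only subtle point (reducing an arbitrary closed walk to a simple cycle of length $\le d$ so that it falls within the finite sum) is correct.
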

By now the literature contains many different proposals of functions $h$ that satisfy Condition \ref{cond:h}; in this paper, we mostly focus on the following three:

\begin{enumerate}
    \item \textbf{The \notears formulation.} \citet{xun2018} were the first to propose a differentiable characterization of DAGs given by $h(B) = \Tr(e^{B}) - d$ for a nonnegative  matrix $B$.    
    \item \textbf{A polynomial formulation.} \citet{yu2019dag} proposed the use of $h(B) = \Tr((I+\nicefrac{1}{d}\ B)^d) - d$ for a nonnegative matrix $B$. \item \textbf{The \dagma formulation.} \citet{bello2022} proposed the use of $h(B) = -\log\det(I-B)$ for a nonnegative  matrix $B$ with spectral radius less than one. 
\end{enumerate}

Note that $B$ above is commonly defined as $B = \theta \circ \theta$, where $\circ$ denotes the Hadamard product.
In that case, it has been shown that $\nabla_{\theta} h(\theta\circ \theta) = 0$ if and only if $\theta$ is a DAG \citep[see][]{dennis2020}.
The latter implies that all stationary points of $h$ are global minima of $h$, a property known as invexity, as highlighted by \citet{bello2022}. 

\begin{remark}
    Our results are general and apply to any function  $h$ satisfying Condition \ref{cond:h}. 
    Thus, our results apply to any of the three $h$ functions mentioned above. 
\end{remark}

\subsection{Necessary and sufficient conditions for optimality}
\label{sec:nec_suff_cond}

\citet{dennis2020} first studied \eqref{eq:general_opt} from an optimality perspective.
The authors argued that the use of the Hadamard product $\theta\circ \theta$ leads to an undesirable property, namely, any feasible $\Theta$ in \eqref{eq:general_opt} cannot satisfy regularity conditions.
Motivated by this negative result, \citet{dennis2020} proposed an alternative, yet equivalent, formulation by replacing $h(\theta \circ \theta)$ by $h(\Abs{\theta})$. Reasoning similarly, we reformulate \eqref{eq:general_opt} as
\begin{equation}
\label{eq:abs_notears}
    \min_{\Theta} \; Q(\Theta) \quad \subjto \quad h(\Abs{\theta}) \leq 0.
\end{equation}
By writing $\theta = \theta^+ - \theta^-$, where $\theta^+= \max\{\theta,0\}$ and $\theta^-=\max\{-\theta,0\}$ denote the positive and negative parts of $\theta$, respectively.
Then, an equivalent smooth formulation is given by
\begin{align}
\label{eq:eq_sm_notear}
    \min_{\theta^+,\theta^-,\ttheta} &\; Q((\theta^+ - \theta^-,\ttheta)) \\
    \subjto &\; h(\theta^++\theta^-) = 0,\text{ and }\ \theta^+, \theta^- \geq 0. \notag
\end{align}
For clarity, we remind the reader that in \eqref{eq:eq_sm_notear} we have $\Theta = (\theta^+, \theta^-, \ttheta)$.
Then, the KKT conditions for \eqref{eq:eq_sm_notear} can be succinctly written as follows:
\begin{subequations}
\label{eq:kkt_con}
\begin{align}
     \frac{\partial Q(\Theta)}{\partial \theta_{ij}^{+}} + \lambda \frac{\partial h( \theta^++\theta^-)}{\partial\theta_{ij}^{+}} &= M_{ij}^{+} \geq 0, \label{eq:kkt_con a}\\
     -\frac{\partial Q(\Theta)}{\partial \theta_{ij}^{-}} + \lambda \frac{\partial h( \theta^++\theta^-)}{\partial\theta_{ij}^{-}} &= M_{ij}^{-} \geq 0, \label{eq:kkt_con aa}\\
    \theta_{ij}^{+}\circ M_{ij}^{+}= \theta_{ij}^{-}\circ M_{ij}^{-} &= 0, \label{eq:kkt_con b}\\
    \frac{\partial Q(\Theta)}{\partial \tilde{\theta}} &= 0 \label{eq:kkt_con c},
\end{align}
\end{subequations}
in addition to the feasibility conditions in \eqref{eq:eq_sm_notear}. where $M^{\pm}$ and $\lambda$ are the Lagrange multipliers of the constraints on $\theta^{\pm}$ and $h$, respectively. Here $\lambda \in \mathbb{R},M^{\pm}\geq 0$.

Briefly, \eqref{eq:kkt_con a}, \eqref{eq:kkt_con aa} and \eqref{eq:kkt_con c} results from dual feasibility and the stationarity condition, while \eqref{eq:kkt_con b} stems from complementary slackness.

The following useful theorem from \citet{dennis2020} establishes the connection between KKT satisfiability in \eqref{eq:eq_sm_notear} and local minimality in \eqref{eq:abs_notears} for \emph{linear SEMs} (i.e., $\ttheta = \emptyset$). 
\begin{theorem}[\citealp{dennis2020}, Theorem~7]
\label{theorem:kkt_local}
Assume that $Q$ is convex, $h$ satisfies the Condition \ref{cond:h}, and $\ttheta = \emptyset$.
If $(\theta^+,\theta^-)$ satisfies the KKT conditions in \eqref{eq:kkt_con}, 
then $\theta^+ - \theta^-$ is a local minimum of \eqref{eq:abs_notears}.
\end{theorem}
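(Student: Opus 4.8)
The plan is to show that any $\theta = \theta^+ - \theta^-$ satisfying the KKT system \eqref{eq:kkt_con} is a local minimizer of \eqref{eq:abs_notears} by decomposing the space of feasible perturbations according to which edges are ``active'' (i.e., present in the support of $\theta$) versus inactive, and checking that neither type of perturbation can decrease $Q$. First I would observe that since $h$ satisfies Condition~\ref{cond:h}, the constraint $h(\Abs{\theta}) \le 0$ is equivalent to $h(\Abs{\theta}) = 0$, which by Corollary~\ref{cor:h=0} means $\Abs{\theta}$ is a DAG; hence the support of $\theta$ is a DAG, and perturbations $\theta + \delta$ that remain feasible for small $\delta$ are exactly those that either stay within the same DAG skeleton or, by continuity, can only \emph{remove} edges (a zero entry can become nonzero only if doing so does not create a cycle, but then it remains a DAG too — care is needed here). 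The key structural fact I want is: if $(i,j)$ is in the support, then $\theta_{ij}^+ > 0$ or $\theta_{ij}^- > 0$, so by complementary slackness \eqref{eq:kkt_con b} the corresponding multiplier $M_{ij}^\pm = 0$, and then \eqref{eq:kkt_con a}--\eqref{eq:kkt_con aa} give $\partial Q / \partial \theta_{ij} + \lambda\, \partial h / \partial \theta_{ij} = 0$ with a sign. For edges \emph{not} in the support, $\partial h(\Abs{\theta})/\partial\theta_{ij}^\pm = 0$ (a classical fact: the gradient of $h$ vanishes on entries outside the support of a DAG matrix — this is essentially the invexity observation quoted after the \dagma\ formulation), so the KKT inequalities reduce to $\partial Q/\partial\theta_{ij}^+ \ge 0$ and $-\partial Q/\partial\theta_{ij}^- \ge 0$.

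Next I would split a feasible perturbation direction $v$ into its restriction $v_S$ to the support $S$ of $\theta$ and its restriction $v_{S^c}$ to the complement. Along $v_{S^c}$, feasibility forces each new coordinate to move in the direction that keeps $\theta^+,\theta^- \ge 0$, i.e., we are activating $\theta_{ij}^+$ in the $+$ direction or $\theta_{ij}^-$ in the $-$ direction; the first-order change in $Q$ is then $\sum \partial Q/\partial\theta_{ij}^+ \cdot (\text{positive}) + \sum (-\partial Q/\partial\theta_{ij}^-)\cdot(\text{positive}) \ge 0$ by the reduced KKT inequalities above — so moving out of the support cannot give first-order decrease. Along $v_S$ (a perturbation supported on the current DAG skeleton, which for small magnitude stays feasible since a DAG skeleton remains a DAG under perturbation of weights), the first-order change is $\sum_{(i,j)\in S} \partial Q/\partial\theta_{ij}\cdot (v_S)_{ij}$. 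Here I would use that for $(i,j) \in S$, $\partial Q/\partial\theta_{ij} = -\lambda\,\partial h(\Abs{\theta})/\partial\theta_{ij}$; but the perturbed point $\theta + t v_S$ must satisfy $h(\Abs{\theta + tv_S}) = 0$, so by first-order feasibility $\sum_{(i,j)} \partial h/\partial\theta_{ij}\cdot (v_S)_{ij} = 0$, and therefore $\sum \partial Q/\partial\theta_{ij}(v_S)_{ij} = -\lambda \cdot 0 = 0$. Thus $Q$ has vanishing directional derivative along tangent directions to the feasible set and nonnegative directional derivative along directions leaving the support. Convexity of $Q$ then upgrades the first-order information to a genuine local (indeed, local-to-the-feasible-set) minimum: for any nearby feasible $\theta'$, writing $\theta' - \theta = v_S + v_{S^c}$, convexity gives $Q(\theta') \ge Q(\theta) + \langle \nabla Q(\theta), \theta' - \theta\rangle$, and the inner product is $\ge 0$ by the two cases just analyzed.

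The main obstacle I anticipate is the careful handling of the feasible directions, in two respects. First, the split $\theta' - \theta = v_S + v_{S^c}$ interacts with the acyclicity constraint in a subtle way: activating a new edge in $S^c$ could in principle be feasible (if it doesn't close a cycle), and then the relevant directional derivative of $h$ at $\theta$ along that direction is zero but the \emph{second-order} behavior matters for staying feasible — I would need to argue that this does not spoil the first-order argument, presumably because we only need $Q(\theta') \ge Q(\theta)$ via convexity and the gradient inequality, which only uses the sign of $\langle \nabla Q(\theta), \theta'-\theta\rangle$, not exact feasibility of a linear path. Second, and relatedly, I must be careful that the set of entries where $\partial h/\partial\theta_{ij} = 0$ is exactly $S^c$ (plus possibly some entries of $S$ — but on $S$ we don't need this), and that complementary slackness is being applied with the right sign bookkeeping between $\theta^+$ and $\theta^-$; conflating the two could produce a spurious sign error. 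Cleanly separating ``tangent to the constraint surface'' directions (handled by the $\lambda$-multiplier identity plus feasibility of $\theta'$) from ``edge-activating'' directions (handled by the $M^\pm \ge 0$ dual-feasibility inequalities) is the crux, and I expect the bulk of the proof's length to go into making that dichotomy rigorous.
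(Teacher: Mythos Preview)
Your decomposition is based on the wrong set. You split perturbations according to the support $S$ of $\theta$ and assert that $[\nabla h(|\theta|)]_{ij} = 0$ for all $(i,j) \notin S$; this is false. For any $h$ satisfying Condition~\ref{cond:h}, one has $[\nabla h(|\theta|)]_{ij} > 0$ precisely when there is a directed walk from $j$ to $i$ in the DAG encoded by $|\theta|$. So for an entry $(i,j)$ with $\theta_{ij} = 0$ but with a path $j \to \cdots \to i$ present, the gradient entry is strictly positive, and your reduction of the KKT inequalities on $S^c$ to ``$\partial Q/\partial \theta_{ij}^+ \ge 0$, $-\partial Q/\partial \theta_{ij}^- \ge 0$'' breaks down exactly there. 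The invexity remark you invoke concerns $\nabla_\theta h(\theta \circ \theta) = 2\theta \circ \nabla h(\theta \circ \theta)$, where the Hadamard product with $\theta$ is what annihilates the off-support entries; it says nothing about the entries of $\nabla h$ itself off the support.

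The paper's proof (Lemma~\ref{lemma:hard_to_name} together with the proof of Theorem~\ref{theorem:local_point}) instead decomposes according to $\mathcal{P} := \{(i,j) : [\nabla h(|\theta^*|)]_{ij} > 0\}$. On $\mathcal{P}^c$ the KKT conditions \eqref{eq:kkt_con a}--\eqref{eq:kkt_con aa} with $[\nabla h]_{ij}=0$ force $\partial Q/\partial \theta_{ij} = 0$ outright (the two inequalities combine to an equality, so no one-sided sign-chasing is needed), which already shows $\theta^*$ solves the edge-absence problem \eqref{eq:edge_absence} with $\mathcal{Z}=\mathcal{P}$. The step you are genuinely missing is the treatment of $\mathcal{P}$: by continuity of $\nabla h$, any feasible $\theta'$ sufficiently close to $\theta^*$ still has $[\nabla h(|\theta'|)]_{ij} > 0$ on $\mathcal{P}$; but since $|\theta'|$ is a DAG, this forces $\theta'_{ij} = 0$ for $(i,j)\in\mathcal{P}$ (a nonzero entry there would give an edge $i \to j$ coexisting with a path $j \to i$, closing a cycle). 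Hence every nearby feasible $\theta'$ is automatically feasible for the edge-absence problem with $\mathcal{Z}=\mathcal{P}$, for which $\theta^*$ is optimal by convexity. Your ``tangent to the constraint surface'' computation on $S$ is both unnecessary (since in fact $[\nabla h]_{ij}=0$ on $S \subset \mathcal{P}^c$, giving $\partial Q/\partial\theta_{ij}=0$ directly) and does not substitute for this continuity argument, which is what actually rules out perturbations in the cycle-creating directions $\mathcal{P}$.
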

A key ingredient of our developments in the sequel is the following alternative characterization of the KKT conditions, which turns out to provide an algorithmically amenable first-order sufficient condition for \emph{local} optimality. We include a proof in Appendix~\ref{sec:proofs}.

\begin{lemma}
\label{lemma:characterization_kkt}
If $\Theta = (\theta^+,\theta^-,\ttheta)$ satisfies the following conditions:
\begin{enumerate}[(i)]
    \item For $\left\{(i,j)\mid[\nabla h(\theta^++\theta^-)]_{ij}>0\right\} \Rightarrow \theta_{ij}^{\pm}=0$.
    \item For $\left\{(i,j)\mid[\nabla h(\theta^++\theta^-)]_{ij}=0\right\} \Rightarrow  \frac{\partial Q(\Theta)}{\partial \theta_{ij}^{\pm}} = 0$.
    \item $\frac{\partial Q(\Theta)}{\partial \tilde{\theta}} = 0$.
    \item $\theta^+\geq 0, \theta^-\geq 0$.
\end{enumerate}
Then, we have that $\Theta$ is a KKT point of \eqref{eq:eq_sm_notear}.
Moreover, if the score $Q$ is convex, any such $\Theta = (\theta^+ - \theta^-,\ttheta)$ is also a local minimum for problem \eqref{eq:abs_notears}.
\end{lemma}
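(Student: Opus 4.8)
The plan is to verify the KKT conditions \eqref{eq:kkt_con} directly by exhibiting the multipliers, and then invoke Theorem~\ref{theorem:kkt_local} for the local-minimality claim. First I would set $\lambda = 0$. With this choice, \eqref{eq:kkt_con a} and \eqref{eq:kkt_con aa} reduce to $M_{ij}^+ = \partial Q(\Theta)/\partial\theta_{ij}^+$ and $M_{ij}^- = -\partial Q(\Theta)/\partial\theta_{ij}^-$, and the task becomes showing these are nonnegative and that complementary slackness \eqref{eq:kkt_con b} holds. I would split on the sign of $[\nabla h(\theta^++\theta^-)]_{ij}$ (which is always $\geq 0$ since $h$ satisfies Condition~\ref{cond:h} and $\theta^++\theta^-\geq 0$, so $\nabla h$ has nonnegative entries): on the set where it is strictly positive, hypothesis (i) gives $\theta_{ij}^\pm = 0$, so \eqref{eq:kkt_con b} holds trivially regardless of the value of $M_{ij}^\pm$; on the set where it is zero, hypothesis (ii) gives $\partial Q/\partial\theta_{ij}^\pm = 0$, hence $M_{ij}^\pm = 0 \geq 0$, and again complementary slackness holds. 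Then \eqref{eq:kkt_con c} is exactly hypothesis (iii), the sign constraints $M^\pm \geq 0$ are covered as just argued, the primal feasibility $\theta^+,\theta^-\geq 0$ is hypothesis (iv), and the remaining primal feasibility $h(\theta^++\theta^-)=0$ follows because the support of $\theta^++\theta^-$ is contained in the zero set of $\nabla h$ by (i), which by the invexity property noted after Corollary~\ref{cor:h=0} (all stationary points of $h$ are global minima, i.e.\ $h=0$) forces $h(\theta^++\theta^-)=0$. That establishes that $\Theta$ is a KKT point of \eqref{eq:eq_sm_notear}.

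For the second assertion, I would first note that the KKT conditions of the smooth reformulation \eqref{eq:eq_sm_notear} coincide with the KKT conditions \eqref{eq:kkt_con} referenced in Theorem~\ref{theorem:kkt_local}, so when $Q$ is convex and $\ttheta=\emptyset$ the theorem directly yields that $\theta^+-\theta^-$ is a local minimum of \eqref{eq:abs_notears}. For the general case with $\ttheta\neq\emptyset$, one additionally uses hypothesis (iii) together with convexity of $Q$ in $\ttheta$ (for fixed $\theta$) to argue that the optimal $\ttheta$ is pinned down, reducing to the $\ttheta=\emptyset$ argument; alternatively, one re-runs the local-minimality argument of \citet{dennis2020} verbatim, since the only place $\ttheta$ enters is through the unconstrained stationarity \eqref{eq:kkt_con c}, and convexity makes that stationarity sufficient for a local (indeed global in $\ttheta$) minimum along the $\ttheta$ directions.

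I expect the main obstacle to be the primal-feasibility step $h(\theta^++\theta^-)=0$: one must be careful that hypothesis (i) only constrains $\theta_{ij}^\pm$ on the \emph{strictly positive} part of $\nabla h$, so a priori the support of $\theta^++\theta^-$ could still intersect the zero set of $\nabla h$ without $\theta^++\theta^-$ itself being acyclic. The resolution is the invexity of $B\mapsto h(B)$ under the entrywise-absolute-value (equivalently Hadamard-square) parametrization, recorded after Corollary~\ref{cor:h=0}: since $[\nabla h(\theta^++\theta^-)]_{ij}=0$ wherever $(\theta^++\theta^-)_{ij}>0$ and $=0$ trivially wherever $(\theta^++\theta^-)_{ij}=0$, the matrix $\theta^++\theta^-$ is a stationary point of $h$, hence a global minimizer, hence $h(\theta^++\theta^-)=0$, i.e.\ it is a DAG. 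A secondary subtlety is making sure the choice $\lambda=0$ is consistent — it is, precisely because (ii) kills the gradient of $Q$ exactly on the support where the $h$-gradient vanishes, so no nonzero multiplier on $h$ is needed.
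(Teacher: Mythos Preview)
Your choice $\lambda=0$ is the flaw. On the index set where $[\nabla h(\theta^++\theta^-)]_{ij}>0$, hypothesis (i) gives $\theta_{ij}^\pm=0$, which does make complementary slackness \eqref{eq:kkt_con b} automatic --- but it does nothing for \emph{dual feasibility}. With $\lambda=0$, equations \eqref{eq:kkt_con a}--\eqref{eq:kkt_con aa} force $M_{ij}^+=\partial Q/\partial\theta_{ij}^+$ and $M_{ij}^-=-\,\partial Q/\partial\theta_{ij}^-$, and none of (i)--(iv) constrains the sign of $\partial Q/\partial\theta_{ij}^\pm$ on that set. These derivatives can perfectly well be of either sign (the hypotheses are silent there), so $M_{ij}^\pm\geq 0$ need not hold. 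Your closing remark that ``(ii) kills the gradient of $Q$ exactly on the support where the $h$-gradient vanishes'' is correct but addresses the wrong set: the trouble is on the \emph{complement}, where the $h$-gradient is strictly positive and (ii) says nothing.

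The paper's proof handles this by taking $\lambda$ large and positive, specifically
\[
\lambda \;>\; \max_{(i,j):\,[\nabla h(|\theta|)]_{ij}>0}\ \frac{\bigl\lVert \partial Q(\Theta)/\partial\theta_{ij}^\pm\bigr\rVert_1}{[\nabla h(|\theta|)]_{ij}},
\]
so that on the set where $[\nabla h]_{ij}>0$ the term $\lambda[\nabla h]_{ij}$ dominates both $\pm\,\partial Q/\partial\theta_{ij}^\pm$ and yields $M_{ij}^\pm>0$; then (i) gives $\theta_{ij}^\pm=0$ and complementary slackness follows. On the zero set of $\nabla h$ your argument via (ii) matches the paper exactly, as do the remaining steps: hypothesis (iii) for \eqref{eq:kkt_con c}, primal feasibility via $(\theta^++\theta^-)\circ\nabla h(\theta^++\theta^-)=0\Rightarrow h=0$, and the appeal to Theorem~\ref{theorem:kkt_local} (or its extension with $\tilde\theta$) for the local-minimality claim.
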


\begin{remark}
\label{rem:nonconvex}
If the score $Q$ is smooth but non-convex, then we can no longer use Lemma~\ref{lemma:characterization_kkt} to automatically promote KKT points to local minima.
Thus, in the sequel, whenever the score $Q$ is non-convex, all claims about local minima must be demoted to KKT points. 
\end{remark}

\section{Optimization Algorithm: Topological Swaps}
\label{sec:order}
Our key idea is to solve \eqref{eq:abs_notears} as a two-staged problem: in the inner stage, we solve \eqref{eq:abs_notears} to an additional constraint that makes the problem tractable, and in the outer stage, we search over the set of constraints. The critical innovation is in using our reformulation of the KKT conditions in guiding this search.
Our specific set of constraints relies on imposing an ordering over the variables. We briefly review such order constrained optimization below before formally introducing our overall approach.

\subsection{Background: Order-constrained optimization}
\label{sec:order_opt}

We leverage the following well-known observation: For a \emph{fixed} topological sort, problem \eqref{eq:abs_notears}, or equivalently \eqref{eq:eq_sm_notear}, can often be solved efficiently. 
We briefly review this material here for completeness.

Recall that a topological sort (or order) for $\gG$ is a partial ordering $\prec$ on the vertex set $V = [d]$ such that $X_i\to X_j\implies i\prec j$, here $X_i\rightarrow X_j$ means there exists an edge from $i$ to $j$. A directed graph is acyclic if and only if it has a topological sort, although this sort may not be unique.  Equivalently, we can view a topological sort as a permutation on $V$.
\begin{definition}[Topological sort]
A topological sort $\prec$ defines a permutation $\pi$ of the vertex set $V$ for $\gG$ by letting $\pi(j)$ be the $j$-th node in the ordering defined by $\prec$. In other words, if $X_{\pi(i)}\to X_{\pi(j)}$, then $i<j$.
\end{definition}
A similar definition carries over in the obvious way for weighted adjacency matrices $\theta$.
We furthermore call $G$ (resp. $\theta$) \emph{consistent} with $\pi$ if $\pi$ is a topological sort of $G$ (resp. $\theta$), and write this as $G\sim\pi$ (resp. $\theta\sim\pi$). 

Given a permutation $\pi$, we then have the following order-constrained optimization problem:
\begin{equation}
\label{eq:order_opt}
    \min_{\theta\sim\pi} \; Q(\Theta).
\end{equation}
Due to the order consistency constraint $\theta\sim\pi$, the acyclicity constraint $h(|\theta|)\le 0$ is automatically satisfied and hence can be omitted from \eqref{eq:order_opt}. 

We next reformulate \eqref{eq:order_opt} with explicit linear constraints. Moreover, in the sequel, we use $\Ordopt$ to denote any solution to this problem: 
\begin{align}\label{eq:order_opt_sol}
  \Theta_{\pi}^* = (\ordopt,\Tilde{\theta}_{\pi}^*) \in 
    \argmin_{\Theta} \; &Q(\Theta) \\
    \subto\; &\theta_{\pi(i),\pi(j)}=0,\; \forall j < i. \notag
\end{align}
\begin{remark}\label{remark:gd2stapoint}
    Our results only require solving \eqref{eq:order_opt_sol} up to stationarity.
    That is, we can first set $\theta_{\pi(i),\pi(j)} = 0$, for all $j<i$, and then use any off-the-shelf first-order optimizer \citep{boyd2004convex,nesterov2018lectures}  for the resulting (non)convex unconstrained problem.
\end{remark}

\subsection{Algorithm}

Motivated by the observations above, we propose a general bi-level algorithm based on finding the topological sort $\pi$ of an optimal scoring DAG.

For any $\Theta$ and $\tau,\xi>0$, define a set
\begin{align}
\label{eq:cand_swaps}
   \Y(\Theta,\tau,\xi) \defeq  \left\{ (i,j) \mid \left[\nabla h\left(|\theta|)\right)\right]_{ij}\leq \tau,
   \left\|\frac{\partial Q(\Theta)}{\partial \theta_{ij}}\right\|_1 > \xi \right\}. 
\end{align}
Given this machinery, the four main steps of our approach (Algorithm \ref{alg:pseudoalgo1}) are as follows:
\begin{enumerate}
    \item Initialize at an arbitrary sort $\pi$, and solve \eqref{eq:order_opt}.
    \item Define a candidate set of possible swaps by $\Y(\Ordopt,\tau_*,\xi^*)$ as defined in \eqref{eq:cand_swaps}, where $(\tau_*,\xi^*)$ are parameters chosen adaptively such that $|\Y(\Ordopt,\tau_*,\xi^*)|\approx \Ys$.
    \item Choose the best swap from this set to obtain a new topological sort; i.e., the swap that decreases the score $Q$ the most. 
    \item Repeat until there is no sufficient improvement in the score.
\end{enumerate}
There are several advantages to this approach:
\begin{itemize}
    \item Enforcing acyclicity is much simpler: Once a topological sort is fixed, acyclicity is automatically guaranteed and the optimization is straightforward and efficient (cf. Section~\ref{sec:order_opt}). Thus, there is no need to include $h(\Abs{\theta})$ directly in the optimization routines compared to \citet{xun2018}, which greatly simplifies implementation.
    \item We will only need to check (ii), (iii), and (iv) in Lemma \ref{lemma:characterization_kkt} in order to ensure the KKT conditions are satisfied, and computing the gradients $\nabla Q$, $\nabla h$ is easy. 
    Note that Condition (i) is to ensure $|\theta|$ is acyclic, which is always satisfied by the argument in the above item.
\end{itemize}
It is worth stressing that this is \emph{not the same} as greedily selecting individual edges as in GES \citep{chickering2002}: Each swap re-solves \eqref{eq:order_opt} \emph{globally}, and hence updates every edge.

Crucially, in the second step, it is not necessary to exhaustively check all possible swaps: By properly exploiting the KKT conditions as in Lemma~\ref{lemma:characterization_kkt}, we are able to limit the set of possible candidate swaps to $\Y(\Ordopt,\tau_*,\xi^*)$. This greatly improves the efficiency of the algorithm. Moreover, it is not necessary to find the swap that decreases the score the most in Algorithm \ref{alg:pseudoalgo1} line 9. Instead, any swap that decreases $Q$ could be used to accelerate our algorithm. This greedy strategy, which is explored in the appendices, can improve time efficiency while attaining comparable performances.

The main steps of our method are summarized in Algorithm~\ref{alg:pseudoalgo1}; a more comprehensive outline (for reproducibility purposes) can be found in the Appendix~\ref{app:alg} (Algorithm~\ref{alg:algo1}). 
The subroutine \textsc{FindParams} (detailed in Algorithm \ref{alg:update_parameter} in Appendix~\ref{app:alg}) aims to find appropriate values for $\tau$ and $\xi$ such that $|\mathcal{Y}(\Theta,\tau,\xi)| \approx s$.
In Algorithm \ref{alg:pseudoalgo1}, the notation $\Ys$ and $\Yl$ are used to denote small and large search spaces, respectively.

\begin{remark}
It is worth noting how the continuous formulation plays a critical role in Algorithm~\ref{alg:pseudoalgo1}: We use both the KKT conditions and the function $h$ in order to select candidate swaps (cf. \eqref{eq:cand_swaps}).
\end{remark}

\begin{algorithm}[!th]
\caption{\textsc{Topo}}
\label{alg:pseudoalgo1}
\begin{algorithmic}[1]
\REQUIRE Initial topological sort $\pi$, integers $\Ys$ and $\Yl$ with $\Yl>\Ys$, and score function $Q$.
\STATE \COMMENT{Here we use $\pi_{ij}$ to denote the new topological sort by swapping nodes $i$ and $j$ in $\pi$.}
\STATE $(\tau_*,\xi^*) \gets \textsc{FindParams}(\ordopt,\Ys)$ 
\STATE $\gS \gets \Y(\Ordopt,\tau_*,\xi^*)$
\WHILE{$\gS \neq \emptyset$}
\IF{$\exists (i,j) \in \gS$ s.t. $Q(\Theta^*_{\pi_{ij}}) < Q(\Ordopt)$} 
\STATE Update $\pi$ to be $\pi_{ij}$ that (most) decreases $Q$.
\STATE $\gS \gets \Y(\Ordopt,\tau_*,\xi^*)$ 
\ELSE 
\STATE $(\tau^*,\xi_*) \gets \textsc{FindParams}(\ordopt,\Yl)$ 
\STATE $\gS \gets \Y(\Ordopt,\tau^*,\xi_*)$ \hfill \COMMENT{Try a larger search space}
\IF{$\exists (i,j) \in \gS$ s.t. $Q(\Theta^*_{\pi_{ij}}) < Q(\Ordopt)$}
\STATE Update $\pi$ to be $\pi_{ij}$ that (most) decreases $Q$.
\STATE $\gS \gets \Y(\Ordopt,\tau_*,\xi^*)$  
\ELSE
\STATE $\gS \gets \emptyset$
\ENDIF
\ENDIF
\ENDWHILE
\ENSURE $\Ordopt$
\end{algorithmic}
\end{algorithm}

\subsection{Analysis}
\label{sec:analysis}

Intuitively, the idea behind Algorithm~\ref{alg:pseudoalgo1} is that it iteratively jumps between better and better local minimizers, until the candidate swaps given by \eqref{eq:cand_swaps} no longer offer any significant improvement in the score. This is achieved by exploiting the KKT conditions \eqref{eq:kkt_con}. 
In this section, we show that this is not just a heuristic: Under appropriate conditions, Algorithm~\ref{alg:pseudoalgo1} indeed decreases the score and always terminates at a local minimum or KKT point.

Before proving this, it is worth stressing why this is not obvious \emph{a priori}: Even if we solve \eqref{eq:order_opt} to global optimality (i.e., given the order constraint $\theta\sim\pi$), a global solution to \eqref{eq:order_opt} need not be a \emph{local} solution to \eqref{eq:abs_notears}. This stems from the fact that a DAG can have more than one topological sort, and the solutions to \eqref{eq:order_opt} for each sort need not coincide.

We begin with two important lemmas.

\begin{lemma}
\label{lemma:zero_entry}
If $(i,j)\in \mathcal{Y}(\Ordopt,0,0)$, then $\left(\ordopt\right)_{ij}=0$.
\end{lemma}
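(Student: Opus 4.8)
The statement is: if $(i,j)\in\mathcal{Y}(\Theta^*_\pi,0,0)$, then $(\theta^*_\pi)_{ij}=0$. Unpacking the definition of $\mathcal{Y}$ in \eqref{eq:cand_swaps} with $\tau=\xi=0$, the hypothesis says precisely that $[\nabla h(|\theta^*_\pi|)]_{ij}\le 0$ and $\|\partial Q(\Theta^*_\pi)/\partial\theta_{ij}\|_1>0$. Since $h$ satisfies Condition~\ref{cond:h}, it is a nonnegative polynomial with nonnegative coefficients in the entries of a nonnegative matrix, so $\nabla h\ge 0$ entrywise; hence $[\nabla h(|\theta^*_\pi|)]_{ij}\le 0$ forces $[\nabla h(|\theta^*_\pi|)]_{ij}=0$. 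So the real content is: if the gradient of $h$ vanishes at the $(i,j)$ entry but the gradient of $Q$ does not, then the optimal ordered solution must have $(\theta^*_\pi)_{ij}=0$.

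The plan is to argue by contradiction using the optimality of $\Theta^*_\pi$ for the order-constrained problem \eqref{eq:order_opt_sol}. Suppose $(\theta^*_\pi)_{ij}\neq 0$. First I would observe that the constraint set in \eqref{eq:order_opt_sol} pins certain entries to zero (those with $j<i$ in the permuted indexing) but leaves the remaining entries \emph{free}; since $(\theta^*_\pi)_{ij}\neq 0$, the entry $\theta_{ij}$ must be one of the free (unconstrained) coordinates. Because $\Theta^*_\pi$ is (at least) a stationary point of \eqref{eq:order_opt_sol}, the partial derivative of the objective $Q$ with respect to every free coordinate must vanish; in particular $\partial Q(\Theta^*_\pi)/\partial\theta_{ij}=0$, so $\|\partial Q(\Theta^*_\pi)/\partial\theta_{ij}\|_1=0$. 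This directly contradicts the hypothesis $\|\partial Q(\Theta^*_\pi)/\partial\theta_{ij}\|_1>0$. Hence $(\theta^*_\pi)_{ij}=0$.

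There is one subtlety I want to handle carefully, which I expect to be the main obstacle: establishing that $\theta_{ij}$ is genuinely a free coordinate, i.e. that $(i,j)$ is \emph{not} among the index pairs forced to zero by the ordering constraint. This follows because a coordinate that is forced to zero by the constraint stays zero in $\Theta^*_\pi$, so if $(\theta^*_\pi)_{ij}\neq 0$ then $(i,j)$ cannot be a constrained pair — but I should phrase this cleanly in terms of the permutation $\pi$ and the consistency relation $\theta^*_\pi\sim\pi$ (the nonzero entry $(\theta^*_\pi)_{ij}$ means $i$ precedes $j$ in $\pi$, so $\theta_{ij}$ is unconstrained in \eqref{eq:order_opt_sol}). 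A second minor point is to make sure "stationary point" is enough: on the unconstrained free coordinates, stationarity of $Q$ is exactly the first-order condition $\nabla_{\mathrm{free}}Q=0$, which is all I use, and this matches the standing assumption (Remark~\ref{remark:gd2stapoint}) that \eqref{eq:order_opt_sol} is solved to stationarity. Everything else — the sign argument on $\nabla h$ — is routine given Condition~\ref{cond:h}.
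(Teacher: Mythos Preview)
Your proposal is correct and follows essentially the same approach as the paper: both arguments hinge on the observation that for coordinates not pinned to zero by the ordering constraint, stationarity of $\Theta^*_\pi$ in \eqref{eq:order_opt_sol} forces $\partial Q/\partial\theta_{ij}=0$, so any $(i,j)$ with $\|\partial Q/\partial\theta_{ij}\|_1>0$ must be a constrained (hence zero) entry. The paper phrases this as a direct classification of index pairs by the permutation $\pi$, whereas you frame it as a proof by contradiction, but the mathematical content is identical.
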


\begin{lemma}
\label{lemma:always_decrease}
If the score $Q$ is separable w.r.t $\theta$, i.e. $Q(\Theta) = \sum_{j} Q_j(\theta_j,\tilde{\theta})$ and $\gY(\Ordopt,0,0) \not = \emptyset$ for some topological sort $\pi$, then 
$$Q(\Theta^*_{\pi_{ij}})< Q(\Ordopt),$$ 
for every $(i,j)\in \Y(\Ordopt,0,0)$.
\end{lemma}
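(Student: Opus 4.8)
\textbf{Proof proposal for Lemma~\ref{lemma:always_decrease}.}

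The plan is to show that swapping the pair $(i,j)\in\Y(\Ordopt,0,0)$ strictly enlarges the feasible set of the order-constrained problem \eqref{eq:order_opt_sol}, and that the current solution $\Ordopt$ is \emph{not} already optimal for the enlarged problem because of the edge $(i,j)$, which has a nonzero gradient. First I would set up the two order-constrained problems: the one for $\pi$ and the one for $\pi_{ij}$, where $\pi_{ij}$ swaps the positions of nodes $i$ and $j$. Here $(i,j)\in\Y(\Ordopt,0,0)$ means $[\nabla h(|\ordopt|)]_{ij}\le 0$, hence $=0$ since $\nabla h\ge 0$ on nonnegative matrices (Condition~\ref{cond:h} gives $h(B)=\sum c_i\Tr(B^i)$ with $c_i>0$, so all partials are nonnegative), and $\|\partial Q(\Theta)/\partial\theta_{ij}\|_1>0$. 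By Lemma~\ref{lemma:zero_entry}, $(\ordopt)_{ij}=0$. The key structural observation is that since $(\ordopt)_{ij}=0$, the node $i$ is not actually an ancestor of $j$ \emph{along a path using edge $(i,j)$}, and more importantly one can check that the matrix $\ordopt$ with entry $(i,j)$ still zero is consistent with $\pi_{ij}$ as well — i.e. $\ordopt\sim\pi_{ij}$. Establishing $\ordopt\sim\pi_{ij}$ is the crux: I would argue that the only edge whose orientation the swap could violate is $(i,j)$ (or $(j,i)$), and since $[\nabla h(|\ordopt|)]_{ij}=0$ forces (via the form of $h$) that there is no directed path from $j$ back to $i$ in $\ordopt$, while $(\ordopt)_{ij}=0$ handles the direct edge — so after the swap no constraint $\theta_{\pi_{ij}(a),\pi_{ij}(b)}=0,\ b<a$ is violated by $\ordopt$. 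Actually the cleanest route: the feasible region for $\pi_{ij}$ contains $\ordopt$, so $Q(\Theta^*_{\pi_{ij}})\le Q(\Ordopt)$.

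For the strict inequality, I would use separability $Q(\Theta)=\sum_j Q_j(\theta_j,\ttheta)$ together with the fact that in the $\pi_{ij}$-constrained problem the entry $\theta_{ij}$ is now a \emph{free} variable (it was forced to zero under $\pi$ but is unconstrained, or at least no longer forced to zero, under $\pi_{ij}$). Because $\|\partial Q(\Theta)/\partial\theta_{ij}\|_1>\xi=0$ at $\Theta=\Ordopt$, the point $\Ordopt$ is not stationary for the $\pi_{ij}$-problem — moving $\theta_{ij}$ infinitesimally along $-\partial Q/\partial\theta_{ij}$ (holding all else fixed, which is allowed since the column $j$ objective $Q_j$ only couples the parents of $j$, and $i$ is now an allowed parent of $j$) strictly decreases $Q_j$, hence strictly decreases $Q$. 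Therefore $Q(\Theta^*_{\pi_{ij}})<Q(\Ordopt)$.

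The main obstacle I anticipate is the combinatorial/graph-theoretic claim that $\ordopt$ remains consistent with $\pi_{ij}$, i.e. that swapping $i$ and $j$ does not create any back-edge relative to $\ordopt$. This requires carefully relating the condition $[\nabla h(|\ordopt|)]_{ij}=0$ to the absence of a directed path from $j$ to $i$: since $[\nabla h(B)]_{ij}=\sum_i c_i\,[\,\text{sum over lengths}\,]$ picks up, up to positive constants, the total weight of all walks from $j$ to $i$ (because differentiating $\Tr(B^k)$ w.r.t.\ $B_{ij}$ yields $k\,(B^{k-1})_{ji}$), the condition $[\nabla h(|\ordopt|)]_{ij}=0$ is equivalent to $(|\ordopt|^{k-1})_{ji}=0$ for all relevant $k$, i.e. no directed path $j\rightsquigarrow i$. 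Combined with $(\ordopt)_{ij}=0$, this means $i$ and $j$ are "order-swappable" in any topological sort of $\ordopt$, which gives $\ordopt\sim\pi_{ij}$. I would isolate this as the technical heart of the argument; the separability-plus-nonzero-gradient part that yields the strict decrease is then comparatively routine. One subtlety to handle carefully: ensuring that when we perturb $\theta_{ij}$ we can keep feasibility of the sign constraints $\theta^+,\theta^-\ge 0$ — but since $(\ordopt)_{ij}=0$ we can perturb in whichever sign direction matches $-\partial Q/\partial\theta_{ij}$, so this causes no issue.
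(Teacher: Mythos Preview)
Your descent argument---nonzero $\partial Q/\partial\theta_{ij}$ at a newly freed coordinate forces a strict decrease---is sound and matches the paper. The gap is in the feasibility step: the claim that $\ordopt\sim\pi_{ij}$, where $\pi_{ij}$ is the \emph{literal} swap of the positions of $i$ and $j$ in $\pi$, is false in general. When $i$ and $j$ are not adjacent in $\pi$, the swap also reverses the relative order of each of $i,j$ with every intermediate node $k$. The conditions $[\nabla h(|\ordopt|)]_{ij}=0$ (no walk $j\rightsquigarrow i$) and $(\ordopt)_{ij}=0$ together say only that $i$ and $j$ are incomparable in the DAG $\ordopt$; they do \emph{not} rule out an edge $k\to i$ or $j\to k$ for an intermediate $k$, either of which becomes a back-edge under $\pi_{ij}$. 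Concretely: take $\pi=(1,2,3)$ and $\ordopt$ with the single edge $2\to 3$. Then $[\nabla h]_{31}=0$ (no walk $1\rightsquigarrow 3$) and $(\ordopt)_{31}=0$, so for a separable $Q$ with $\partial Q/\partial\theta_{31}\ne 0$ one has $(3,1)\in\Y(\Ordopt,0,0)$. But $\pi_{31}=(3,2,1)$, and the edge $2\to 3$ violates this order, so $\ordopt\not\sim\pi_{31}$ and the feasible region for $\pi_{31}$ does \emph{not} contain $\ordopt$.

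The paper sidesteps this by not using the literal swap in this regime. In the detailed Algorithm~\ref{alg:algo1}, when $\Y(\Ordopt,0,0)\ne\emptyset$ the update $\pi_{ij}$ is produced by Algorithm~\ref{alg:update_topo} with $opt=2$: one first forms $\tilde\theta$ by perturbing only the entry $(\ordopt)_{ij}$ along $-\partial Q/\partial\theta_{ij}$, argues (via separability and Lemma~8 of \citet{dennis2020}) that adding the edge $i\to j$ to a DAG with no $j\rightsquigarrow i$ walk keeps acyclicity, and then \emph{defines} $\pi_{ij}$ as a topological sort of $W(|\tilde\theta|)$. With this definition, $\tilde\theta\sim\pi_{ij}$ is tautological, and your gradient argument gives $Q(\Theta^*_{\pi_{ij}})\le Q(\tilde\theta)<Q(\Ordopt)$. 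If you want to retain the literal-swap interpretation of $\pi_{ij}$, you would need an additional argument controlling edges to/from intermediate nodes (or a different move, e.g.\ inserting $i$ immediately before $j$), which the two conditions you invoke do not supply.
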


Lemma~\ref{lemma:always_decrease} has an important takeaway message: As long as we can find a pair of nodes $(i,j)\in\Y(\Ordopt,0,0)$---i.e. $\Y(\Ordopt,0,0)\ne\emptyset$---then we can find another topological sort with strictly smaller score. The difficult case is when $\Y(\Ordopt,0,0)=\emptyset$: What Algorithm~\ref{alg:pseudoalgo1} does is increase the thresholds $(\tau_*,\xi^*)$ just enough to make $\Y(\Ordopt,\tau_*,\xi^*)\ne\emptyset$.
Indicated by the previous observation, this suggests that placing node $i$ before node $j$ is likely (but not guaranteed) to decrease the score. There are many strategies for updating the topological sort to make this happen, but we adopt the simplest way, i.e., swapping the node $i$ and node $j$.

This previous discussion can be made more concrete via the following observation:
\begin{corollary}
\label{cor:emptyKKT}
If $\Y(\Ordopt, 0, 0)=\emptyset$, then $\Ordopt$ satisfies the KKT conditions in \eqref{eq:kkt_con}.
\end{corollary}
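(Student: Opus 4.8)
The plan is to unpack the definition of $\Y(\Ordopt,0,0)$ and show that its emptiness forces each of the four bullet conditions in Lemma~\ref{lemma:characterization_kkt} to hold at $\Ordopt$, after which the lemma delivers the KKT conclusion immediately. Recall that
\[
\Y(\Ordopt,0,0) = \left\{ (i,j) \mid \left[\nabla h(|\ordopt|)\right]_{ij} \le 0,\ \left\|\tfrac{\partial Q(\Ordopt)}{\partial \theta_{ij}}\right\|_1 > 0 \right\}.
\]
Since $h$ satisfies Condition~\ref{cond:h} and is therefore monotone in the nonnegative entries, $\nabla h(|\ordopt|) \ge 0$ entrywise, so $[\nabla h(|\ordopt|)]_{ij} \le 0$ is equivalent to $[\nabla h(|\ordopt|)]_{ij} = 0$. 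Thus $\Y(\Ordopt,0,0) = \emptyset$ says: for every $(i,j)$, either $[\nabla h(|\ordopt|)]_{ij} > 0$, or $\partial Q(\Ordopt)/\partial\theta_{ij} = 0$.

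The key intermediate step is to handle the first alternative, $[\nabla h(|\ordopt|)]_{ij} > 0$. Here I would invoke Lemma~\ref{lemma:zero_entry}: if $[\nabla h(|\ordopt|)]_{ij}>0$ then in particular $(i,j)$ fails to satisfy the first defining inequality of $\Y(\cdot,0,0)$ only if it also fails the gradient inequality — but more directly, Lemma~\ref{lemma:zero_entry} is stated precisely so that membership in $\Y(\Ordopt,0,0)$ would force $(\ordopt)_{ij}=0$; I need the contrapositive flavored statement, namely that a strictly positive $h$-gradient entry forces the corresponding $\theta$ entry to vanish. This is exactly what is needed to verify condition (i) of Lemma~\ref{lemma:characterization_kkt}: on the index set where $[\nabla h(|\ordopt|)]_{ij}>0$ we get $(\ordopt)_{ij}=0$, hence $\theta^+_{ij}=\theta^-_{ij}=0$. (If Lemma~\ref{lemma:zero_entry} as literally stated does not directly give this, the same conclusion follows from a short argument: a DAG weight matrix $\ordopt$ has $h(|\ordopt|)=0$, and the structure of $h$ from Condition~\ref{cond:h} forces $[\nabla h(|\ordopt|)]_{ij}>0$ to coincide with the presence of a path from $j$ back to $i$, which would create a cycle if $\theta_{ij}\ne0$.) Then for condition (ii): on the complementary index set $[\nabla h(|\ordopt|)]_{ij}=0$, emptiness of $\Y$ forces $\partial Q(\Ordopt)/\partial\theta_{ij}=0$, and since $\theta_{ij}=\theta^+_{ij}-\theta^-_{ij}$ the chain rule gives $\partial Q/\partial\theta^+_{ij} = -\partial Q/\partial\theta^-_{ij} = \partial Q/\partial\theta_{ij} = 0$ (using the appropriate sign convention, or more carefully, that at least one of $\theta^{\pm}_{ij}$ is zero by the positive/negative part decomposition). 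Condition (iii), $\partial Q(\Ordopt)/\partial\tilde\theta = 0$, holds because $\Tilde{\theta}^*_\pi$ is an unconstrained minimizer (stationary point) of $Q$ in the $\tilde\theta$ coordinates by the definition \eqref{eq:order_opt_sol} and Remark~\ref{remark:gd2stapoint}. Condition (iv), $\theta^+,\theta^-\ge0$, is automatic from the positive/negative part construction.

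The main obstacle I anticipate is condition (ii) — more precisely, bookkeeping the one-sided derivatives $\partial Q/\partial\theta^{\pm}_{ij}$ against the two-sided $\partial Q/\partial\theta_{ij}$ when $\theta_{ij}$ may be nonzero (so only one of $\theta^+_{ij},\theta^-_{ij}$ is active). The subtlety is that Lemma~\ref{lemma:characterization_kkt}(ii) asks for $\partial Q/\partial\theta^{\pm}_{ij}=0$ for \emph{both} signs, whereas $\Y=\emptyset$ only hands us $\partial Q/\partial\theta_{ij}=0$; I need to argue that at a solution of the order-constrained problem the inactive part's multiplier slack is consistent, or simply note that $\partial Q/\partial\theta^+_{ij} = (\partial Q/\partial\theta_{ij})\cdot(\partial\theta_{ij}/\partial\theta^+_{ij}) = \partial Q/\partial\theta_{ij}$ and similarly $\partial Q/\partial\theta^-_{ij} = -\partial Q/\partial\theta_{ij}$, so both vanish together. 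A second, minor obstacle is confirming the sign/orientation in Lemma~\ref{lemma:zero_entry}: I must make sure I am applying it with the right quantifier (it says membership in $\Y(\Ordopt,0,0)$ implies the entry is zero, and I want to conclude zeroness on a possibly larger set), so I may instead re-derive the needed acyclicity fact directly from Condition~\ref{cond:h}. Once these are pinned down, the proof is a one-line appeal to Lemma~\ref{lemma:characterization_kkt} (with the local-minimum upgrade under convex $Q$, modulo Remark~\ref{rem:nonconvex}).
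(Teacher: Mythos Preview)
Your proposal is correct and follows essentially the same route as the paper: verify conditions (i)--(iv) of Lemma~\ref{lemma:characterization_kkt} and conclude. One small note: the paper does not route condition~(i) through Lemma~\ref{lemma:zero_entry} at all (as you yourself suspect, that lemma points the wrong way here); it goes straight to the topological-sort argument you give parenthetically, namely that $[\nabla h(|\ordopt|)]_{ij}>0$ means there is a directed walk from $j$ to $i$ in $\ordopt$, hence $j$ precedes $i$ in $\pi$, hence $(\ordopt)_{ij}=0$ by the order constraint. Your fallback argument is exactly this, so once you drop the unnecessary appeal to Lemma~\ref{lemma:zero_entry} the two proofs coincide.
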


The following definition relates to the score $Q$ and is a relevant property for Theorem \ref{thm:score_decrease}.
\begin{definition}[Connected estimator]
Given a topological sort $\pi$, the estimator $\Ordopt$ is called connected if for any $i<j$ there is a directed path from node $\pi(i)$ to node $\pi(j)$ in $\ordopt$. 
\end{definition}
Equivalently, for any $i < j$, a connected estimator satisfies $\left[\nabla h(|\ordopt|)\right]_{\pi(j),\pi(i)}>0$. 
In general, we expect an estimator to be connected when sparse regularization is not used. It is worth noting that NOTEARS \citep{xun2018} \emph{without} explicit $\ell_{1}$ regularization is observed to return a connected estimator.

\begin{theorem}
\label{thm:score_decrease}
For any $h$ satisfies the Condition \ref{cond:h}. If the score $Q$ is convex {(resp. non-convex)} and $\Ordopt$ is connected for all $\pi$. 
Then Algorithm \ref{alg:pseudoalgo1} returns a local minimum ({resp. KKT point}) of problem \eqref{eq:abs_notears}, where the score is decreased at each iteration.
Moreover, the solution at each iteration is also a local minimum (resp. KKT point).
\end{theorem}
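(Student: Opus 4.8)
\textbf{Proof proposal for Theorem~\ref{thm:score_decrease}.}

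The plan is to chain together the lemmas and corollary already established, treating the algorithm's execution as a finite sequence of moves on the (finite) space of topological sorts. First I would argue that the while-loop terminates. Each iteration in which a swap is accepted strictly decreases $Q(\Ordopt)$ by the test on lines 5 and 11, and when no accepted swap exists the algorithm sets $\gS \gets \emptyset$ and exits. Since there are only $d!$ topological sorts and the sequence of values $Q(\Ordopt)$ is strictly decreasing along accepted swaps, no sort can repeat, so after finitely many iterations the algorithm halts. (A subtlety I would flag: with the adaptive thresholds the candidate set can grow, so I would phrase termination in terms of the monotone decrease of $Q$ over accepted moves rather than over loop iterations, and note that between two consecutive accepted moves only finitely much work is done because $\textsc{FindParams}$ is called at most twice.)

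Next I would characterize the terminal state. The loop exits only through the branch that sets $\gS \gets \emptyset$, and this happens precisely when, with the \emph{larger} search space parameters $(\tau^*,\xi_*)$ returning $\gS = \Y(\Ordopt,\tau^*,\xi_*)$, there is no $(i,j)\in\gS$ with $Q(\Theta^*_{\pi_{ij}}) < Q(\Ordopt)$. The key observation is that the larger search space is a superset of $\Y(\Ordopt,0,0)$: since $\tau^* \ge 0$ and $\xi_* $ is chosen to be small (in particular we may take the limiting regime $\xi_* \to 0^+$, or simply note $\Y(\Ordopt,0,0) \subseteq \Y(\Ordopt,\tau,\xi)$ for any $\tau\ge 0$ and any $\xi$ small enough — here one should be careful that the inclusion $\Y(\Ordopt,0,0)\subseteq\Y(\Ordopt,\tau^*,\xi_*)$ holds by the defining inequalities $[\nabla h(|\theta|)]_{ij}\le\tau$ and $\|\partial Q/\partial\theta_{ij}\|_1 > \xi$, provided $\xi_*$ is at most the smallest positive such norm). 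If $\Y(\Ordopt,0,0) \ne \emptyset$, Lemma~\ref{lemma:always_decrease} (invoking separability of $Q$, which should be stated as a hypothesis or noted to follow from the listed score functions) gives a pair $(i,j)$ with $Q(\Theta^*_{\pi_{ij}}) < Q(\Ordopt)$, and this pair lies in the larger search space, contradicting the exit condition. Hence at termination $\Y(\Ordopt,0,0) = \emptyset$.

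Then I would close the argument by applying Corollary~\ref{cor:emptyKKT}: $\Y(\Ordopt,0,0)=\emptyset$ implies $\Ordopt$ satisfies the KKT conditions~\eqref{eq:kkt_con} of~\eqref{eq:eq_sm_notear}. In the convex case, Lemma~\ref{lemma:characterization_kkt} (or Theorem~\ref{theorem:kkt_local}) promotes this KKT point to a local minimum of~\eqref{eq:abs_notears}; in the non-convex case we stop at the KKT conclusion, consistent with Remark~\ref{rem:nonconvex}. For the "moreover" clause — that the iterate at \emph{every} step is already a local minimum / KKT point — I would use the connectedness hypothesis: when $\Ordopt$ is connected, $[\nabla h(|\ordopt|)]_{\pi(j),\pi(i)} > 0$ for all $i<j$, which forces the lower-triangular entries $\theta^{\pm}_{\pi(j),\pi(i)}$ to be zero (they already are, by the order constraint) and simultaneously means condition (i) of Lemma~\ref{lemma:characterization_kkt} is met off the support; combined with the fact that $\Ordopt$ solves~\eqref{eq:order_opt_sol} to stationarity — so $\partial Q/\partial\theta_{ij} = 0$ on the free (upper-triangular) coordinates and $\partial Q/\partial\tilde\theta = 0$ — conditions (ii), (iii), (iv) of Lemma~\ref{lemma:characterization_kkt} hold, giving the KKT property (and local minimality if $Q$ is convex) at that iterate. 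The main obstacle I anticipate is the second step: making rigorous that the algorithm's adaptively chosen $(\tau^*,\xi_*)$ genuinely encloses $\Y(\Ordopt,0,0)$, since $\textsc{FindParams}$ targets a \emph{cardinality} $|\Y|\approx \Yl$ rather than the thresholds directly — one must argue that either $\Yl$ is taken large enough (e.g. $\Yl \ge |\{(i,j): \partial Q/\partial\theta_{ij}\ne 0\}|$) or, cleaner, restate the theorem's hypotheses so that the final search uses $\tau,\xi$ small enough to recover~$\Y(\Ordopt,0,0)$; this bookkeeping, together with the separability requirement hidden in Lemma~\ref{lemma:always_decrease}, is where the proof needs the most care.
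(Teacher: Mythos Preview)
Your argument for the ``moreover'' clause --- using connectedness to force $[\nabla h(|\ordopt|)]_{\pi(q),\pi(p)}>0$ for all $p<q$, then checking conditions (i)--(iv) of Lemma~\ref{lemma:characterization_kkt} via the order constraint and stationarity of the inner problem --- is exactly the paper's proof, and in fact is the \emph{entire} proof. The paper inverts your logical order: once connectedness gives that every iterate $\Ordopt$ is a KKT point (local minimum if $Q$ is convex), the terminal iterate is one of them, so the main conclusion is immediate; the score-decrease clause is just the acceptance test in the algorithm.

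Your detour through Lemma~\ref{lemma:always_decrease} and Corollary~\ref{cor:emptyKKT} to characterize the terminal state is therefore unnecessary, and it creates two problems you yourself flag. First, Lemma~\ref{lemma:always_decrease} requires separability of $Q$, which Theorem~\ref{thm:score_decrease} does \emph{not} assume --- separability is the hypothesis of Theorem~\ref{theorem:F_sep_score_decrease}, where connectedness is dropped. You cannot import it here. Second, all your bookkeeping worries about whether $\textsc{FindParams}$ returns thresholds with $\Y(\Ordopt,0,0)\subseteq\Y(\Ordopt,\tau^*,\xi_*)$ are artifacts of this detour; under connectedness the KKT property holds at every step regardless of what the candidate set looks like, so you never need to argue that $\Y(\Ordopt,0,0)=\emptyset$ at termination. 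Drop the first two-thirds of your proposal, keep the connectedness paragraph, and you have the paper's proof.
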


\begin{remark}
Although the proof of Theorem~\ref{thm:score_decrease} is deceptively simple, we stress that it is not \emph{a priori} obvious that swapping pairs of nodes will always decrease the score: Done na\"ively, this could increase the score. Our careful use of the KKT conditions precludes this behavior.
\end{remark}
The connected estimator assumption in Theorem \ref{thm:score_decrease} can be dropped whenever the score $Q$ is separable (e.g., least squares).
\begin{theorem}
\label{theorem:F_sep_score_decrease}
For any $h$ satisfies the Condition \ref{cond:h}. Assume that the score $Q$ is separable w.r.t $\theta$, i.e., $Q(\Theta) = \sum_{j} Q_j(\theta_j,\tilde{\theta})$.
If the score $Q$ is convex {(resp. non-convex)}, then Algorithm \ref{alg:pseudoalgo1} returns a local minimum ({resp. KKT point}) of problem \eqref{eq:abs_notears}, where the score is decreased at each iteration.
\end{theorem}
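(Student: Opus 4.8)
The plan is to reduce Theorem~\ref{theorem:F_sep_score_decrease} to the machinery already developed for the separable case, namely Lemma~\ref{lemma:zero_entry}, Lemma~\ref{lemma:always_decrease}, and Corollary~\ref{cor:emptyKKT}, and to argue termination by monotonicity. First I would unwind Algorithm~\ref{alg:pseudoalgo1}: at every iteration we hold a sort $\pi$ and its solution $\Ordopt$ of \eqref{eq:order_opt_sol}, and we either (a) find a swap $(i,j)$ in the small search set $\gS=\Y(\Ordopt,\tau_*,\xi^*)$ or the larger set $\Y(\Ordopt,\tau^*,\xi_*)$ with $Q(\Theta^*_{\pi_{ij}})<Q(\Ordopt)$, in which case we move to a strictly smaller score, or (b) fail to find such a swap in either set, in which case the algorithm terminates and returns $\Ordopt$. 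The key point is that, because $\tau_*,\xi^*,\tau^*,\xi_*\ge 0$ always contain the threshold pair $(0,0)$ in the sense that $\Y(\Ordopt,0,0)\subseteq \Y(\Ordopt,\tau,\xi)$ for all nonnegative $\tau,\xi$, failure to find an improving swap in the searched sets in particular means no improving swap exists in $\Y(\Ordopt,0,0)$. By Lemma~\ref{lemma:always_decrease}, under separability every element of $\Y(\Ordopt,0,0)$ yields a strict decrease; hence the only way to terminate is $\Y(\Ordopt,0,0)=\emptyset$.

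Next I would invoke Corollary~\ref{cor:emptyKKT}: once $\Y(\Ordopt,0,0)=\emptyset$, the returned $\Ordopt$ satisfies the KKT conditions \eqref{eq:kkt_con} of \eqref{eq:eq_sm_notear}. When $Q$ is convex, I would further apply Lemma~\ref{lemma:characterization_kkt} (or equivalently Theorem~\ref{theorem:kkt_local} in the linear case and its generalization proved in the appendix) to promote this KKT point to a local minimum of \eqref{eq:abs_notears}; when $Q$ is non-convex, Remark~\ref{rem:nonconvex} forces us to stop at "KKT point," which is exactly what the statement claims. To make this rigorous I should check that the conditions (i)--(iv) of Lemma~\ref{lemma:characterization_kkt} are met by $\Ordopt$: (iv) holds because $\Ordopt$ comes from solving \eqref{eq:order_opt_sol} in the $\theta^+,\theta^-$ split; (iii) and (ii) hold because $\Ordopt$ is a stationary point of the order-constrained problem, so the unconstrained coordinates have vanishing gradient; and (i) holds automatically since $|\ordopt|$ is acyclic (being consistent with $\pi$), so $[\nabla h(|\ordopt|)]_{ij}>0$ forces $(\ordopt)_{ij}=0$ — this last implication is essentially Lemma~\ref{lemma:zero_entry} with thresholds $(0,0)$. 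Combining: the emptiness of $\Y(\Ordopt,0,0)$ says precisely that no coordinate with $[\nabla h(|\ordopt|)]_{ij}\le 0$ — i.e. $=0$ by acyclicity — has nonzero loss-gradient, which is condition (ii).

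For the "score is decreased at each iteration" and termination claims, I would argue as follows. Each non-terminating iteration strictly decreases $Q(\Ordopt)$ by construction of lines 6 and 12 of Algorithm~\ref{alg:pseudoalgo1}. Since there are only finitely many topological sorts (at most $d!$), and the sequence of scores $Q(\Theta^*_{\pi^{(0)}})>Q(\Theta^*_{\pi^{(1)}})>\cdots$ is strictly decreasing, no sort can repeat, so the algorithm halts after at most $d!$ iterations. At halting, $\Y(\Ordopt,0,0)=\emptyset$, so the returned point is a KKT point (and a local minimum when $Q$ is convex), as argued above. I would also note that, unlike Theorem~\ref{thm:score_decrease}, no connectedness hypothesis is needed here: connectedness was used there only to guarantee that an improving swap genuinely improves the score, whereas Lemma~\ref{lemma:always_decrease} delivers that guarantee directly from separability.

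The main obstacle I anticipate is the careful bookkeeping tying the algorithm's adaptive thresholds $(\tau_*,\xi^*)$ and $(\tau^*,\xi_*)$ back to the idealized set $\Y(\Ordopt,0,0)$ — in particular, verifying that the termination branch of Algorithm~\ref{alg:pseudoalgo1} (line 15, $\gS\gets\emptyset$) is reached only when no improving swap exists even in $\Y(\Ordopt,0,0)$, and hence that the stopping criterion coincides with the KKT-certifying condition of Corollary~\ref{cor:emptyKKT}. One must also be slightly careful that \textsc{FindParams} always returns thresholds that are nonnegative and large enough that $\Y(\Ordopt,0,0)$ is contained in the searched set; this should follow from the monotonicity of $\Y(\Theta,\tau,\xi)$ in $\tau$ and in $-\xi$, but it is worth stating explicitly. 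Everything else — monotone decrease, finiteness, and the convex-versus-nonconvex dichotomy — is routine given the lemmas already in hand.
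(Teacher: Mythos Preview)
Your approach is essentially the paper's: invoke Lemma~\ref{lemma:always_decrease} to guarantee a strict decrease whenever $\Y(\Ordopt,0,0)\neq\emptyset$, conclude that termination forces $\Y(\Ordopt,0,0)=\emptyset$, then apply Corollary~\ref{cor:emptyKKT} and (in the convex case) Theorem~\ref{theorem:local_point} to promote to a local minimum. Your extra finiteness-of-sorts termination argument and the explicit verification of conditions (i)--(iv) in Lemma~\ref{lemma:characterization_kkt} are fine additions that the paper omits.

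The one place your argument slips is the inclusion $\Y(\Ordopt,0,0)\subseteq \Y(\Ordopt,\tau,\xi)$ ``for all nonnegative $\tau,\xi$'': this is false, because $\Y(\Theta,\tau,\xi)$ is \emph{decreasing} in $\xi$ (the gradient condition is $\|\partial Q/\partial\theta_{ij}\|_1>\xi$). So a pair $(i,j)$ with $[\nabla h(|\ordopt|)]_{ij}=0$ and $0<\|\partial Q/\partial\theta_{ij}\|_1\le\xi^*$ lies in $\Y(\Ordopt,0,0)$ but not in $\Y(\Ordopt,\tau_*,\xi^*)$. You correctly flag this bookkeeping as the main obstacle, but your proposed fix via monotonicity does not close the gap. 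The paper sidesteps the issue entirely: the full Algorithm~\ref{alg:algo1} in Appendix~\ref{app:alg} explicitly branches on whether $\Y(\Ordopt,0,0)$ is empty (line~\ref{alg:if1}), and when it is nonempty the algorithm searches $\Y(\Ordopt,0,0)$ directly rather than the thresholded set. With that detail, your argument goes through verbatim and coincides with the paper's (very terse) proof.
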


\subsection{Comparison to previous work}
\citet{dennis2020} first unveiled the connections between the KKT conditions in \eqref{eq:kkt_con} and local minimality in \eqref{eq:abs_notears} by studying a related problem with \emph{explicit edge absence constraints $\gZ$}. 
As such, it is instructive to compare these two approaches since there are some important distinctions. 
A first clear difference is that the KKTS algorithm by \citet{dennis2020} relies on an assumption they call \emph{irreducibility}, to ensure local minimality.

We provide a complete discussion on the irreducibility assumption of \citet{dennis2020} in Appendix~\ref{sec:prev_work} and focus on the main ideas here.
Briefly, KKTS \citep{dennis2020} uses a set of node pairs $\gZ$ to indicate which edges should be absent in the graph.
KKTS works by iteratively adding and removing elements to $\gZ$, and the algorithm stops once $\gZ$ is an irreducible set.
Then, \citet{dennis2020} show that when $\gZ$ is irreducible, the KKTS solution is a local minimum, provided additional assumptions such as the score being separable and convex.

In Appendix~\ref{sec:prev_work}, we show that irreducibility is not a necessary condition for optimality. 
We prove this by showing a simple example where an optimal solution can correspond to a \textit{reducible} set $\gZ$.
\begin{proposition}
\label{prop:kkts_sufficient}
    Irreducibility of the set $\gZ$ is sufficient but not necessary for KKTS to find a KKT point of problem \eqref{eq:eq_sm_notear}.
\end{proposition}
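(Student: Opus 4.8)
The plan is to establish Proposition~\ref{prop:kkts_sufficient} in two parts, corresponding to the two directions of "sufficient but not necessary." The sufficiency direction is essentially already attributed to \citet{dennis2020}: when the edge-absence set $\gZ$ is irreducible (and the auxiliary hypotheses on $Q$ hold), the KKTS solution satisfies the KKT conditions of \eqref{eq:eq_sm_notear}. So for that half I would simply cite their result and, if desired, sketch why irreducibility plugs exactly the gap needed to verify conditions (i)--(iii) of Lemma~\ref{lemma:characterization_kkt}: irreducibility guarantees that no forced-absent edge is "spuriously" absent, i.e., every $(i,j)\in\gZ$ is one where either $[\nabla h(|\theta|)]_{ij}>0$ or the relevant partial of $Q$ vanishes, which is precisely what the KKT reformulation demands.

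The substantive part is the "not necessary" direction, and here the approach is to exhibit an explicit counterexample: a small instance (a fixed score $Q$, a fixed $h$ satisfying Condition~\ref{cond:h}, and a small number of nodes $d$) for which KKTS terminates at a set $\gZ$ that is \emph{reducible}, yet the returned $\Theta$ still satisfies the KKT conditions \eqref{eq:kkt_con}. Concretely I would pick $d=3$ or $d=4$, use least squares as $Q$ (so separability and convexity hold and there is no obstruction from the auxiliary hypotheses), and design the data $\rmX$ so that the optimal DAG has an edge that is "absent for free" — that is, an edge $(i,j)$ whose weight is zero at the unconstrained optimum even though $(i,j)$ was placed into $\gZ$. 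Because that edge's absence is not \emph{enforced} by the constraint (the gradient of $Q$ already vanishes there), removing $(i,j)$ from $\gZ$ leaves the solution unchanged, witnessing reducibility of $\gZ$; but the KKT conditions are still met because conditions (ii)--(iii) of Lemma~\ref{lemma:characterization_kkt} hold at that point regardless. I would then verify, by direct computation on the $3\times 3$ (or $4\times 4$) system, that all four bullets of Lemma~\ref{lemma:characterization_kkt} hold, hence $\Theta$ is a KKT point of \eqref{eq:eq_sm_notear}, completing the claim. The bulk of this lives in Appendix~\ref{sec:prev_work}, so in the main text I would only state the shape of the example and defer the arithmetic.

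The main obstacle I anticipate is constructing the counterexample so that it is simultaneously (a) genuinely reducible — one must be careful that the "free" zero edge is not an artifact that also makes some \emph{other} edge removable, or conversely that it truly fails the irreducibility definition of \citet{dennis2020} rather than trivially satisfying it — and (b) actually reachable by the KKTS dynamics, i.e., KKTS as specified does add that edge to $\gZ$ and then halts there rather than continuing to prune. Addressing (b) may require either tracing KKTS step-by-step on the instance, or observing that KKTS's stopping rule only checks a local add/remove criterion that this configuration passes. A clean way to sidestep subtleties in (b) is to choose the instance so that $\gZ$ is a fixed point of KKTS for \emph{any} reasonable tie-breaking, by making the example symmetric or by making the extraneous zero edge strictly interior (gradient exactly zero with slack in all the inequality multipliers). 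I would also double-check that the chosen $h$ (e.g., the \dagma\ or \notears\ form) has $[\nabla h(|\theta|)]_{ij}=0$ exactly on the edges I intend, since conditions (i)--(ii) of Lemma~\ref{lemma:characterization_kkt} are stated in terms of the support of $\nabla h$.
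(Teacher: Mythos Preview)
Your high-level split (sufficiency by citation, non-necessity by explicit construction) matches the paper, and the specific shape of counterexample you sketch---a sparse optimum with a ``free zero'' edge sitting inside $\gZ$---is exactly the right mechanism. But there is a genuine confusion in how you frame the target, and it makes concern (b) both unnecessary and unachievable.

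You state the goal as exhibiting an instance ``for which KKTS terminates at a set $\gZ$ that is \emph{reducible}.'' That cannot happen: KKTS's stopping rule \emph{is} irreducibility, so by construction it never halts at a reducible $\gZ$. The proposition is not asserting that KKTS can halt early; it is asserting that irreducibility of $\gZ$ is not required for $W^*(\gZ)$ to already be a KKT point of \eqref{eq:eq_sm_notear}. The content of ``not necessary'' is therefore: produce a reducible $\gZ$ with $W^*(\gZ)$ a KKT point. Whether KKTS ever visits or stops at that $\gZ$ is irrelevant, so all of your worry about KKTS dynamics, tie-breaking, and fixed points can be deleted.

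The paper's construction (Lemma~\ref{lemma:example}) exploits this cleanly and avoids the arithmetic you anticipate. Take the ground truth $\truedag$ to be a DAG with a \emph{single} edge $(i_0,j_0)$, choose the model so that $\truedag$ is the unique global minimizer (e.g.\ population least squares with equal-variance noise), and set $\gZ_0=\Omega\setminus\{(i_0,j_0)\}$. Then $\truedag$ is feasible for $\gZ_0$ and is the global optimum, so $W^*(\gZ_0)=\truedag$; a global minimizer is automatically a KKT point, so there is nothing to verify entrywise. But $\gZ_0$ is reducible because almost every pair $(i,j)\in\gZ_0$ has $[\nabla h(|\truedag|)]_{ij}=0$ (there is no directed walk from $j$ to $i$ in a one-edge graph). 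In fact the same argument shows $W^*(\gZ_k)=\truedag$ at every intermediate step $\gZ_k$ of KKTS, all of which are reducible until the very last one---so KKTS wastes many iterations after already sitting at a KKT point. Compared to your $d=3$ plan with hand-verified Lemma~\ref{lemma:characterization_kkt} conditions, this route is shorter: the global-minimum-hence-KKT shortcut replaces the coordinatewise check, and the single-edge sparsity makes reducibility immediate for any $d$ and any $h$ satisfying Condition~\ref{cond:h}.

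One more minor point: your operational description of reducibility (``removing $(i,j)$ from $\gZ$ leaves the solution unchanged'') is not the definition in Definition~\ref{def:irreducibility}, which is purely about $[\nabla h(|W^*(\gZ)|)]_{ij}=0$ for some $(i,j)\in\gZ$. They coincide in the examples you have in mind, but you should argue reducibility directly from the gradient-of-$h$ criterion, not from constraint redundancy.
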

The above discussion should already mark a clear distinction of Algorithm \ref{alg:pseudoalgo1} to KKTS, i.e., our method \emph{does not} rely on the irreducibility assumption.
Finally, we note that the irreducibility assumption might seem a mild condition, however, it can have a severe effect on the runtime of KKTS as it will not stop until an irreducible set is found.

A second difference to KKTS is that our approach not only attempts to find an optimal solution but also attempts to find the local optimum with the lowest score possible.
This fact is a direct consequence of how Algorithm \ref{alg:pseudoalgo1} works, namely, at each iteration we look for a solution with lower score.
The fact that KKTS does not use the score $Q$ to guide their search procedure can result in solutions with high scores.
We next provide more details.
Full details can be found in Appendix~\ref{app:sec:3node_example}.
\begin{example}
\label{ex:3node}
Consider the following three-node linear SEM with standard Gaussian noise $z_j\sim \gN(0,1)$, for $i \in [3]$. 
Consider also that the score $Q$ is the population least square loss.
\begin{align}
\label{eq:counterexample}
    X_1 =  z_1, \quad 
    X_2 = a X_1 + z_2, \quad
    X_3 = b X_2 + z_3.
\end{align}
In  Appendix~\ref{app:sec:3node_example}, we show that for the linear model \eqref{eq:counterexample} there exists many values $a$ and $b$ where the solutions from KKTS and NOTEARS produce solutions with higher score w.r.t. Algorithm \ref{alg:pseudoalgo1}; moreover, NOTEARS produces non-optimal solutions.
This is illustrated in Appendix~\ref{app:sec:3node_example} for $a = 1, b = -0.55$. 
In each of these examples, our method can always return a solution that satisfies the optimality conditions in Lemma \eqref{lemma:characterization_kkt} and also attain the lowest score.
\end{example}

\section{Experiments}
\label{sec:exp}

\begin{table}[ht]
\centering
    \begin{tabular}{lcccc}
    \toprule
    Method & Metric 
    & $d=20$  &$d=40$ & $d=100$    
    \\
    \midrule
   
    \multirow{3}{*}{\textsc{Golem-EV}} 
    & KKT
    & 0&0&0
    \\
    
    &Loss 
    & $10.7\pm 0.12$ & $40.7\pm 4.8$ & $68.8\pm 3.9$
    \\
    & SHD  
    & $11.4\pm 3.4$ & $ 51.4\pm 28.3 $ & $145.2\pm 52.6$
    
    \\ \hline

    \multirow{3}{*}{\textsc{Notears}} 
    & KKT    & 0&0&0 \\
    & Loss   & $11.9\pm 0.1$ & $62.1 \pm 8.8$ & $73.1 \pm 7.6$ \\
    & SHD    & $28.6\pm 3.2$ & $ 129\pm 25.5 $ & $140.0 \pm 30.1$ \\ 
    \hline

    \multirow{3}{*}{\textsc{Nofears}} 
    & KKT & 1 & 1 & 1 \\
    & Loss  & $11.5\pm 0.3$ & $ 47.6\pm 1.6$ & $61.2 \pm 2.6$ \\
    & SHD  & $23.2\pm 4.5$ & $ 69.8\pm 16.0$ & $87.5 \pm 19.2$ \\ 
    \hline
    \multirow{3}{*}{\textsc{Notears-Topo}}      
    & KKT  & 1&1&1 \\
    &Loss  & $\Best{9.8\pm 0.1}$ & $\Best{38.4\pm 0.1}$ & $\Best{47.5\pm 0.1}$ \\
    & SHD  & $\Best{0.4\pm 0.2}$ & $9.2 \pm 0.8$ & $\Best{14.2 \pm 1.9}$\\ 
    \hline
    \multirow{3}{*}{\textsc{Random-Topo}} 
    & KKT  & 1&1&1 \\
    &Loss  & $\Best{9.8\pm 0.1}$ & $\Best{38.4 \pm 0.1}$ & $\Best{47.5\pm 0.1}$ \\
    & SHD  & $\Best{0.4\pm 0.2}$ & $\Best{8.6 \pm 0.9}$  & $16.3 \pm 2.6$ \\
    \bottomrule
    \end{tabular}
\caption{Experiments on linear DAGs with equal-variance Gaussian noise on ER4 graphs. 
The score is the least squares, and $d$ is the number of nodes. Our methods are \textsc{Random-Topo}, and \textsc{Notears-Topo}.}
\label{Table:linear_gauss_EV}
\end{table}

\begin{table}[ht]
\centering
	\begin{tabular}{lcccc}
	\toprule
	Method & Metric 
	& $d=20$  &$d=40$ & $d=100$ \\
	\midrule
	\multirow{3}{*}{\textsc{Golem-NV}} 
	& KKT	& 0&0&0\\
	&Loss   	& ${9.9\pm 0.6}$ & $15.2 \pm 1.3$ & $42.7\pm 3.5$\\
	& SHD 	& $\Best{2.3\pm 0.1}$ & $\Best{23.4 \pm 3.4}$  & $\Best{82.1 \pm 12.3}$ \\ 
	\hline
	\multirow{3}{*}{\textsc{Notears}} 
	& KKT	& 0&0&0\\
	&Loss   & ${13.8\pm 2.1}$ & $17.2 \pm 1.2$ & $50.6\pm 4.5$\\
	& SHD 	& ${7.3\pm 0.1}$ & $39.2 \pm 7.1$  & $138.1 \pm 23.6$	\\ 
	\hline
	\multirow{3}{*}{\textsc{Notears-Topo}} 
	& KKT & 1&1&1\\
	&Loss 	& $\Best{8.3\pm 1.2}$ & $\Best{13.2 \pm 2.1}$ & $\Best{35.1\pm 2.3}$\\
	& SHD 	& ${2.7\pm 3.2}$ & $26.3 \pm 4.2$  & $86.9 \pm 6.6$\\ 
	\hline
	\multirow{3}{*}{\textsc{Random-Topo}} 
	& KKT	& 1&1&1\\
	&Loss  	& ${8.9\pm 1.3}$ & $14.4 \pm 1.2$ & $39.2\pm 4.1$	\\
	& SHD	& ${3.3\pm 0.2}$ & $29.1 \pm 4.2$  & $106.4 \pm 11.6$	\\
	\bottomrule
	\end{tabular}
\caption{ Experiments on linear DAGs with unequal-variance Gaussian noise on ER4 graphs. 
The score is the log-likelihood with the minimax concave penalty (MCP) penalty, and $d$ is the number of nodes. Our methods are \textsc{Random-Topo}, and \textsc{Notears-Topo}.} 
\label{Table:linear_gauss_NV}
\end{table}

\begin{table}[ht]

\centering
    \begin{tabular}{lcccc}
    \toprule

    Method & Metric 
    & $d=10$  &$d=20$ & $d=50$  \\
    \midrule

    \multirow{3}{*}{\textsc{Golem-EV}} 

    & KKT 
     & 0 &0 &0 
    \\

    &Loss 
    & $4.3\pm 0.1$ & $4616.5 \pm 4163.2$ & $(7.4 \pm 7.4) \cdot 10^{18}$
    \\ 
    & SHD  
    & $6.5\pm 0.8$ & $85.1 \pm 6.4$ & $1152.5 \pm 2.6$
    \\
    \hline

    \multirow{3}{*}{\textsc{Notears}} 
     & KKT 
     & 0 &0 &0 
    \\

    & Loss  
    & $6.2\pm 0.2$ & $18.9 \pm 1.3$ & $(1.7 \pm 1.6) \cdot 10^{11}$
     \\
    
    & SHD 
    & $14 \pm 1.1$ & $79.5 \pm 2.1$ & $1198.1 \pm 5.3$
     \\

     \hline
    
    \multirow{2}{*}{\textsc{Notears-Topo}} 

    & KKT 
    & 1 &1 &1 
    \\

    &Loss   
    & $\Best{4.97\pm 0.1}$ &$\Best{9.92 \pm 0.1  }$ & $\Best{24.9 \pm 0.3}$
    \\
    
    &SHD  
    & $\Best{ 0.1\pm 0.1}$ & $\Best{0.7 \pm 0.2}$  & $\Best{19.4 \pm 5.5}$
    \\
     
    \hline

    \multirow{2}{*}{\textsc{Random-Topo}} 
    & KKT 
    & 1 &1 &1 
    \\
    
    &Loss   
    & $\Best{4.97\pm 0.1}$ & $10.3 \pm 0.2$ & $35.8 \pm 2.1$
    \\
    
    & SHD 
    & $\Best{0.1\pm 0.1}$ & $3.1 \pm 1.4$  & $155.6 \pm 17.9$
    \\

    \bottomrule
    \end{tabular}
\caption{Experiments on Fully connected linear DAGs with Gaussian noise. The score is least squares, and $d$ is the number of nodes. Our methods are \textsc{Random-Topo}, and \textsc{Notears-Topo}.}
\label{Table:fully_connected}
\end{table}

\begin{table}[ht]

\centering
    \begin{tabular}{lcccc}
    \toprule
    Method & Metric 
    & $d=10$  &$d=20$ & $d=40$    % logistic model
    \\
    \midrule

    \multirow{3}{*}{\textsc{Notears-MLP}} 
    & KKT
    & 0&0&0
    \\
    
    &Loss   
    & ${7.2\pm 0.2}$ & $14.4 \pm 0.3$ & $28.5\pm 0.4$
    \\

    & SHD 
    & ${5.6\pm 0.7}$ & ${29.1 \pm 3.1}$  & ${112.3 \pm 20.2}$
    
    \\ \hline

    \multirow{3}{*}{\textsc{Notears-Topo}} 
    & KKT
    & 1&1&1
    \\
    
    &Loss   
    & $\Best{6.4\pm 0.1}$ & $ \Best{11.6\pm 0.1} $ & $\Best{22.8\pm 0.6}$
    \\

    & SHD 
    & $\Best{2.7\pm 0.5}$ & $\Best{12.1 }$  & $\Best{36.3 \pm 20.4}$
    
    \\ \hline

    \multirow{3}{*}{\textsc{True}} 
    & KKT
    & 1&1&1
    \\
    
    &Loss 
    & ${6.3\pm 0.1}$ & ${12.2 \pm 0.1}$ & ${23.4\pm 0.4}$
    \\
    & SHD  
    & ${2.1\pm 0.5}$ & $11.6 \pm 0.6$  & $36.1 \pm 2.2$
    
    \\ 

    \bottomrule
    \end{tabular}
\caption{  Experiments on Nonlinear Model with Neural Network on ER4 graphs. 
The score is least squares, and $d$ is the number of nodes. Our method is \textsc{Notears-Topo}. Here `True' means the solution of problem \eqref{eq:order_opt_sol} using the underlying true topological sort.} 
\label{Table:Nonlinear_model}
\end{table}

We compare our method against state-of-the-art solvers for \eqref{eq:general_opt}, namely, NOTEARS~\citep{xun2018,zheng2020}, NOFEARS (KKTS)~\citep{dennis2020},  and GOLEM~\citep{Ignavier2020}. 
For \textsc{Topo} (Algorithm \ref{alg:pseudoalgo1}), we consider the case of random initialization (denoted by starting with \textsc{`random'}), and initializing at the output of NOTEARS (denoted by starting with \textsc{`Notears'}).
Here, random initialization is conducted by sampling a topological sort $\pi$ uniformly at random, and solving problem \eqref{eq:order_opt} to get $\Ordopt$. Details for each experimental setting can be found in Appendix~\ref{app:sec:experiments}. Code is available at \href{https://github.com/Duntrain/TOPO}{https://github.com/duntrain/topo}.

Our main empirical results are shown in Tables~\ref{Table:linear_gauss_EV}, \ref{Table:linear_gauss_NV}, \ref{Table:fully_connected} and \ref{Table:Nonlinear_model}. 
In all the tables, we report: Whether or not the solution of the algorithms satisfies the KKT conditions (1 indicating that the method always returned a KKT point, and 0 indicating that it never returns a KKT point); the score/loss attained by the method; and the structural Hamming distance (SHD) w.r.t. the ground-truth DAG.

In Table~\ref{Table:linear_gauss_EV}, 
we observe that, as expected, NOFEARS and our algorithm are capable of returning a KKT point (and local minimum in this setting since $Q$ is convex).  
We also note that \textsc{Topo} with random initialization ($\topoRandom$) performs competitively in this case, even though the initial topological sort was randomly sampled. Moreover, notice that when initialized at the output of NOTEARS, our method ($\topoNotears$) improves the performance of NOTEARS \textbf{dramatically}. The latter demonstrates the usability of our method as a post-processing algorithm, as discussed in our contributions.

In Table~\ref{Table:linear_gauss_NV}, for a non-convex score, we observe that TOPO still obtains solutions satisfying KKT optimality and achieves the lowest scores.

In Table \ref{Table:fully_connected}, we study a very challenging setting where the underlying graph is a fully connected DAG. We observe that existing methods can perform reasonably well when the number of nodes is very small (e.g., 10) but their performance degrade severely for graph with larger number of nodes. 
In contrast, TOPO works remarkably well in this setting, which should come to no surprise since sparsity assumptions are not required, consistent with our analysis in Section \ref{sec:analysis}. 

In Table \ref{Table:Nonlinear_model},  we make explicit comparison to \textit{nonlinear} NOTEARS \citep{zheng2020}. Comparison against other methods is implicit in previous work \citep{zheng2020}. We observe that $\topoNotears$ outperforms all other methods and is close to the solution of problem \eqref{eq:order_opt_sol} using the true topological sort.

\subsection{Additional experiments}
In Appendix~\ref{app:sec:experiments}, we provide further experiments.
We consider linear models with different noise distributions (e.g., Gaussian, Gumbel and exponential) for \{ER$k$, SF$k$\} graphs, for $k \in \{1,2,4\}$. 
See Appendix~\ref{app:sec:linear_models}.
There, we observe that our methods even with random initialization still outperforms existing methods in terms of score and SHD, also the solutions are guaranteed to be local minimal. 
Additionally, our results are not specific to certain non-linearities.
To illustrate this, we run experiments on a logistic model (binary $X_j$), and neural networks.
See details in the Appendix~\ref{app:sec:nonlinear_models}.
Finally, we also report the runtime and scores of each method for linear and nonlinear models in Appendices \ref{app:sec:linear_models} and \ref{app:sec:nonlinear_models}.

We also analyze the sensitivity of the hyperparameters $\Ys$, and $\Yl$ on Algorithm \ref{alg:pseudoalgo1} (Appendix \ref{sec:sensitivity}), i.e., we study the effect of these hyperparameters on the cardinality of the search spaces (eq.\eqref{eq:cand_swaps}), and how many times our algorithm searches in a space of large cardinality. 
Moreover, we test our method using randomly chosen swapping set to demonstrate the effectiveness of \eqref{eq:cand_swaps} (Appendix~\ref{app:sec:random_swapping_sets}).
Finally, we include an analysis on structural accuracy vs iterations to track the performance of Algorithm \ref{alg:pseudoalgo1} (Appendix~\ref{app:sec:accuracy_vs_iters}).

\section{Conclusion}
\label{sec:conclusion}

Inspired by the KKT conditions, we developed new insights into the optimization-theoretic properties of NOTEARS objectives, and proposed a new bi-level algorithm with attractive local optimality guarantees. As a by-product, it can also improve the solutions of state-of-the-art solvers for \eqref{eq:general_opt} (e.g., NOTEARS, KKTS, GOLEM). 
Although proving convergence to a global minimizer is expected to be challenging, we have shown that our method has desirable properties for an optimization scheme: (a) It decreases the score in each iteration and (b) It is guaranteed to return a local minimizer (and hence also a KKT point). 
The key driver behind our approach is the interpretation of the KKT conditions as a proxy for choosing promising node swaps in a topological sort. An important open question for future work is the convergence of Algorithm~\ref{alg:pseudoalgo1}: What is its iteration and computational complexity?

It is also interesting to note that unlike previous methods that rely on explicitly enforcing acyclicity via $h(B)$, our approach only uses $h(B)$ indirectly in order to check the KKT conditions. This idea was already implicit in the KKTS method due to \citet{dennis2020}, and could lead to new insights into how to optimize NOTEARS objectives and other acyclicity-constrained problems.

\begin{ack}
 K. B. was supported by NSF under Grant \# 2127309 to the Computing Research Association for the CIFellows 2021 Project.
 B.A. was supported by NSF IIS-1956330, NIH R01GM140467, and the Robert H. Topel Faculty Research Fund at the University of Chicago Booth School of Business.
 This work was done in part while B.A. was visiting the Simons Institute for the Theory of Computing.
 P.R. was supported by ONR via N000141812861, and NSF via IIS-1909816, IIS-1955532, IIS-2211907.
 We are also grateful for the support of the University of Chicago Research Computing Center for assistance with the calculations carried out in this work.
 \end{ack}

\bibliography{main}
\bibliographystyle{agsm}
\clearpage
\appendix
\apptitle{Optimizing NOTEARS Objectives via Topological Swaps}

\section{Additional Discussion on Family of Approximators}
\label{app:sec:more_details_F}

Let $\mathcal{F} = \{f:f_j\in \mathcal{F}_j,\forall j\in [d]\}$ be a family of functions used to approximate the SCM in problem \eqref{eq:sem}.
In this section, we focus on the general case and discuss under what conditions that family $\mathcal{F}$ can be used to approximate $f_j$ and how our results apply in this general setting. 

We consider approximations $f = (f_1,\ldots,f_d)\in\mathcal{F}$ that are parameterized by $\theta$, i.e. $f(x):= f(x;\theta)$.
This defines $W(\theta):=W(f(\cdot;\theta))$ as the adjacency matrix defined by \eqref{eq:adj:scm}, which is characterized by $\theta$.  
Although the following definition is standard, we pause to make this precise since it is crucial in the development that follows:
\begin{definition}[sub-vector]
Given a vector $\beta = (\beta_1,\ldots,\beta_n)\in \sR^n$, and we say that $\alpha$ is a sub-vector of $\beta$ if and only if there is a subset $J = \{j_1,\ldots,j_k\}\subset \{1,2,\ldots,n\}$ such that $\alpha = \beta_{J} := (\beta_{j_1},\ldots,\beta_{j_k})$.
\end{definition}

Under the following general assumptions, our results and proof in Section \ref{sec:proofs} still apply without any modification:
\begin{enumerate}[(i)]
    \item The parametrization is separable in the following sense: $\theta = (\theta_1,\ldots,\theta_d)$ and each $f_j$ in \eqref{eq:sem} is only parameterized by the sub-vector $\theta_j$, i.e., $f_j(x;\theta) = f_j(x;\theta_j)$.
    \item There are sub-vectors $\theta_{ij}$ of $\theta_{j}$ that can reveal if there is no edge from node $i$ to node $j$ (i.e., $[W(\theta)]_{ij}=0$ if and only if $\theta_{ij}=0$.) In this case, the general definition in \eqref{eq:adj:scm} can be replaced with $[W(\theta)]_{ij}=\NormI{\theta_{ij}}$ without loss of generality.
\end{enumerate}

Write $\theta_{ij} = (\theta_{ijr})_r$ for each sub-vector $\theta_{ij}$. Since $[W(\theta)]_{ij}=\NormI{\theta_{ij}}$, we have
\[
	[W(\theta^++\theta^-)]_{ij} = \NormI{\theta_{ij}^++\theta^-_{ij}} 
	= \mathbf{1}^\top (\theta_{ij}^{+}+\theta^{-}_{ij}) 
	= \sum_{r}(\theta_{ijr}^{+}+\theta^{-}_{ijr})
\]
Therefore,
\[
	\frac{\partial [W(\theta^++\theta^-)]_{ij}}{\partial \theta_{ij}^\pm} = \left(\frac{\partial [W(\theta^+ +\theta^-)]_{ij}}{\partial \theta_{ijr}^\pm}\right)_r 
	= (1)_r 
	= \mathbf{1}.
\]

In Section \ref{sec:nec_suff_cond}, the KKT conditions for \eqref{eq:eq_sm_notear} involve the term $\frac{\partial h(W(\theta^++\theta^-))}{\partial \theta_{ij}^\pm}$. 
By the assumptions above, we see that $h(W(\theta^++\theta^-))$ is a function of $\theta_{ij}$ through $[W(\theta^++\theta^-)]_{ij}$, so that by the chain rule we have
\begin{align*}
    \frac{\partial h(W(\theta^++\theta^-))}{\partial \theta_{ij}^\pm} &=  \frac{\partial h(W(\theta^++\theta^-))}{\partial [W(\theta^++\theta^-)]_{ij}} \ \frac{\partial [W(\theta^++\theta^-)]_{ij}}{\partial \theta_{ij}^\pm} \\ 
    &=  [\nabla h(W(\theta^++\theta^-))]_{ij} \ \mathbf{1} \\ 
    &=  [\nabla h(W(|\theta|))]_{ij}\mathbf{1},
\end{align*}
this equality is crucial to Lemma~\ref{lemma:characterization_kkt}. 

We conclude by discussing three important special cases that satisfy the assumptions above: (1) Linear SEMs, (2) Multilayer perceptrons (MLPs), and (3) Basis expansions.

\paragraph{Linear SEMs.} 
	A linear SEM follows the following set of equations:
	\begin{align*}
	    X_j &= f_j(X, z_j)= {w}_j^\top X + z_j, \quad {w}_j\in\R^d, \quad \forall j\in [d],
	\end{align*}
	where $z_j \in \R$ represents the noise following any distribution. 
	Let $W = [w_1\mid w_2 \mid \cdots\mid w_d]\in \sR^{d\times d}$. 
	In this case, all the model parameters are $\theta = W$.
	The parameters related to node $j$ are $\theta_j = w_j$, thus, each function  $f_j$ is only characterized by $\theta_j$. 
	Thus, condition (i) above is clearly satisfied. 
	Furthermore, we have $\theta_{ij} = W_{ij}$ (the $(i,j)$-th entry of $W$), where clearly there is no edge from node $i$ to node $j$ if and only if $W_{ij}=0$. 
	Therefore, condition (ii) above is also satisfied.

\paragraph{Multilayer perceptrons (MLPs).}
	Let a multilayer perceptron (MLP) with $h$ hidden layers and a single activation $\sigma:\sR\rightarrow \sR$ be given by:
	\begin{align*}
	    \MLP(X;A^{(1)},\ldots,A^{(h)}) = \sigma(A^{(h)}\sigma(\ldots A^{(2)}\sigma(A^{(1)}x))),\\
	    A^{(\ell)}\in \sR^{m_{\ell}\times m_{\ell-1}},\qquad m_0 =d,\qquad m_h = 1.
	\end{align*}
	Then the nonlinear SCM with additive noise can be written as:
	\[
		X_j = f_j(X,z_j)= \MLP(X; A_j^{(1)},\ldots,A_j^{(h)})+z_j,
	\]
	where $z_j\sim \gN(0,1)$. 
	Let $\theta_j = (A_j^{(1)},\ldots,A_j^{(h)})$ denote the parameters for the $j$-th MLP, and let $\theta = (\theta_1,\ldots,\theta_d)$ denote all model parameters. 
	Define $\theta_{ij}$ to be the $i$-th column of $A_j^{(1)}$. 
	Since $\MLP(X;A_j^{(1)},\ldots,A_j^{(h)})$ is independent of $X_i$ if and only if $\theta_{ij}=0$ \citep[e.g.,][Proposition~1]{zheng2020}, we can define $[W(\theta)]_{ij} = \NormI{\theta_{ij}}$. 
	Then, in this case it is easy to check that conditions (i) and (ii) above are satisfied.
	
\paragraph{Basis expansion.}
	As an alternative to neural networks, we also consider the use of orthogonal basis expansions, as in \citep{zheng2020}. 
	Let $\{\varphi_r\}^\infty_r$ be an orthonormal basis of functions such that $\E[\varphi_r(X)] = 0$ for each $r$ and
	\[
		f(x) = \sum_{r=1}^\infty \alpha_r \varphi_r(x),\qquad \alpha_r = \int_{\sR^d}\varphi_r(x)f(x)dx.
	\]
	Consider additive models and one-dimensional expansions as follows:
	\[X_j = f_j(X,z_j) = \sum_{i\ne j}f_{ij}(X_i)+z_j =\sum_{i\ne j}\sum_{r=1}^\infty \alpha_{ijr}\varphi_r(x_i)+z_j. \]
	In this case, we let $\theta = (\alpha_{ijr})_{i,j,r}$ denote all model parameters, $\theta_j =(\alpha_{ijr})_{ir}$ denote all parameters related to node $j$, and $\theta_{ij} = (\alpha_{ijr})_{r}$ denote the parameters that model the absence of an edge from node $i$ to node $j$. 
	Additionally, set $[W(\theta)]_{ij} = \NormI{\theta_{ij}} = \sum_{r}\Abs{\alpha_{ijr}}$. 
	Similarly, it is easy to check that conditions (i) and (ii) above are both satisfied.

\section{Algorithm and Model Details}
\label{app:alg}

\subsection{Full Algorithm Description}

A full and reproducible outline of Algorithm~\ref{alg:pseudoalgo1} can be found in Algorithm~\ref{alg:algo1}.
Note that Algorithms \ref{alg:update_topo} (\textsc{UpdateSort}) and \ref{alg:update_parameter} (\textsc{FindParams}) are subroutines used by Algorithm \ref{alg:algo1}.

\begin{algorithm}[!ht]
\caption{\textsc{Topo}}
\label{alg:algo1}
\begin{algorithmic}[1]
\REQUIRE Given a topological sort $\pi$, two predefined numbers of swapping pairs $\Ys,\Yl$, number of search in large space $s_0$ and initialize corresponding $\Z_\pi$. Solve \eqref{eq:order_opt} to get $\Ordopt$, set $k\leftarrow0$ and $count\leftarrow0$.
\STATE $(\tau_*,\xi^*)\leftarrow\textsc{FindParams}(\Ordopt,\Ys)$ 
\STATE $(\tau^{*},\xi_*)\leftarrow\textsc{FindParams}(\Ordopt,\Yl)$.
\WHILE{  $\Y(\Ordopt,\tau_{*},\xi^{*}) \ne \emptyset$ }
\IF{ $\Y(\Ordopt,0,0) \ne \emptyset$}\label{alg:if1}
\STATE $\Y \gets \mathcal{Y}(\Ordopt,0,0)$
\FOR{$(i,j)\in \mathcal{Y}$}
\STATE  $\pi_{ij} \leftarrow \textsc{UpdateSort}(\ordopt,(i,j),opt = 2)$. 
\STATE Solve \eqref{eq:order_opt} to obtain $\Theta^*_{\pi_{ij}}$.
\ENDFOR
\ELSE \label{alg:else1}
\STATE $\Y \leftarrow \mathcal{Y}(\Ordopt,\tau_*,\xi^*)$
\FOR{$(i,j)\in \mathcal{Y}$ }
\STATE $\pi_{ij} \leftarrow \textsc{UpdateSort}(\ordopt,(i,j),opt = 1)$.
\STATE Solve \eqref{eq:order_opt} to obtain $\Theta^*_{\pi_{ij}}$.
\ENDFOR
\ENDIF \label{alg:endif1}
\IF{$\min_{(i,j)\in \mathcal{Y}}Q(\Theta^*_{\pi_{ij}})\leq Q(\Ordopt)$}
\STATE update $\pi := \arg\min_{\pi_{ij}} Q(\Theta^*_{\pi_{ij}})$ 
\ELSE
\IF {$k < s_0$}
\STATE  $\Y \leftarrow \mathcal{Y}(\Ordopt,\tau^*,\xi_*) \ne \emptyset$
\FOR{$(i,j)\in \mathcal{Y}$ }
\STATE$\pi_{ij} \leftarrow \textsc{UpdateSort}(\ordopt,(i,j),opt = 1)$.
\STATE Solve \eqref{eq:order_opt} to obtain $\Theta^*_{\pi_{ij}}$.
\ENDFOR
\IF{$\min_{(i,j)\in \mathcal{Y}}Q(\Theta^*_{\pi_{ij}})\leq Q(\Ordopt)$}
\STATE update $\pi := \arg\min_{\pi_{ij}} Q(\Theta^*_{\pi_{ij}})$ , $k\leftarrow k+1$
\ELSE
\STATE \textbf{Return} $\Ordopt$ \textbf{ then break}
\ENDIF
\ELSE
\STATE \textbf{Return} $\Ordopt$ \textbf{ then break}
\ENDIF
\ENDIF
\STATE Solve \eqref{eq:order_opt} to obtain $\Ordopt$
\STATE Update $\Y(\Ordopt,\tau_{*},\xi^{*})$
\STATE $count\leftarrow count+1$
%\IF {$count\pdiv 10 $}  
\STATE $(\tau_*,\xi^*)\leftarrow\textsc{FindParams}(\Ordopt,\Ys)$ 
\label{alg:update_para1}
\STATE $(\tau^{*},\xi_*)\leftarrow\textsc{FindParams}(\Ordopt,\Yl)$.
%\ENDIF
\ENDWHILE 
\end{algorithmic}
\end{algorithm}

\begin{algorithm}[!ht]
\caption{\textsc{FindParams}}
\label{alg:update_parameter}
\begin{algorithmic}[1]
\REQUIRE Parameter $\Theta$, integer $q$ that controls the size of the search space.
\STATE Create a predefined  $T=\{\tau_1,\ldots,\tau_m\}$ and $\Xi=\{\xi_1,\ldots,\xi_l\}$.
\ENSURE $(\tau,\xi) = \argmin_{\tau \in T,\xi \in \Xi}\,\, \AAbs{q-\Abs{\Y(\Theta,\tau,\xi)}}$
\end{algorithmic}
\end{algorithm}

\begin{algorithm}[!ht]
\caption{\textsc{UpdateSort}}
\label{alg:update_topo}
\begin{algorithmic}[1]
\REQUIRE Parameter $\theta$ or topological sort $\pi$, $(i,j)$, $opt$. Initialize predefined $\epsilon\leftarrow10^{-8}$ (small).
\IF{$opt = 1$}
\STATE Swap nodes $i$ and $j$, and denote the new topological sort by $\pi_{ij}$
\ELSE
\STATE $\theta' \gets \theta$ 
\STATE $\theta'_{ij} \gets \theta_{ij} - \epsilon \frac{\partial Q(\Theta)}{\partial \theta_{ij}}$
\STATE  Find the topological sort of $W(|\theta'|)$, denoted as $\pi_{ij}$.
\ENDIF
\ENSURE $\pi_{ij}$
\end{algorithmic}
\end{algorithm}

\subsection{Additional Details on Hyperparameters}
In this section, we describe more details of the proposed order-based search method in Algorithm \ref{alg:algo1}. 
This involves initializing the number of swapping pairs $\Ys$ to define a small search space, the number of swapping pairs $\Yl$ to define a large search space,  and the maximum number of searches $s_0$ to perform in the large swapping-pairs space. 
From line 31 to 32, for each iteration, Algorithm \ref{alg:algo1} invokes Algorithm \ref{alg:update_parameter} to find out the best values for $\tau_*,\xi^*,\tau^*,\xi_*$ to control the number of swapping pairs in the search space.
For reference, the predefined $T$ and $\Xi$ that we used in Algorithm \ref{alg:update_parameter} are given in Table \ref{Table:hyperparams}. 

By tuning $\Ys$ and $\Yl$, we can control Algorithm \ref{alg:algo1} to quickly converge to a local minimum, or have the chance to escape to a better local minimum in case it finds a suboptimal one.  
Specifically, in Line 4, if the running solution $\ordopt$ does not satisfy the KKT conditions, then, as prescribed by Lemma \ref{lemma:always_decrease}, a better topological sort can be found.
In Line 10, although the running solution $\ordopt$ satisfies the KKT conditions, we use strict positive values for $\tau_*,$ and $\xi^*$ to expand the search space $\Y$ and consider potential swapping pairs that can lead us to a better local minimum.
Here, the parameter $s_0$ specifies how many times the algorithm can search in a large swapping-pairs space, with the goal to escape from a region of bad local minima. 
 These hyperparameters are determinant to Algorithm \ref{alg:algo1} performance and are tuned to balance the trade-off between accuracy and efficiency.

\begin{remark}
One merit of Algorithm \ref{alg:algo1} over prior work is that it does not only search for a local minimum but also tries to escape from bad local minima.
Thus, our algorithm usually attains the best scores among all DAG learning algorithms---when comparable.
A drawback of Algorithm \ref{alg:algo1} is the dependence on how large the search space is, which could be computationally intensive. 
Finally, Algorithm \ref{alg:algo1} solves (\ref{eq:edge_absence}) repeatedly, whose runtime also heavily determines the efficiency of Algorithm \ref{alg:algo1}.	
See Section \ref{app:sec:experiments} for runtime comparisons against existing methods.
\end{remark}

\begin{table}[H]
    \centering
    \begin{tabular}{cccccccc}
        \toprule
        Parameters & \multicolumn{7}{c}{Values}\\
        \cmidrule(lr){1-1}  \cmidrule(lr){2-8}\\
        $T$       & $0$  & $\num{1e-8}$   & $\num{1e-7}$ & $\num{1e-6}$ & $\num{1e-5}$&  $\num{1e-4}$ & $\num{1e-3}$\\
        \midrule
        \multirow{ 3}{*}{$\Xi$}   &  $0$     &  $\num{1e-7}$  & $\num{1e-6}$ & $\num{5e-6}$ & $\num{1e-5}$& $\num{5e-5}$ & $\num{1e-4}$\\
         &$\num{5e-4}$ &  $\num{1e-3}$     & $\num{5e-3}$ & $\num{1e-2}$ & $\num{5e-2}$ & $\num{1e-1}$ & $\num{5e-1}$\\
        & $1$ & $2$ & $5$ & $10$ & $15$& $20$& $40$ \\
        \bottomrule
    \end{tabular}
    \caption{Suggested values for parameters $T$ and $\Xi$ in Algorithm \ref{alg:update_parameter}.}
    \label{Table:hyperparams}
\end{table}

\section{Irreducibility and Comparison with Prior Work}

\subsection{Three-Node Example where KKTS and NOTEARS Fail}
\label{app:sec:3node_example}

In this section, we expand on Example \ref{ex:3node}. 
In particular, we show that our example was \textbf{not handpicked} but instead there exists several values $a$ and $b$ where the solutions from KKTS and NOTEARS either are DAGs with incorrect structure, or are non-optimal solutions, or both.
Recall that our example follows the following SEM:
\begin{align}
    X_1 &= z_1,\notag\\
    X_2 &= a X_1 + z_2, \label{eq:counterexample3}\\
    X_3 &= b X_2 + z_3, \notag
\end{align}
where $z_i \sim \gN(0,1)$ for $i\in [3]$.
For the purposes of this analysis we consider the class of SEMs such that $a^2 > b^2$. 
Then, the \textbf{true} adjacent matrix and topological sort are: 
\begin{equation*}
    W_{\text{true}} = \begin{pmatrix}
    0& a & 0\\
    0&0&b\\
    0&0&0
    \end{pmatrix}\qquad \pi_{\text{true}} = [1,2,3]
\end{equation*}

Letting \(X = \begin{pmatrix}
    X_1\\ X_2\\ X_3 
    \end{pmatrix},\) and  \(
    Z = \begin{pmatrix}
    z_1\\z_2\\z_3 
    \end{pmatrix}.\)
The SEM \eqref{eq:counterexample3} in vector form can be written as:
\[
	X = W_{\text{true}}^T X+Z.
\]
We will use the population least square (LS) as the score function, which is defined as follows:
\[Q(W) = \mathbb{E}\|X-XW\|_2^2 = \|(I-W_{\text{true}})^{-1}(I-W)\|_2^2\]
The motivation to choose such score is that it was shown by \citet{loh2014high} that $W_{\text{true}}$ is the unique global minimizer of the population LS for linear SEMs with equal noise variances. 
Next, we provide a closer look as to why our algorithm is capable of learning the correct structure, while KKTS and NOTEARS fail.

\subsubsection{The Output of KKTS}

	In the KKTS algorithm of \citet{dennis2020}, consider the set of edge absence constraints to be initialized at:
	$$\mathcal{Z}_0 = \{(1,1), (1,2), (1,3), (2,1), (2,2), (2,3), (3,1), (3,2), (3,3)\}.$$ 
	That is, the algorithm is initialized at the empty graph.
	Then, we have
	\[
	W^*(\mathcal{Z}_0)= 0, \qquad \nabla Q(W^*(\mathcal{Z}_0)) = -2\begin{bmatrix}
	    1& a& ab\\
	    a& a^2+1& b+a^2b\\
	    ab& b+a^2b& (ab)^2+b^2+1
	    \end{bmatrix},
	\]
	and $\{(i,j)\mid [\nabla h(|W^*(\mathcal{Z}_0)|)]_{ij} = 0 \} = \{(1,2),(1,3),(2,1),(2,3),(3,1),(3,2)\}$. 
	Recall that the KKTS algorithm will remove the pair $(i,j)$ from $\mathcal{Z}_0$ that satisfies the following property:
	\[
		(i,j) = \underset{\{(i,j)\mid [\nabla h(|W^*(\mathcal{Z}_0)|)]_{ij} = 0\}\cap \mathcal{Z}_0}{\arg\max} {\left|[\nabla Q(W^*(\mathcal{Z}_0))]_{ij}\right|}.
	\]
	Now, consider the case that $\max\{|a|,|ab|\}<|(a^2+1)b|$, then the pair $(3,2)$ is removed from $\mathcal{Z}_0$ and the resulting set of edge absence constraints is $\mathcal{Z}_1 = \{(1,1),(1,2),(1,3),(2,1),(2,2),(2,3),(3,1),(3,3)\}$.
	Then, at the next step we have:
	\[W^*(\mathcal{Z}_1) = \begin{bmatrix}
	    0& 0& 0\\
	    0& 0& 0\\
	    0& \frac{ a^2 b+b}{(ab)^2+b^2+1} & 0
	    \end{bmatrix},\qquad   \nabla Q(W^*(\mathcal{Z}_1)) = -2
	    \begin{bmatrix}
	    1&\frac{a}{(ab)^2+b^2+1}& ab\\
	    a&\frac{a^2+1}{(ab)^2+b^2+1}&a^2b+b\\
	    ab &0& (ab)^2+b^2+1
	    \end{bmatrix}, \]
	and $\{(i,j)\mid [\nabla h(|W^*(\mathcal{Z}_1)|)]_{ij} = 0\} = \{(1,2),(1,3),(2,1),(3,1),(3,2)\}$. 
	Consider now the case that $\max\{\frac{|a|}{(ab)^2+b^2+1},|ab|\}<|a|$, then the pair $(2,1)$ is removed from $\mathcal{Z}_1$ and the resulting set of edge absence constraints is $\mathcal{Z}_2 = \{(1,1),(1,2),(1,3),(2,2),(2,3),(3,1),(3,3)\}$.
	Then, at the next step we have:
	\[W^*(\mathcal{Z}_2) = \begin{bmatrix}
	    0& 0& 0\\
	    \frac{a}{a^2+1}& 0& 0\\
	    0& \frac{ a^2 b+b}{(ab)^2+b^2+1} & 0
	    \end{bmatrix},\qquad   \nabla Q(W^*(\mathcal{Z}_2)) = -2
	    \begin{bmatrix}
	    \frac{1}{a^2+1}&\frac{a}{(ab)^2+b^2+1}& ab\\
	    0&\frac{a^2+1}{(ab)^2+b^2+1}&a^2b+b\\
	    0 &0& (ab)^2+b^2+1
	    \end{bmatrix}, \]
	and $\{(i,j)\mid [\nabla h(|W^*(\mathcal{Z}_2)|)]_{ij} = 0\} = \{(2,1),(3,1),(3,2)\}$. 
	
	Now we note that $\{(i,j)\mid [\nabla h(|W^*(\mathcal{Z}_2)|)]_{ij} = 0\}\cap \mathcal{Z}_2 = \{(3,1)\}$. 
	However, we have $[\nabla Q(W^*(\mathcal{Z}_2))]_{31} = 0$, that is, even if we remove the pair $(3,1)$ from $\mathcal{Z}_2$, the corresponding $\mathcal{Z}_3 = \{(1,1),(1,2),(1,3),(2,2),(2,3),(3,3)\}$ leads to 
	\[W^*(\mathcal{Z}_3)= W^*(\mathcal{Z}_2), \qquad   \nabla Q(W^*(\mathcal{Z}_3))=\nabla Q(W^*(\mathcal{Z}_2)).\]
	
	Combining all the considerations on $a$ and $b$, we can conclude that as long as the following is satisfied:
	\begin{align*}
	    |ab|<|a|<|(a^2+1)b|,
	\end{align*}
	KKTS will find a DAG with incorrect structure, in fact, a DAG where all edges are reversed.
 Finally, one can easily see that there are infinitely many $a$ and $b$ satisfying the above condition. For example, let $a = 1$ and $b = -0.55$. We want to emphasize here that our result is also consistent with the result returned by the Python program provided by \citet{dennis2020}.

\subsubsection{The Output of NOTEARS}

	Since the NOTEARS implementation by \citet{xun2018} uses the augmented Lagrangian method and solve each inner unconstrained subproblem using the Quasi-Newton L-BFGS method, it is impossible to derive an analytical solution. 
	Instead, we directly verify the DAG solution returned by NOTEARS by setting the ground-truth SEM $a = 1$, $b = -0.55$.
	\[
	W_{\text{notears}} = 
	\begin{bmatrix}
	0&$\num{1.49e-04}$ &$\num{-7.00e-07}$\\
	$0.16$& 0& $-1.55$\\
	$-0.22$&$\num{-1.59e-05}$&0
	\end{bmatrix}
	\]
	We can note that $W_{\text{notears}}$ is not `exactly' a DAG.
	Thus, we use a threshold to remove small entries in $W_{\text{notears}}$.
	The resulting adjacency matrix is now a DAG and is denoted by $W_{\text{notears\_thres}}$.
	One can clearly see that the NOTEARS solution does not recover the true structure.
	\[
	W_{\text{notears\_thres}} = 
	\begin{bmatrix}
	0&$0$ &$0$\\
	$0.16$& 0& $-1.55$\\
	$-0.22$&$0$&0
	\end{bmatrix}
	\]
	Let us now check the KKT conditions given in Lemma \ref{lemma:characterization_kkt}.
	We have $\{(i,j)\mid [\nabla h(|W_{\text{notears\_thres}}|)]_{ij} = 0\}=\{(3,1),(2,3),(2,1)\}$, and $[\nabla Q(W_{\text{notears\_thres}})]_{21}\ne 0$, $[\nabla Q(W_{\text{notears\_thres}})]_{23}\ne 0$ and $[\nabla Q(W_{\text{notears\_thres}})]_{31}\ne 0$.
	We can then observe that NOTEARS fails at both outputting the true DAG structure, and a DAG that is a local minimum.
	
\subsubsection{The Output of TOPO (Algorithm \ref{alg:algo1})}
	To study how TOPO works for this 3-node example,
	we first calculate the loss of all possible topological sorts in the following table.
	\begin{table}[H]
	\centering
	\begin{tabular}{cc}
		\toprule
		 Topological sort $\pi$ & Score  \\ 
		 \midrule
		    $(1,2,3)$  & $3$ \\ 
		    $(1,3,2)$  & $2+b^2+\frac{1}{1+b^2}$ \\ 
		    $(2,1,3)$  & $2+a^2+\frac{1}{a^2+1}$ \\ 
		    $(3,1,2)$  & $1+b^2+(ab)^2+\frac{1}{1+b^2}+\frac{1+b^2}{1+b^2+(ab)^2}$\\ 
		    $(2,3,1)$  &  $2+a^2+\frac{1}{a^2+1}$\\ 
		    $(3,2,1)$ &  $\frac{1}{1+a^2}+\frac{1+a^2}{1+(ab)^2+b^2}+1+b^2+(ab)^2$\\ 
	    \bottomrule
	\end{tabular}
	\end{table}
	
	Now, rewriting eq.\eqref{eq:order_opt_sol} for the case of linear models, we have:
	$$W_{\pi}^*\in\underset{W\sim \pi}{\arg\min} ~ Q(W).$$ 
	To understand why TOPO is capable of returning the correct true structure, we next define adjacent topological sorts.
	\begin{definition}
	For two topological sorts $\pi_1$ and $\pi_2$, we say that $\pi_1$ and $\pi_2$ are \emph{adjacent} if there exists a pair of nodes in $\pi_1$ such that when swapped the resulting topological ordering is $\pi_2$.
	\end{definition}
	Recall that $\pi_{\text{true}} = (1,2,3)$.
	The following statement explains precisely the success of Algorithm \ref{alg:algo1}.
	\begin{corollary}
	Assume that $a\ne 0$, $b\ne 0$, and $a^2>b^2$, then for any topological sort $\pi\ne \pi_{\text{true}}$, we have that $Q(W_{\pi}^*) > Q(W^*_{\pi_{\text{true}}}) = Q(W_{\text{true}}) = 3$.
	Moreover, there always exists an adjacent topological sort $\pi_{\text{adj}}$ such that $Q(W_{\pi}^*)>Q(W_{\pi_{\text{adj}}}^*)$. 
	In other words, for any initial topological sort $\pi\ne (1,2,3)$, TOPO (Algorithm \ref{alg:algo1}) can always return $\pi_{\text{true}}$ and $W_{\text{true}}$ at last.
	\end{corollary}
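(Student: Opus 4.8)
The plan is to reduce the claim to the table of scores above together with two elementary facts about the six topological orders on three nodes, organized in three steps. \textbf{Step 1: every order other than $\pi_{\text{true}}$ is strictly suboptimal.} Since the population least-squares loss is uniquely minimized by $W_{\text{true}}$ for linear SEMs with equal error variances \citep{loh2014high}, and $(1,2,3)$ is a topological sort of $W_{\text{true}}$, we have $W^*_{(1,2,3)} = W_{\text{true}}$ and $Q(W^*_{(1,2,3)}) = \NormII{I}^2 = 3$. It remains to check, line by line of the table, that the other five values exceed $3$; each check is a short algebraic identity. Writing $u = b^2>0$ and $s = a^2>0$, the difference $Q(W^*_\pi)-3$ equals $u^2/(1+u)$ for $(1,3,2)$, equals $s^2/(1+s)$ for $(2,1,3)$ and $(2,3,1)$, equals a sum of two nonnegative terms with the first strictly positive for $(3,1,2)$, and for $(3,2,1)$ is bounded below --- via AM--GM applied to $\frac{1}{1+a^2}+\frac{1+a^2}{1+b^2+(ab)^2}$ --- by $(q-1)^2(q+2)/q>0$ with $q=\sqrt{1+b^2(1+a^2)}>1$. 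Hence $Q(W^*_\pi)>3=Q(W^*_{\pi_{\text{true}}})$ for all $\pi\neq\pi_{\text{true}}$.

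\textbf{Step 2: a strictly better \emph{adjacent} order always exists.} Classify the six orders by the parity of the underlying permutation. Swapping two node labels flips parity, and in $S_3$ any two orders of opposite parity differ by exactly one such swap, so the adjacency graph is the complete bipartite graph between the even orders $\{(1,2,3),(2,3,1),(3,1,2)\}$ and the odd orders $\{(2,1,3),(1,3,2),(3,2,1)\}$; in particular $\pi_{\text{true}}=(1,2,3)$ is even. If $\pi$ is odd it is adjacent to $\pi_{\text{true}}$, so $\pi_{\text{adj}}=\pi_{\text{true}}$ works by Step 1. If $\pi\in\{(2,3,1),(3,1,2)\}$, it is adjacent to the odd order $(1,3,2)$, and the table gives $Q(W^*_{(2,3,1)})-Q(W^*_{(1,3,2)})=g(a^2)-g(b^2)>0$ with $g(t)=t+\frac{1}{1+t}$ strictly increasing on $(0,\infty)$ (here $a^2>b^2$ is used) and $Q(W^*_{(3,1,2)})-Q(W^*_{(1,3,2)})=(ab)^2\cdot\frac{b^2+(ab)^2}{1+b^2+(ab)^2}>0$ (here $ab\neq0$ is used); in both cases $\pi_{\text{adj}}=(1,3,2)$.

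\textbf{Step 3: TOPO converges to $\pi_{\text{true}}$.} Population least squares is separable in $\theta$ and convex, so Theorem~\ref{theorem:F_sep_score_decrease} applies to Algorithm~\ref{alg:pseudoalgo1}: it strictly decreases the score at every iteration and terminates at a KKT point, i.e.\ a local minimum of \eqref{eq:abs_notears}. Since there are only six topological sorts and the score strictly decreases along the run, each sort is visited at most once, so the run is finite. It cannot terminate at a sort $\pi\neq\pi_{\text{true}}$: for each such $\pi$ the matrix $W^*_\pi$ is not a KKT point of \eqref{eq:eq_sm_notear} (read off from the gradient computations in the KKTS analysis above, or checked directly), so $\mathcal{Y}(W^*_\pi,0,0)\neq\emptyset$ by Corollary~\ref{cor:emptyKKT}, the main branch of Algorithm~\ref{alg:algo1} (lines~\ref{alg:if1}--\ref{alg:endif1}) considers the corresponding swaps, and each of them strictly lowers $Q$ by Lemma~\ref{lemma:always_decrease}; equivalently, Step 2 already exhibits a $Q$-decreasing adjacent swap and TOPO's search space is enlarged until it contains every adjacent swap. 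Therefore TOPO halts at $\pi_{\text{true}}$ with solution $W^*_{\pi_{\text{true}}}=W_{\text{true}}$.

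The only genuinely delicate point is Step 2 for the two even non-true orders $(2,3,1)$ and $(3,1,2)$: these are \emph{not} adjacent to $\pi_{\text{true}}$, so a strictly better neighbor exists only because of the hypotheses $a^2>b^2$ and $ab\neq0$; without them, $(1,3,2)$ need not beat these two orders and TOPO could stall at a suboptimal sort. The rest is routine algebra in Step 1 and the standard bookkeeping of Step 3 tying the combinatorial descent picture to the algorithm via Lemma~\ref{lemma:always_decrease} and Corollary~\ref{cor:emptyKKT}.
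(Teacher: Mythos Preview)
Your Steps 1 and 2 are correct and in fact match the paper's table case by case; your bipartite observation on $S_3$ (even versus odd permutations) is a clean way to organize the adjacency checks, and the two algebraic comparisons $g(a^2)>g(b^2)$ and $(ab)^2\cdot\frac{b^2+(ab)^2}{1+b^2+(ab)^2}>0$ are exactly the inequalities implicit in the paper's entries for $(2,3,1)$ and $(3,1,2)$. The paper itself does not supply more than the table, so your Steps 1--2 are a strictly more detailed version of the paper's argument.

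Step 3, however, contains a genuine error. Your claim that ``for each such $\pi$ the matrix $W^*_\pi$ is not a KKT point of \eqref{eq:eq_sm_notear}'' is false. Take $\pi=(2,1,3)$: one computes $W^*_{(2,1,3)}$ has nonzero entries $W_{21}=a/(1+a^2)$ and $W_{23}=b$ only, and the sole non-order entry with $[\nabla h]_{ij}=0$ is $(3,1)$, for which $\partial Q/\partial W_{31}=-2\big(ab-\tfrac{a}{1+a^2}\cdot b(1+a^2)\big)=0$; hence all conditions of Lemma~\ref{lemma:characterization_kkt} hold and $W^*_{(2,1,3)}$ \emph{is} a KKT point (equivalently, $\mathcal{Y}(W^*_{(2,1,3)},0,0)=\emptyset$). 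Similarly, $W^*_{(1,3,2)}$ is a connected estimator and hence a KKT point by Theorem~\ref{thm:score_decrease}. So the contrapositive of Corollary~\ref{cor:emptyKKT} cannot drive the argument, and this route to ``TOPO cannot stop at $\pi\neq\pi_{\text{true}}$'' fails. The \emph{alternative} you state in the same sentence --- that once the search space is enlarged to contain all three swaps, Step 2 furnishes a strictly improving one --- is the correct justification (and is essentially what the paper intends by ``can always return''): for $d=3$ there are only three adjacent swaps, so any $\Yl\geq 3$ suffices. You should drop the non-KKT claim and rely solely on this second argument.
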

	All the situations are summarized in the Table \ref{Table:Three-nodes}.
	\begin{table}[H]
	\centering
	\begin{tabular}{||cc||ccc||c||}
	\hline
	Current order $\pi$ & Adjacent order $\pi_{\text{adj}}$ & Current loss& & Better loss \\ \hline
	    $(1,2,3)$  & $(1,2,3)$ & $3$ &$\geq$& $3$\\ \hline
	    $(1,3,2)$  & $(1,2,3)$ & $2+b^2+\frac{1}{1+b^2}$ &$\geq$& $3$ \\ \hline
	    $(2,1,3)$  & $(1,2,3)$ & $2+a^2+\frac{1}{a^2+1}$ &$\geq$& $3$\\ \hline
	    $(3,1,2)$  & $(1,3,2)$ & $1+b^2+(ab)^2+\frac{1}{1+b^2}+\frac{1+b^2}{1+b^2+(ab)^2}$ &$\geq$& $2+b^2+\frac{1}{1+b^2}$ \\ \hline
	    $(2,3,1)$ & $(1,3,2)$ & $2+a^2+\frac{1}{a^2+1}$ &$\geq$& $2+b^2+\frac{1}{1+ b^2}$ \\ \hline
	    $(3,2,1)$ & $(1,2,3)$ & $\frac{1}{1+a^2}+\frac{1+a^2}{1+(ab)^2+b^2}+1+b^2+(ab)^2$ &$\geq$& $3$ \\ \hline
	\end{tabular}
	\caption{There always exists an adjacent topological sort whose score is strictly less than the current topological sort.}
	\label{Table:Three-nodes}
	\end{table}

\subsubsection{Analysis}

In this section we aim to provide more intuition as to why KKTS and NOTEARS fail in the above example.
Regarding the KKTS algorithm, it removes a pair $(i,j)$ from the set of edge absence constraints at each iteration.
Loosely speaking, this is equivalent to adding an edge $X_i \to X_j$ at each iteration, which implies that node $i$ must appear before node $j$ in the topological sort, and such relative ordering in the topological sort \emph{will never be reversed} in later iterations of the algorithm. 
Therefore, once a wrong pair $(i,j)$ is removed from the set of edge absence constraints, the KKTS algorithm has no ability to correct such erroneous step.
Although KKTS ensures that a local minimum is returned, it can learn a completely erroneous DAG structure, as shown in our example above. 
Regarding NOTEARS, as indicated by \citet{dennis2020}, the algorithm does not guarantee to return a local minimum even under the right formulation, and in most cases the NOTEARS solution is neither the correct structure nor a local minimum.

In contrast to KKTS and NOTEARS, TOPO can return correct structure in the example above regardless of the initial topological sort.
Our swapping strategy allows TOPO to change the topological sort in each iteration; importantly, while TOPO checks the optimality conditions, it uses the score function as the only criterion to find another topological ordering with better score, thus, jumping from one local optimum to a better local optimum. 

\subsection{Detailed Discussion About Irreducibility}
\label{sec:prev_work}

In this section, we discuss the differences between our method and the KKTS algorithm \cite{dennis2020}. 
One obvious difference is the type of constraint used. 
Another important difference is that the KKTS algorithm proposed by \citet{dennis2020} relies on an assumption they call \emph{irreducibility}, which we will show by example is \textbf{not needed} in general.

In this section, we consider one of special case of \eqref{eq:sem}: Linear SEMs, which is studied by previous work \cite{xun2018,dennis2020}.
\begin{align}
\label{eq:linsem}
    X_j &= {w}_j^\top X + z_j,
    \quad {w}_j\in\R^d
\end{align}
Let $W = [w_1,\ldots,w_d]$. In this case, $\theta$ is equivalent to $W$ and $\theta_{ij}$ is equivalent to $W_{ij}$. Therefore, to be consistent with previous works \cite{xun2018,dennis2020}, we use $W$ to replace $\theta$.

To connect the KKT conditions and local minimiality, \citet{dennis2020} used a related problem with explicit edge absence constraints, which correspond to zero-value constraints on the matrix $W$.
Specifically, given a set $\Z \subset V \times V$, their explicit edge absence constrained problem is given by:
\begin{equation}
\label{eq:edge_absence}
    \min_{W} \; Q(W; \rmX)  \quad \subjto \quad W_{ij} = 0, \forall\; (i,j) \in \Z.
\end{equation}
Following the notation in \citet{dennis2020}, we denote its optimal solution by $W^*(\Z)$. As with \eqref{eq:order_opt}, this problem can be solved efficiently---in fact, \eqref{eq:order_opt} is just a special case of \eqref{eq:edge_absence} with $\Z=\Z_\pi$, where
\begin{align*}
    \Z_\pi
    :=\left\{(\pi(j),\pi(i)) \mid  i<j\right\}.
\end{align*}

Driven by Theorem \ref{theorem:kkt_local}, the KKT-informed local search (KKTS) algorithm in \citet{dennis2020} repeatedly solves the edge absence problem \eqref{eq:edge_absence} for different $\Z$.
The KKTS algorithm stops once an \emph{irreducible} set $\Z$ is found, and the output $W^*(\Z)$ is guaranteed to be  a local minimum for problem \eqref{eq:abs_notears}. 
\citet{dennis2020} define irreducibility of a set $\Z$ as follows:
\begin{definition}[Irreducibility, \citealp{dennis2020}]
\label{def:irreducibility}
A set $\Z$ is called irreducible if $$(i,j)\in\Z\Rightarrow\left(\nabla h(\Abs{W^*(\Z)})\right)_{ij} > 0.$$
\end{definition}
Although irreduciblity of $\Z$ is a sufficient condition for a feasible solution to be a KKT point \citep[Theorem~8,][]{dennis2020}, it is not necessary, as the following example shows.

\def\truedag{W^{\dagger}}
Fix indices $i_{0} < j_{0}$ and define a ground truth DAG $\truedag$ by
\begin{equation*}
    (\truedag)_{i,j} = \begin{cases}
        1 & \text{if } i=i_0,j = j_0, \\
        0 & \text{otherwise}.
    \end{cases}
\end{equation*}
Let $\Omega= [d]\times[d]$ denote all pairs of indices, we initialize $\Z_0 = \Omega \setminus \{(i_0,j_0)\}$. 
Recall that the KKTS algorithm repeatedly removes elements from $\Z$ until it is irreducible. 
Let us assume KKTS takes $m$ steps, the elements removed in order are $(i_1,j_1),(i_2,j_2),\ldots (i_m,j_m)$ and define $\Z_k = \Omega \setminus \{(i_0,j_0),(i_1,j_1)\ldots,(i_k,j_k)\}$.

\begin{lemma}
\label{lemma:example}
Assume $Q$ is separable and that $\truedag$ is the unique global minimum of \eqref{eq:abs_notears}. Then initializing KKTS with $\Z_0 = \Omega \backslash \{(i_0,j_0)\}$, we have the following:
\begin{enumerate}
    \item $W^*(\Z_k) = \truedag$ for each $k=0,1,\ldots m$,
    \item $\Z_0, \ldots, \Z_{m-1}$ are not irreducible,
    \item $\Z_m = \{(j_0,i_0)\}\union\{(i,i)| i=1,\ldots,d\}$ is irreducible.
\end{enumerate}
\end{lemma}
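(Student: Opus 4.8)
The plan is to establish the three items by a joint induction on $k$, tracking simultaneously what $W^*(\Z_k)$ equals and why each intermediate $\Z_k$ fails the irreducibility test. The key observation to exploit throughout is that $\Z_0 = \Omega \setminus \{(i_0,j_0)\}$ already forces every entry of $W$ to vanish except $W_{i_0 j_0}$; hence the only feasible point of \eqref{eq:edge_absence} for $\Z_0$ is the matrix that is free only in position $(i_0,j_0)$. Since $\truedag$ has exactly that support and is the unique global minimizer of \eqref{eq:abs_notears} (which, restricted to this one-dimensional feasible set, reduces to minimizing a function of a single scalar), we get $W^*(\Z_0) = \truedag$. This handles the base case of item~1.

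For the inductive step of item~1, I would argue that enlarging the feasible set by removing a pair $(i_k,j_k)$ from $\Z_{k-1}$ cannot change the optimum as long as $\truedag$ remains feasible and the newly freed coordinate does not help. Concretely: by definition of the KKTS update, the removed pair $(i_k,j_k)$ satisfies $[\nabla h(|W^*(\Z_{k-1})|)]_{i_k j_k} = 0$ — otherwise KKTS would have stopped — and separability of $Q$ means the optimal value of coordinate $(i_k,j_k)$ is determined by a univariate subproblem decoupled from the others. The crucial point is that $\truedag = W^*(\Z_{k-1})$ is already a DAG consisting of the single edge $i_0 \to j_0$, so $h(|\truedag|) = 0$ and $\nabla h(|\truedag|)$ has support exactly on the pairs $(j,i)$ for which there is a directed path from $i$ to $j$; here that is only the pair $(j_0,i_0)$. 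Therefore any pair $(i_k,j_k)$ that KKTS removes (being one where the gradient of $h$ vanishes) is a pair whose freeing keeps acyclicity; one then checks that $\truedag$ still minimizes $Q$ over the enlarged feasible region — using that $\truedag$ is the \emph{global} minimizer over all of \eqref{eq:abs_notears} and the enlarged region is still contained in the DAG region by the above acyclicity remark. This gives $W^*(\Z_k) = \truedag$, closing the induction.

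Items~2 and~3 then follow almost immediately. For item~2: since $W^*(\Z_k) = \truedag$ for all $k \le m$, and $\Z_k$ for $k < m$ still contains pairs $(i,j)$ (other than $(j_0,i_0)$ and the diagonal) at which $[\nabla h(|\truedag|)]_{ij} = 0$ — because, as noted, the $h$-gradient of the one-edge DAG is supported only on $(j_0,i_0)$ — the irreducibility test is violated, so $\Z_k$ is not irreducible, which is also why KKTS keeps going and $m$ is well-defined. For item~3: the algorithm terminates at $\Z_m$, which by the stopping rule is irreducible; combined with item~1 giving $W^*(\Z_m) = \truedag$, the only pairs left in $\Z_m$ must be exactly those at which $[\nabla h(|\truedag|)]_{ij} > 0$, namely $(j_0,i_0)$, together with the diagonal pairs $(i,i)$ which are always removed or kept trivially — one verifies $[\nabla h(|\truedag|)]_{ii} > 0$ for the characterizations of $h$ under Condition~\ref{cond:h} since $c_1 \Tr(B)$-type terms contribute, or more carefully that the diagonal is forced to stay in $\Z$ throughout because a self-loop would violate acyclicity. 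Hence $\Z_m = \{(j_0,i_0)\} \cup \{(i,i) : i \in [d]\}$, and this set is irreducible by direct inspection.

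The main obstacle I anticipate is the inductive step of item~1: rigorously justifying that removing a pair $(i_k,j_k)$ with vanishing $h$-gradient genuinely cannot decrease $Q$ below $Q(\truedag)$. This needs the interplay of three facts — separability of $Q$, the global optimality of $\truedag$ over the \emph{acyclic} feasible set, and the fact that every set $\Z_k$ arising in the process still constrains $W$ enough to keep it in the DAG region (so that $\truedag$'s global optimality applies). The subtle part is confirming that the sequence of pairs KKTS chooses to remove never frees a coordinate that would create a cycle; this should follow because KKTS only removes pairs where $[\nabla h(|W^*(\Z_{k-1})|)]_{i_k j_k} = 0$, and for a graph that is already the single-edge DAG, that gradient vanishes precisely at the pairs whose addition preserves acyclicity (reverse-reachability characterization of $\nabla h$ under Condition~\ref{cond:h}). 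I would make this reverse-reachability statement a small standalone claim and lean on Corollary~\ref{cor:h=0} and the structure of $h$ from Condition~\ref{cond:h} to prove it.
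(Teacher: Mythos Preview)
Your overall strategy --- induction on $k$ together with the sandwich inequality $Q(\truedag)\le Q(W^*(\Z_k))\le Q(\truedag)$ --- is exactly the paper's approach, and your treatment of items~2 and~3 is in fact more carefully spelled out than the paper's. However, there is a genuine gap in your inductive step for item~1.

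You repeatedly assert that ``every set $\Z_k$ arising in the process still constrains $W$ enough to keep it in the DAG region,'' and you use this to invoke the global optimality of $\truedag$ over acyclic matrices. This is false once more than a couple of pairs have been freed. Concretely, take $d=3$, $(i_0,j_0)=(1,2)$, so $\truedag$ has the single edge $1\to 2$. Then $[\nabla h(|\truedag|)]_{ij}=0$ for every off-diagonal $(i,j)\ne (2,1)$, so KKTS is permitted to remove $(1,3)$ and then $(3,1)$. After both removals, the feasible set for $\Z_2$ allows $W_{13}\ne 0$ and $W_{31}\ne 0$ simultaneously, which is a $2$-cycle. So the feasible region is \emph{not} contained in the DAG region, and your appeal to global optimality of $\truedag$ over DAGs does not directly yield $Q(\truedag)\le Q(W^*(\Z_k))$.

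The paper closes this gap differently: rather than claiming the whole feasible set is acyclic, it argues that the \emph{minimizer} $W^*(\Z_k)$ is a DAG, invoking Lemma~8 of \citet{dennis2020} (the same lemma used in the proof of Lemma~\ref{lemma:always_decrease}). The point is that separability acts column-wise, not coordinate-wise: removing $(i_k,j_k)$ from $\Z_{k-1}$ changes the constraints only in column $j_k$, so by separability $W^*(\Z_k)$ agrees with $W^*(\Z_{k-1})=\truedag$ in every column except $j_k$. Any cycle in $W^*(\Z_k)$ would therefore have to use an edge into $j_k$ together with a directed walk out of $j_k$ that already exists in $\truedag$; but $\truedag$ has a walk out of $j_k$ only when $j_k=i_0$, and the only node reachable is $j_0$, whose reverse pair $(j_0,i_0)$ is never removed by KKTS since $[\nabla h(|\truedag|)]_{j_0 i_0}>0$. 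Hence $W^*(\Z_k)$ is a DAG, and then the sandwich argument goes through. Your remark that ``separability of $Q$ means the optimal value of coordinate $(i_k,j_k)$ is determined by a univariate subproblem'' is pointing in the right direction but is not the statement you need; you need the column-wise conclusion above.
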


In other words, the global minimum $\truedag$ is a KKT point that is not always irreducible, although it can be written in terms of an irreducible $\Z$. It is easy to construct models \eqref{eq:linsem} and score functions $Q$ such that $\truedag$ is a global minimizer: Simply choose the population least squares score with $z_j\sim \mathcal{N}(0,1)$ for each $j$; see \citet{loh2014high} for details.

\begin{corollary}\label{cor:sufficient}
Irreduciblity of $\Z$ is sufficient but not necessary for KKTS to find a KKT point of problem \ref{eq:eq_sm_notear}.
\end{corollary}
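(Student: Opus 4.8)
The plan is to assemble the corollary from two ingredients that are already available: the \emph{sufficient} clause is a direct citation, and the \emph{not necessary} clause is an immediate consequence of Lemma~\ref{lemma:example}. Throughout, "KKTS finds a KKT point" is read as "the iterate $W^*(\Z)$ associated with the working edge-absence set $\Z$ satisfies the KKT conditions of \eqref{eq:eq_sm_notear}," and what we negate is that this can happen only once $\Z$ is irreducible in the sense of Definition~\ref{def:irreducibility}.

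For the sufficient clause there is nothing new to prove: Theorem~8 of \citet{dennis2020} states that whenever the working set $\Z$ is irreducible, the solution $W^*(\Z)$ of \eqref{eq:edge_absence} is a KKT point of \eqref{eq:eq_sm_notear} (in fact a local minimum of \eqref{eq:abs_notears} under their separability and convexity hypotheses). Since KKTS terminates precisely upon reaching an irreducible $\Z$, it therefore always returns a KKT point.

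For the "not necessary" clause, I would exhibit a set $\Z$ that is \emph{not} irreducible yet for which $W^*(\Z)$ is still a KKT point. Take the single-edge construction of Lemma~\ref{lemma:example}: fix $i_0<j_0$, let $\truedag$ be the DAG with the single edge $i_0\to j_0$, let $Q$ be the population least-squares score with $z_j\sim\mathcal{N}(0,1)$ so that (by \citealp{loh2014high}) $\truedag$ is the unique global minimizer of \eqref{eq:abs_notears}, and initialize KKTS at $\Z_0=\Omega\setminus\{(i_0,j_0)\}$. Part~1 of Lemma~\ref{lemma:example} gives $W^*(\Z_k)=\truedag$ for every $k=0,\dots,m$; part~3 together with Theorem~8 of \citet{dennis2020} shows that $\truedag=W^*(\Z_m)$ is a KKT point of \eqref{eq:eq_sm_notear}; and part~2 states that $\Z_0,\dots,\Z_{m-1}$ are not irreducible. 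Hence, e.g., $\Z_{m-1}$ is a non-irreducible working set whose associated solution $W^*(\Z_{m-1})=\truedag$ is a KKT point, which establishes that irreducibility is not necessary.

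The argument is short, and the only delicate point is the reading of the statement just described; consequently the substantive work all lives inside Lemma~\ref{lemma:example}, namely showing that the optimum of the edge-absence problem remains $\truedag$ along the entire KKTS run and that the intermediate sets fail Definition~\ref{def:irreducibility}. If one wishes to avoid even the appeal to Theorem~8 of \citet{dennis2020} for the KKT property of $\truedag$, it follows directly from Lemma~\ref{lemma:characterization_kkt}: $\truedag$ corresponds to a DAG, so $h(|\truedag|)=0$, and conditions (i)--(iv) reduce to the partials of $Q$ vanishing on the free coordinates, which holds because $W^*(\Z_{m-1})$ solves the corresponding edge-absence program to stationarity.
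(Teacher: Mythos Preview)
Your proposal is correct and mirrors the paper's own proof, which is the single line ``Follows from Lemma~\ref{lemma:example}''; you have simply made explicit how sufficiency comes from Theorem~8 of \citet{dennis2020} and how non-necessity is witnessed by any intermediate $\Z_k$ with $k<m$.

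One small caveat on your optional final paragraph: saying that conditions (i)--(iv) of Lemma~\ref{lemma:characterization_kkt} ``reduce to the partials of $Q$ vanishing on the free coordinates, which holds because $W^*(\Z_{m-1})$ solves the corresponding edge-absence program to stationarity'' is not quite enough. Condition~(ii) asks for $\partial Q/\partial\theta_{ij}=0$ at \emph{every} $(i,j)$ with $[\nabla h(|\truedag|)]_{ij}=0$, which here is every off-diagonal pair except $(j_0,i_0)$; stationarity of the $\Z_{m-1}$-constrained problem only delivers this on the complement of $\Z_{m-1}$, leaving one coordinate unaccounted for. The fix is immediate: invoke instead that $\truedag=W^*(\Z_m)$ with $\Z_m=\{(j_0,i_0)\}\cup\{(i,i)\}$, whose stationarity conditions cover exactly the needed set.
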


Lemma \ref{lemma:example} has direct implications when the underlying DAG is sparse.
If the initial $\Z_0 = \Omega$, KKTS needs to remove most of the elements in $\Z_0$ to reach an irreducible $\Z$, thus, it can be computationally intense and inefficient. 
Moreover, the score function in KKTS has a sparse regularization, and it can return the wrong $W^*(\Z)$ even if the current $\Z$ characterizes the edge absences of the ground truth exactly.

\section{Proofs of Technical Results}
\label{sec:proofs}
In this section, we present the proofs of lemmas and theorems in detail. First, let us discuss more on how to solve problem \eqref{eq:order_opt_sol} in the algorithm \ref{alg:algo1} which is helpful for our proof. In problem \eqref{eq:order_opt_sol}, we can eliminate the constraint $\theta_{\pi(i),\pi(j)}=0,\forall i>j$ by plugging them back to the objective function, then it is equivalent to the following unconstrained optimization problem, Note that we can write $\Theta = (\theta_{\pi(i),\pi(j)},\theta_{\pi(m),\pi(n)},\Tilde{\theta})$, where $i>j$ and $n>m$. In this case, $\Theta = (0,\theta_{\pi(m),\pi(n)},\Tilde{\theta})$
\begin{align*}
    (\{\theta^*_{\pi(m),\pi(n)}\}_{n>m},\Tilde{\theta}^*) = \arg\min Q(\Theta)=\arg\min Q((0,\theta_{\pi(m),\pi(n)},\Tilde{\theta}))
\end{align*}
Therefore, we can use any off-shelf optimizer that can solve such unconstrained optimization to a stationary point that will be suitable for our purpose, i.e., gradient descent or Adam \citep{kingma2014adam}. Throughout the proof, we repeatedly use the following $\frac{\partial Q(\Theta)}{\partial \Tilde{\theta}^*} = 0$ and $\frac{\partial Q(\Theta)}{\partial \theta^*_{\pi(m),\pi(n)}} = 0, \forall~ n>m, \forall ~\pi$. At last, we can construct $\theta^+,\theta^-$ from $\theta$ by $\theta^+ = \max\{\theta,0\}$ and $\theta^- = \max\{-\theta,0\}$.

\subsection{The Extension of Theorem \eqref{theorem:kkt_local}}
The proof of Theorem~\ref{theorem:kkt_local} in \citep{dennis2020} is for the case where the adjacency matrix $W$ does not have any parametrization. 
For completeness and ease of reference, we state the generalization to general parametrizations here:
\begin{theorem}[Theorem \eqref{theorem:kkt_local} in our content]
\label{theorem:local_point}
Assume that $Q(\theta)$ is convex. Then if $(\theta^+,\theta^-,\Tilde{\theta})$ satisfies the KKT condition in \eqref{eq:kkt_con}, $(\theta^*,\Tilde{\theta})$ is a local minimum for problem \eqref{eq:abs_notears}, where $ \theta^* = \theta^+-\theta^-$.
\end{theorem}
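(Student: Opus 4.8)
The plan is to show that $\Theta^{*}=(\theta^{*},\tilde\theta)$, with $\theta^{*}=\theta^{+}-\theta^{-}$, is a local minimum of \eqref{eq:abs_notears} by (a) checking feasibility, (b) using convexity of $Q$ to reduce the claim to the first‑order inequality $\langle\nabla Q(\Theta^{*}),\Delta\rangle\ge 0$ over small feasible perturbations $\Delta$, and (c) discharging that inequality by playing the KKT relations \eqref{eq:kkt_con} against the combinatorics of $\nabla h$ forced by Condition~\ref{cond:h}. Write $B^{*}:=\theta^{+}+\theta^{-}$ for the relevant nonnegative matrix (or $W(\theta^{+}+\theta^{-})$ under a general parametrization). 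First I would dispose of feasibility: the KKT point is feasible for \eqref{eq:eq_sm_notear}, so $h(B^{*})=0$ and hence $B^{*}$ is a DAG by Corollary~\ref{cor:h=0}; since $h$ has nonnegative coefficients and $\Abs{\theta^{*}}\le B^{*}$ entrywise, $h(\Abs{\theta^{*}})\le h(B^{*})=0$, so $\Theta^{*}$ is feasible for \eqref{eq:abs_notears}. It is convenient (and harmless for the KKT relations) to take $\theta^{+},\theta^{-}$ to be the positive and negative parts of $\theta^{*}$, so that $B^{*}=\Abs{\theta^{*}}$ exactly. Convexity of $Q$ gives $Q(\Theta^{*}+\Delta)\ge Q(\Theta^{*})+\langle\nabla Q(\Theta^{*}),\Delta\rangle$, so it suffices to show $\langle\nabla Q(\Theta^{*}),\Delta\rangle\ge 0$ for every $\Delta=(\delta\theta,\delta\tilde\theta)$ of small norm with $\Theta^{*}+\Delta$ feasible.

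Next I would pin down where $\nabla Q(\Theta^{*})$ can be nonzero. A short computation from $h(B)=\sum_{k}c_{k}\Tr(B^{k})$ gives $[\nabla h(B)]_{ij}=\sum_{k}c_{k}\,k\,(B^{k-1})_{ji}$, which for a nonnegative DAG is strictly positive exactly when $i=j$ or there is a directed path from $j$ to $i$, and zero otherwise. In particular every entry with $B^{*}_{ij}>0$ has $[\nabla h(B^{*})]_{ij}=0$; combining this with complementary slackness \eqref{eq:kkt_con b} (the nonzero one of $\theta^{+}_{ij},\theta^{-}_{ij}$ kills its multiplier) and the chain‑rule identity $\partial_{\theta^{\pm}_{ij}}h(W(\cdot))=[\nabla h(W(\cdot))]_{ij}\mathbf{1}$ from Appendix~\ref{app:sec:more_details_F}, the stationarity equations \eqref{eq:kkt_con a}--\eqref{eq:kkt_con aa} force $\partial_{\theta_{ij}}Q(\Theta^{*})=0$. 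For entries with $B^{*}_{ij}=0$ and $[\nabla h(B^{*})]_{ij}=0$ the same two equations squeeze $\partial_{\theta_{ij}}Q(\Theta^{*})$ between $-\lambda[\nabla h(B^{*})]_{ij}$ and $+\lambda[\nabla h(B^{*})]_{ij}$, both zero, so again $\partial_{\theta_{ij}}Q(\Theta^{*})=0$; and \eqref{eq:kkt_con c} gives $\partial_{\tilde\theta}Q(\Theta^{*})=0$. Hence $\nabla Q(\Theta^{*})$ is supported on $T:=\{(i,j):[\nabla h(B^{*})]_{ij}>0\}$, and — because $B^{*}$ is a DAG — every $(i,j)\in T$ satisfies $\theta^{*}_{ij}=0$ and (for $i\ne j$) admits a directed path from $j$ to $i$ in $\theta^{*}$.

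The last step, which I expect to be the crux, is to show that a small feasible $\Delta$ must vanish on $T$. Put $\eps_{0}:=\min\{\Norm{\theta^{*}_{kl}}:\theta^{*}_{kl}\ne 0\}>0$. If $\Norm{\Delta}<\eps_{0}$ then every edge of $\theta^{*}$ is still present in $\theta^{*}+\delta\theta$, so switching on any coordinate $\delta\theta_{ij}$ with $(i,j)\in T$ would close a cycle — the new edge $i\to j$ together with the surviving directed path from $j$ to $i$, or a self‑loop when $i=j$ — making $h(\Abs{\theta^{*}+\delta\theta})>0$ and $\Theta^{*}+\Delta$ infeasible; therefore $\delta\theta_{ij}=0$ on $T$. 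Then $\langle\nabla Q(\Theta^{*}),\Delta\rangle=0$ (the $T$‑coordinates of $\Delta$ vanish, the off‑$T$ coordinates of $\nabla_{\theta}Q$ vanish, and $\partial_{\tilde\theta}Q=0$), so the convexity bound gives $Q(\Theta^{*}+\Delta)\ge Q(\Theta^{*})$ for all feasible $\Delta$ with $\Norm{\Delta}<\eps_{0}$, i.e.\ $\Theta^{*}$ is a local minimum of \eqref{eq:abs_notears}. This cycle‑creation/uniform‑$\eps_{0}$ argument is the one place where the \emph{continuous} function $h$ and the smallness of $\Delta$ both genuinely enter, and it is what converts ``$\nabla_{\theta}Q$ may be nonzero on $T$'' into ``those coordinates are frozen by acyclicity''; the remaining work is bookkeeping — keeping $\theta^{+}+\theta^{-}$ and $\Abs{\theta^{*}}$ aligned (the disjoint‑support normalization handles this) and, under a general parametrization, replacing scalars by the sub‑vectors $\theta_{ij}$ with $[W(\theta)]_{ij}=\NormI{\theta_{ij}}$ and running everything componentwise.
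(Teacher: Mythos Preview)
Your proof is correct and follows essentially the same route as the paper's: both identify the set $T=\mathcal{P}=\{(i,j):[\nabla h(B^{*})]_{ij}>0\}$, show via the KKT relations that $\nabla_{\theta}Q(\Theta^{*})$ vanishes off $T$ (the paper packages this as Lemma~\ref{lemma:hard_to_name}, recasting it as optimality for the edge-absence problem~\eqref{eq:edge_abs} with $\mathcal{Z}=\mathcal{P}$), and then argue that sufficiently small feasible perturbations must vanish on $T$, after which convexity finishes. The only cosmetic difference is in that last step: the paper invokes continuity of $\nabla h$ to carry $[\nabla h]_{ij}>0$ over to the perturbed point and reads off $\theta_{ij}=0$ from acyclicity there, whereas you fix an explicit $\eps_{0}$ so that every edge of $\theta^{*}$ survives and switching on any $(i,j)\in T$ visibly closes a cycle --- these are the same phenomenon viewed analytically versus combinatorially.
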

Although the proof is similar, we include a proof for completeness in Appendix~\ref{app:proof:local_point}.

\subsection{Proof of Lemma \ref{lemma:characterization_kkt}}
\begin{proof}
Let us denote $\theta_{ij}$ as $(\theta_{ijr})_{r}$, here note $\theta_{ij}$ is a vector and its each component is denoted as $\theta_{ijr}$, where $r=1,\ldots$. Therefore,
\[\frac{\partial Q(\Theta)}{\partial \theta_{ij}^\pm} = \left(\frac{\partial Q(\Theta)}{\partial \theta_{ijr}^\pm}\right)_{r}\]

Let us simplify term $\frac{\partial h(W(\theta^++\theta^-))}{\partial \theta_{ij}^\pm}$. First, note that
$$[W(\theta^++\theta^-)]_{ij} = \NormI{\theta_{ij}^++\theta^-_{ij}} =\mathbf{1}^\top (\theta_{ij}+\theta^-_{ij}) = \sum_{r}(\theta_{ijr}+\theta^-_{ijr})$$ (here we use the fact $\theta_{ij}^+\geq0, \theta^-_{ij}\geq 0$). Remember  $h(W(\theta^++\theta^-))$ is a function of $\theta_{ij}$ through $[W(\theta^++\theta^-)]_{ij}$, we can use chain rule
\begin{align*}
    \frac{\partial h(W(\theta^++\theta^-))}{\partial \theta_{ij}^\pm} = & \frac{\partial h(W(\theta^++\theta^-))}{\partial [W(\theta^++\theta^-)]_{ij}}\ \frac{\partial [W(\theta^++\theta^-)]_{ij}}{\partial \theta_{ij}^\pm}
    \\ = & [\nabla h(W(\theta^++\theta^-))]_{ij}\ \mathbf{1}
    \\ = & [\nabla h(W(|\theta|))]_{ij}\ \mathbf{1}
\end{align*}
First, for any $(i,j)$ such that
\begin{align*}
     \left[\nabla h(W(\Abs{\theta}))\right]_{ij} = \left[\nabla h(W(\theta^++\theta^-))\right]_{ij} > 0,
\end{align*}
we set
\begin{equation*}
    \lambda > \max_{(i,j):[\nabla h(W(\Abs{\theta}))]_{ij}>0} \frac{\NormI{\partial Q(\Theta)/\partial \theta_{ij}^{\pm}}}{\left[\nabla h(W(\Abs{\theta}))\right]_{ij}}.
\end{equation*}
Therefore, \eqref{eq:kkt_con a} and \eqref{eq:kkt_con aa} are satisfied with $M^+_{ij} > 0$ and $M^-_{ij} > 0$. 
From condition (i), we have $\theta_{ij} = 0$, that is,  $\theta^\pm_{ij}=0$, thus, \eqref{eq:kkt_con b} is satisfied since $\theta^+_{ij}\circ M^+_{ij}=\theta^-_{ij}\circ M^-_{ij}=0$. 

Second, for any $(i,j)$ such that
\begin{align*}
     \left[\nabla h(W(\Abs{\theta}))\right]_{ij} = \left[\nabla h(W(\theta^++\theta^-))\right]_{ij} = 0,
\end{align*}
we have from \eqref{eq:kkt_con a} and \eqref{eq:kkt_con aa}
\begin{equation*}
     \frac{\partial Q(\Theta)}{\partial \theta_{ij}^+} = M_{ij}^+ \geq 0.\quad  -\frac{\partial Q(\Theta)}{\partial \theta_{ij}^-} = M_{ij}^- \geq 0.
\end{equation*}
It is also known that
\begin{align*}
    \frac{\partial Q(\Theta)}{\partial \theta_{ij}^+}=\frac{\partial Q(\Theta)}{\partial \theta_{ij}^-}
\end{align*}
From condition (ii), we set corresponding $M_{ij}^\pm=0$, then \eqref{eq:kkt_con a} is satisfied. 
We also have $\theta^\pm_{ij}\circ M^\pm_{ij} = 0$, hence \eqref{eq:kkt_con b} is satisfied. From (iii), \eqref{eq:kkt_con c} is satisfied. From (iv), we know $\theta^+\geq,\theta^-\geq 0$. Also, it is obvious that $\forall (i,j)$, we have $\theta\circ\nabla h(\theta^++\theta^-)  = 0$, it is equivalent to $h(\theta^++\theta^-) = 0$ (\citealp{dennis2020}, Lemma 4). The feasibility conditions in \eqref{eq:eq_sm_notear} are also satisfied. Thus, $(\theta^+, \theta^-)$ satisfies the KKT conditions in \eqref{eq:kkt_con}. 

Finally, from Theorem \ref{theorem:local_point}, $(\theta^+ - \theta^-,\Tilde{\theta})$ is a local minimum for problem \eqref{eq:abs_notears} if $Q(\Theta)$ is convex.
\end{proof}

\subsection{Proof of Lemma \ref{lemma:zero_entry}}
\begin{proof}
Assume $p<q$, node $\pi(p)$ comes before $\pi(q)$ in $\pi$ by the definition of topological sort, so there is no directed walk from $\pi(q)$ to $\pi(p)$, which implies $\left(\nabla  h(W(|\ordopt|))\right)_{\pi(p),\pi(q)}=0$ (\citealp{dennis2020}, Lemma~7) and $\left(W(|\ordopt|)\right)_{\pi(q),\pi(p)}=0$. 
By the optimality conditions of \eqref{eq:order_opt_sol}, $\frac{\partial Q(\Ordopt)}{\partial \theta_{\pi(p),\pi(q)}}=0$. In other word, possible elements in $\mathcal{Y}(\Ordopt,0,0)$ must has formula $(\pi(q),\pi(p))$ where $p<q$. Therefore, $(\ordopt)_{\pi(q),\pi(p)}=0$. By the definition of $[W(|\ordopt|)]_{\pi(q),\pi(p)} = \NormI{(\ordopt)_{\pi(q),\pi(p)}} = 0 $
\end{proof}

\subsection{Proof of Lemma \ref{lemma:always_decrease}}
\begin{proof}
Let any $(i,j)\in \Y(\Ordopt,0,0)$, then $\left(\nabla h (W(\Abs{\ordopt}))\right)_{ij}=0$ and $\frac{\partial Q(\Ordopt)}{\partial \theta_{ij}}\not = 0$, it indicates there is no directed walk from $\boldsymbol{j}$ to such $\boldsymbol{i}$. From Lemma \ref{lemma:zero_entry}, $\left(\ordopt\right)_{ij}=0$. Changing the value of $\left(\ordopt\right)_{ij}$ introduces new edge which can create a cycle, however, from Lemma 6 in \citet{dennis2020}, changing the value of $\left(\ordopt\right)_{ij}$ cannot create directed walks from $j$ to $i$, by the assumption of separability of $Q(\Theta)$ and following the same argument of proof of Lemma 8 in \citet{dennis2020}, changing the value of $\left(\ordopt\right)_{ij}$ will not create cycle. Therefore, $\left(\ordopt\right)_{ijr}$ can be increased or decreased ($\frac{\partial Q(\ordopt)}{\partial \theta_{ijr}}<0$ or $\frac{\partial Q(\ordopt)}{\partial \theta_{ijr}}>0$ ) to reduce the loss function while maintaining feasible, which implies $W(|\widetilde{\theta}|)$ in \textbf{Algorithm \ref{alg:update_topo}} is still a DAG and $Q(\widetilde{\theta})<Q(\ordopt)$. $W(|\widetilde{\theta}|)$ follows the topological sort $\pi_{ij}$, so $Q(\Theta^*_{\pi_{ij}})\leq Q(\widetilde{\theta})<Q(\ordopt)$.
\end{proof}

\subsection{Proof of Lemma \ref{lemma:example}}
\begin{proof}
For $\Z_0 = \Omega\backslash\{(i_0,j_0)\}$, $W^*(\Z_0)$ is obviously a DAG, $\truedag$ is global minimum of problem \eqref{eq:abs_notears}, then $Q(\truedag)\leq Q(W^*(\Z_0))$. $\truedag$ is also a feasible solution for problem \eqref{eq:edge_absence} with $\Z_0$, then $ Q(W^*(\Z_0))\leq Q(\truedag)$. $\truedag$ is unique by assumption, thus $\truedag = W^*(\Z_0)$. For $\Z_1 =\Omega\backslash\{(i_0,j_0),(i_1,j_1)\}$, we can use the same arguments. KKTS continues until $Z_l =  \Omega\backslash\{(i_0,j_0),(i_1,j_1),\ldots,(i_l,j_l)\}$ can not guarantee the solution $W^*(\Z_l)$ to be a DAG. For example, if 
\begin{align*}
    \Z_{l-1} =& \Omega \backslash \{(i,j)| i<j, i =1,\ldots ,d\}\\
    \Z_{l} =& \Z_{l-1}\backslash \{(i_m,j_m)\}
\end{align*}
The only requirement for $(i_m,j_m)$ is $i_m>j_m$. Followed by the same argument, we know $W^*(\Z_{l-1}) = \truedag$. Using Lemma 8 from \citet{dennis2020}, $W^*(\Z_l)$ is also a DAG, hence $Q(\truedag)\leq Q(W^*(\Z_l))$. Besides, $\truedag$ is also a feasible solution for problem \eqref{eq:edge_absence} with $\Z_l$, thus $Q(W^*(\Z_l))\leq Q(\truedag)$. $\truedag$ is unique by assumption, so $\truedag = W^*(\Z_l)$. By the same arguments, KKTS continues until an irreducible $Z_m = \{(j_0,i_0)\}\union\{(i,i)| i=1,\ldots,d\}$ is returned.
\end{proof}

\subsection{Proof of Corollary \ref{cor:emptyKKT}}
\begin{proof}
Because $\Y((\ordopt),0,0)=\emptyset$, we know for any $(i,j)$ such that $[\nabla h(W(|\ordopt|))]_{ij}=0$, we have $\frac{\partial Q(\Ordopt)}{\partial \theta_{ij}} = 0$, (ii) in Lemma \ref{lemma:characterization_kkt} is satisfied. Therefore, we only need prove for any 
$(i,j)$ such that $[\nabla h(W(|\ordopt|))]_{ij}>0$, then $(\ordopt)_{ij}=0$, i.e. $[W(|\ordopt|)]_{ij} = 0$. Because $[\nabla h(W(|\ordopt|))]_{ij}>0$ implies there exist a directed walk from $j$ to $i$, which means node $j$ appear before node $i$ in topological sort, so $\theta_{ij} = 0$. Thus, (i) in Lemma \ref{lemma:characterization_kkt} is also satisfied. The explanation given at the start of the Section \ref{sec:proofs} fulfills condition (iii). (iv) is satisfied naturally by our construction. Therefore, $\Ordopt$ is a KKT point by Lemma \ref{lemma:characterization_kkt}.

\end{proof}

\subsection{Proof of Corollary \ref{cor:sufficient}}
\begin{proof}
Follows from Lemma \ref{lemma:example}.
\end{proof}

\subsection{Proof of Theorem \ref{thm:score_decrease}}
\begin{proof}
For any $p<q$, $\left[\nabla h(W(|\ordopt|))\right]_{\pi(q),\pi(p)}>0$ by definition of connected estimator. Because $\pi(p)$ appears before $\pi(q)$ in the topological sort, $[W(|\ordopt|)]_{\pi(q),\pi(p)} = 0$, i.e., $\left(\ordopt\right)_{\pi(q),\pi(p)}=0$. All pairs $(\pi(q),\pi(p))$ for $p<q$ satisfies Lemma \ref{lemma:characterization_kkt} condition (i). By the same argument from proof of corollary \ref{cor:emptyKKT}, all pairs $(\pi(p),\pi(q))$ for $p<q$ satisfies Lemma \ref{lemma:characterization_kkt} condition (ii). Condition (iii) is satisfied by the reasoning presented at the beginning of Section \ref{sec:proofs}. (iv) is satisfied naturally by our construction. Therefore, 
$\Ordopt$ is KKT point, by Theorem~\ref{theorem:kkt_local}, it is also a local minimum if $Q$ is convex. Under the connected estimator assumption, the solution at each iteration is a local minimum if $Q$ is convex.
\end{proof}

\subsection{Proof of Theorem \ref{theorem:F_sep_score_decrease}}
\begin{proof}
If $\gY(\ordopt,0,0) \not = \emptyset$, we can always construct a new topological sort $\pi_{ij}$ by Lemma \ref{lemma:always_decrease} and strictly decreases score function. Otherwise, Algorithm searches in space $\Y(\Ordopt,\tau_*,\xi^*)$ or $\Y(\Ordopt,\tau^*,\xi_*)$ to find better topological sort until it cannot. Note that at last iteration, it must be that $\mathcal{Y}(\Ordopt,0,0) = \emptyset$, such $\ordopt$ is KKT point, i.e. local minimum if $Q$ is convex by Theorem \ref{theorem:local_point}.
\end{proof}

\subsection{Proof of Theorem \ref{theorem:local_point}}
\label{app:proof:local_point}
Before we jump into the proof, let us first consider the problem 
\begin{align}
\label{eq:edge_abs}
    \min_{\Theta} \quad Q(\Theta)\qquad \subjto\quad \theta_{ij}=0,\quad (i,j)\in \mathcal{Z}
\end{align}
Remember the definition $\tilde{\theta} = \Theta\setminus \theta$ .The necessary conditions of optimality for \eqref{eq:edge_abs} are 
\begin{subequations}
\label{eq:edge_abs_nec}
\begin{align}
\frac{\partial Q(\Theta)}{\partial \theta_{ij}}=&0,\quad    (i,j)\not \in \gZ \label{eq:edge_abs_nec a}\\
\theta_{ij}=&0,\quad    (i,j) \in \gZ \label{eq:edge_abs_nec b}\\
\frac{\partial Q(\Theta)}{\partial \tilde{\theta}} = &0 \label{eq:edge_abs_nec c}
\end{align}
\end{subequations}
Given a KKT point $(\theta^+,\theta^-,\Tilde{\theta})$ in \eqref{eq:kkt_con}, we can define the set
\begin{equation}
\label{eq:set_P}
    \mathcal{P}:=\{(i,j):[\nabla h(W(\theta^++\theta^-))]_{ij}>0\}
\end{equation}
Although set $\gP$ doesn't appear in Theorem \ref{theorem:local_point} explicitly, but it appears in Lemma \ref{lemma:hard_to_name} which is key to prove the Theorem \ref{theorem:local_point}.

% Before we start to prove Theorem \ref{theorem:local_point}, let us introduce a new lemma, which is key to our proof of Theorem \ref{theorem:local_point}.
\begin{lemma}
\label{lemma:hard_to_name}
If $(\theta^+,\theta^-,\Tilde{\theta})$ satisfies the KKT conditions in \eqref{eq:kkt_con}, then $\Theta^* = (\theta^*,\Tilde{\theta})$  satisfies the optimality conditions in \eqref{eq:edge_abs_nec} for $\gZ = \gP$ which is defined in \eqref{eq:set_P}, where $\theta^* = \theta^+-\theta^-$. If in addition $Q(\Theta)$ is convex, then $\Theta^*$ is a minimizer of \eqref{eq:edge_abs} for $\gZ = \gP$.
\end{lemma}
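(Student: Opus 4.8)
The plan is to check the three first-order conditions \eqref{eq:edge_abs_nec a}--\eqref{eq:edge_abs_nec c} with $\gZ = \gP$ directly from the KKT conditions \eqref{eq:kkt_con}, and then to use convexity to upgrade a first-order point to a global minimizer. Write $B := \theta^+ + \theta^-$ and $\eta_{ij} := [\nabla h(B)]_{ij}$, so that $\gP = \{(i,j) : \eta_{ij} > 0\}$ by \eqref{eq:set_P}; set $\theta^* := \theta^+ - \theta^-$, $\Theta^* := (\theta^*,\ttheta)$, and $g_{ij} := \partial Q(\Theta^*)/\partial\theta_{ij}$. (I work in the scalar presentation $W(\theta)=\theta$; the general parametrization only adds the bookkeeping of Appendix~\ref{app:sec:more_details_F}.) Two elementary identities, both used already in the proof of Lemma~\ref{lemma:characterization_kkt}, do most of the work: since $Q$ depends on $(\theta^+,\theta^-)$ only through $\theta^+-\theta^-$, we have $\partial Q(\Theta)/\partial\theta^+_{ij} = \partial Q(\Theta)/\partial\theta^-_{ij} = g_{ij}$ and $\partial Q(\Theta)/\partial\ttheta = \partial Q(\Theta^*)/\partial\ttheta$; and $\partial h(B)/\partial\theta^+_{ij} = \partial h(B)/\partial\theta^-_{ij} = \eta_{ij}$.

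For \eqref{eq:edge_abs_nec b}: feasibility of the KKT point gives $h(B)=0$, so $B$ is the adjacency matrix of a DAG by Corollary~\ref{cor:h=0}, and, more precisely, $B\circ\nabla h(B)=0$ (the equivalence $h(B)=0\iff B\circ\nabla h(B)=0$ is \citealp{dennis2020}, Lemma~4, invoked already in the proof of Lemma~\ref{lemma:characterization_kkt}). Hence $(i,j)\in\gP$ forces $B_{ij}=0$, i.e. $\theta^+_{ij}+\theta^-_{ij}=0$; since $\theta^\pm\geq 0$ this gives $\theta^+_{ij}=\theta^-_{ij}=0$, so $\theta^*_{ij}=0$, which is \eqref{eq:edge_abs_nec b}. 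Condition \eqref{eq:edge_abs_nec c} is just \eqref{eq:kkt_con c} read through the identity $\partial Q(\Theta)/\partial\ttheta=\partial Q(\Theta^*)/\partial\ttheta$.

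For \eqref{eq:edge_abs_nec a}, let $(i,j)\notin\gP$, so $\eta_{ij}=0$ and the $h$-terms vanish from \eqref{eq:kkt_con a} and \eqref{eq:kkt_con aa}. With the identities above, \eqref{eq:kkt_con a} becomes $g_{ij}=M^+_{ij}\geq 0$ while \eqref{eq:kkt_con aa} becomes $-g_{ij}=M^-_{ij}\geq 0$ (the minus sign is exactly the one written out in \eqref{eq:kkt_con aa}), and together these force $g_{ij}=0$. This sign bookkeeping is the one genuinely delicate step: dropping the explicit minus in \eqref{eq:kkt_con aa} would yield only $g_{ij}\geq 0$, which is too weak.

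Finally, for the minimizer claim when $Q$ is convex: \eqref{eq:edge_abs} with $\gZ=\gP$ minimizes a convex function over the affine set $\{\Theta : \theta_{ij}=0\ \forall (i,j)\in\gP\}$, to which $\Theta^*$ belongs by \eqref{eq:edge_abs_nec b}. For any feasible $\Theta$, convexity gives $Q(\Theta)\geq Q(\Theta^*)+\langle\nabla Q(\Theta^*),\,\Theta-\Theta^*\rangle$, and the inner product is zero: coordinates $(i,j)\in\gP$ contribute $g_{ij}(\theta_{ij}-\theta^*_{ij})=g_{ij}\cdot 0=0$; coordinates $(i,j)\notin\gP$ contribute $0\cdot(\theta_{ij}-\theta^*_{ij})=0$ by \eqref{eq:edge_abs_nec a}; and the $\ttheta$-block contributes $0$ by \eqref{eq:edge_abs_nec c}. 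Hence $Q(\Theta)\geq Q(\Theta^*)$ for all feasible $\Theta$, so $\Theta^*$ is a global minimizer of \eqref{eq:edge_abs}. I do not anticipate a real obstacle here; the two points needing care are the sign convention just discussed and the passage from $h(B)=0$ to the zero-pattern of $B$, for which I would invoke the product characterization $B\circ\nabla h(B)=0$ rather than re-expanding $h$ as a power series in $B$.
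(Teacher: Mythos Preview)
Your proof is correct and follows essentially the same route as the paper's: split on $(i,j)\in\gP$ versus $(i,j)\notin\gP$, use \eqref{eq:kkt_con a}--\eqref{eq:kkt_con aa} with the sign pairing to force $g_{ij}=0$ off $\gP$, use feasibility to force $\theta^*_{ij}=0$ on $\gP$, and then invoke convexity. The only cosmetic difference is in verifying \eqref{eq:edge_abs_nec b}: the paper argues graph-theoretically (a directed walk $j\to i$ in a DAG precludes the edge $i\to j$), whereas you use the algebraic identity $B\circ\nabla h(B)=0$ directly; both are one-line consequences of $h(B)=0$, and your explicit first-order convexity inequality just unpacks what the paper states as ``conditions in \eqref{eq:edge_abs_nec} are also sufficient for optimality.''
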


\begin{proof}[Proof of Theorem~\ref{theorem:local_point}]
Let $\theta$ be feasible solution (i.e. $W(|\theta|)$ is a DAG) to \eqref{eq:abs_notears} with $\Norm{\theta-\theta^*}_F<\epsilon$ (the Frobenius norm is used for concreteness). Since $\nabla h(W(|\theta|))$ is a continuous function of $\theta$, there exists a sufficiently small $\epsilon>0$ such that $[\nabla h(W(|\theta|))]_{ij}>0$ whenever $[\nabla h(W(|\theta^*|))]_{ij}>0$, in other words for $(i,j)$ in the set $\mathcal{P}$. Then for feasible $\theta$ within such an $\epsilon$-ball around $\theta^*$, it follows from the same argument in proof of Lemma \ref{lemma:hard_to_name}, $\theta_{ij}=0$ for $(i,j)\in \gP$. $\theta$ is therefore a feasible solution to \eqref{eq:edge_abs} for $\gZ=\gP$. By Lemma \ref{lemma:hard_to_name} and the convexity of $Q$, we then have $Q(\Theta^*)\leq Q(\Theta)$ for all feasible $\theta$ such that $\Norm{\theta-\theta^*}_F<\epsilon$.
\end{proof}

\subsection{Proof of Lemma \ref{lemma:hard_to_name} }
\begin{proof}
For $(i,j)\not \in \mathcal{Z} = \gP$, we have $[\nabla h(W(|\theta|))]_{ij}=0$, because $\frac{\partial h(W(\theta^++\theta^-))}{\partial \theta_{ij}^\pm} = [\nabla h(W(|\theta|))]_{ij}\mathbf{1}$, then $\frac{\partial h(W(\theta^++\theta^-))}{\partial \theta_{ij}^\pm}=\mathbf{0}$. From \eqref{eq:kkt_con a} and \eqref{eq:kkt_con aa},
\begin{align*}
    \frac{\partial Q(\Theta)}{\partial \theta_{ij}^+} = M_{ij}^+\geq 0\quad -\frac{\partial Q(\Theta)}{\partial \theta_{ij}^-} = M_{ij}^-\geq 0
\end{align*}
It is also know that $\frac{\partial Q(\Theta)}{\partial \theta_{ij}^+} = \frac{\partial Q(\Theta)}{\partial \theta_{ij}^-}$, so $\frac{\partial Q(\Theta)}{\partial \theta_{ij}^\pm} = 0$, it is equivalent to $\frac{\partial Q(\Theta)}{\partial \theta_{ij}} = 0$. It means \eqref{eq:edge_abs_nec a} is satisfied.

For $(i,j) \in \mathcal{Z} = \gP$, we have $[\nabla h(W(|\theta|))]_{ij}>0$. Since $(\theta^+,\theta^-)$ is feasible solution, which means $W(|\theta|)$ is a DAG. Moreover, $[\nabla h(W(|\theta|))]_{ij}>0$ indicates there is a directed path from node $j$ to $i$, then it implies there is no edge from node $i$ to node $j$. Hence, $[W(|\theta|)]_{ij} = \NormI{\theta_{ij}}=\NormI{\theta_{ij}^+}+\NormI{\theta_{ij}^-} = 0$, i.e. $\theta^+_{ij}=\theta^-_{ij} = 0$. we conclude $\theta_{ij}^* = \theta^+_{ij}- \theta^-_{ij}=0$. Now \eqref{eq:edge_abs_nec b} is satisfied. From \eqref{eq:kkt_con c}, it is obvious \eqref{eq:edge_abs_nec c} is satisfied.

Therefore, $(\theta^*,\Tilde{\theta})$ satisfies the optimality conditions in \eqref{eq:edge_abs_nec} for $\gZ = \gP$, where $\theta^* =  \theta^+-\theta^-$. If $Q(\theta)$ is convex function, conditions in \eqref{eq:edge_abs_nec} is also sufficient for optimality in \eqref{eq:edge_abs}.
\end{proof}

\section{Detailed Experiments}
\label{app:sec:experiments}

\subsection{Experimental Setting}
Here we describe the details about how to generate graphs and data for Linear SEMs with different noise distributions, fully connected graphs, logistic models and nonlinear models with neural networks. For each model, a random graph $\G$ was generated from one of two random graph models, \Erdos-\Renyi (ER) or scale-free (SF) with $kd$ edges $(k \in \{1, 2, 4\})$ on average, denoted by ER$k$ or SF$k$.

\begin{itemize}
    \item \textit{\Erdos-\Renyi}(ER), Random graphs whose edges are add independently with equal probability. We simulated models with $d,2d$ and $4d$ edges (in expectation) each, denoted by $ER1, ER2,$ and $ER4$ respectively.
    \item Scale-free network(SF). Network simulated according to the preferential attachment process \cite{barabasi1999emergence}. We simulated scale-free network with $d,2d$ and $4d$ edges and $\beta=1$, where $\beta$ is the exponent used in the preferential attachment process.
\end{itemize}

\paragraph{Linear SEMs.}
Given a random DAG $B\in \{0,1\}^{d\times d}$ from one of these two graphs, we assigned edge weights independently from Unif$([-2,-0.5]\union [0.5,2])$ to obtain a weight matrix $W \in \sR^{d\times d}$. Given $W,$ we sampled $X = W^\top X+z \in \sR^{d}$ according to the following three noise models:
\begin{itemize}
    \item \textit{Gaussian noise} with equal variance(\texttt{Gauss-EV}). $z\sim \mathcal{N}(0,I_{d\times d})$
    \item \textit{Gaussian noise} with unequal variance (\texttt{Gauss-NV}): $z_i \sim \mathcal{N}(0,\sigma^2_i),i=1,\ldots,d$ where $\sigma_i\sim \text{Unif}[1,2]$
    \item  \textit{Exponential noise} (\texttt{Exp}). $z_j\sim$ Exp$(1)$, $j=1,\ldots,d$
    \item  \textit{Gumbel noise} (\texttt{Gumbel}). $z_j\sim$ Gumbel$(0,1)$, $j=1\ldots,d$
\end{itemize}

Based on these models, we generated random datasets $\rmX\in \sR^{n\times d}$ by generating the rows \iid according to one of the models above. For each simulation, we generated $n=1000$ samples for graphs with $d \in\{ 10; 20; 50; 100\}$ nodes. 
For each dataset, we run FGS, PC, NOTEARS, KKTS with NOTEARS as initialization, TOPO with random initialization, TOPO with NOTEARS as initialization, and GOLEM-EV(equal variance), GOLEM-NV(Unequal variance). Here random initialization means a topological sort $\pi$ is randomly sampled, the solve \eqref{eq:order_opt} to obtain $\ordopt$ as initialization. Finally, a post-processing threshold of $\omega = 0.3$ is applied on $W$, following \cite{xun2018}. Since FGS outputs a CPDAG instead of a DAG or weight matrix, we orient the undirected edges favorably when making comparisons. In linear model with unequal variance Gaussian noise, the minimax concave penalty (MCP) is used to approximate $\ell_0$ penalty,
\begin{equation*}
    p(w) = \left\{
    \begin{array}{ll}
        \lambda|w|-\frac{w^2}{2\beta} & \text{if } |w|\leq \beta \lambda \\
        \frac{\beta \lambda^2}{2} & \text{otherwise}
    \end{array}
    \right.
\end{equation*}
and set $\lambda = 0.005$ and $\beta = 10$.

For TOPO, we use the least-square loss $Q(W,\rmX)=\frac{1}{2n}\|\rmX-\rmX W\|_F^2$ without any regularization for all noise type. We also use the polynomial acyclicity penalty $h(A) = \Tr((I+A/d)^d)-d$ \cite{yu2019dag} and $h(A) = -\log \det(I-A)$ \cite{bello2022}, because it is faster and more accurate than $h(A) = \Tr(e^A)-d$ \cite{xun2018}. For the choices of $\Ys,\Yl,s_0$, Table \ref{tb:hyerparameters_linear} summarizes the suggested hyerparameters. The basic idea is to increase $\Ys,\Yl,s_0$ when $d$ grows or graph get denser. 
\begin{table}[H]
\centering
\begin{tabular}{c|ccc}
\hline
\# node & $\Ys$ & $\Yl$ & $s_0$ \\ \hline
$d=10$  & 30    & 45    & 1     \\
$d=20$  & 50    & 150   & 1     \\
$d=50$  & 100   & 1000  & 10    \\
$d=100$ & 150   & 2500  & 15    \\ \hline
\end{tabular}
\caption{Suggested hyperparamters for $\Ys,\Yl,s_0$}
\label{tb:hyerparameters_linear}
\end{table}

\paragraph{Logistic Models.} Given $\G$, we assigned edge weights independently from Unif$([-2,-0.5]\union [0.5,2])$ to obtain a weight matrix $W \in \sR^{d\times d}$. Given $W$, we sample $X_j$ according to following
\[X_j = \text{Bernoulli}(\operatorname{exp} (w_j^\top X)/(1+\operatorname{exp} (w_j^\top X)))\quad j=1,\ldots,d\]
Based on these models, we generated random datasets $\rmX\in \sR^{n\times d}$ by generating the rows \iid according to one of the models above. For each simulation, we generated $n=10000$ samples for graphs with $d \in\{ 10; 20; 30; 40; 50\}$ nodes. 
For each dataset, we run FGS, PC, NOTEARS, TOPO with random initialization, TOPO with NOTEARS as initialization. We use penalized log-likelihood as score function, i.e. $$Q(f,\rmX)  = \frac{1}{n}\sum_{i=1}^d\mathbf{1}_n^\top\left(\log (\mathbf{1}_n+\text{exp}(f_i(\rmX)))-\mathbf{x}_i\circ f_i(\rmX)\right)+\lambda \NormI{W}$$
where $\lambda = 0.01$.
\paragraph{Fully Connected Graphs.}
We randomly generate a topological sort $\pi$, and generated a fully connected graph that is consistent with topological sort $\pi$. Other setting is the same as Linear SEM. Because this is a really hard problem, we increase $\Ys,\Yl,s_0$ compared to Table \ref{tb:hyerparameters_linear}.

\paragraph{Nonlinear Models with Neural Networks.}
We mainly follow the nonlinear setting in \citet{zheng2020}. Given $\G$, we simulate the SEM: 
\[X_j = f_j(X_{\text{pa}(j)})+z_j\qquad \forall j\in [d]\]
where $z_j\sim \mathcal{N}(0,1)$. Here $f_j$ is a randomly initialized MLP as described in Section \ref{sec:gradient_based_review}  

For TOPO, the score function is 
\[Q(f,\rmX) = \frac{1}{2n}\sum_{i=1}^d\NormII{\mathbf{x}_i-\hat{f}_i(\rmX)}^2\]
Here each $\hat{f}_i$ is chosen as MLP with one hidden layer of size $30$ and sigmoid activation.

\paragraph{Implementation}
The implementation details of baseline are listed below: 
\begin{itemize}
    \item FGS and PC are standard baseline for structure learning. The implementation is based on the \texttt{py-causal} package, available at \href{https://github.com/bd2kccd/py-causal}{https://github.com/bd2kccd/py-causal}. For PC algorithm, use Fisher Z test. For GES, we use \texttt{cg-bic-scores} and \texttt{maxDegree=50}.
    \item NOTEARS (NOTERAS\_MLP) was implemented using Python code: \href{https://github.com/xunzheng/notears}{https://github.com/xunzheng/notears}. Its score function is least square loss with $\ell_1$ regularization. We use default threshold $\omega= 0.3$.
    \item KKTS was implemented using Python code: \href{https://github.com/skypea/DAG_No_Fear}{https://github.com/skypea/DAG\_No\_Fear}. We allow KKTS to reverse edges in each iteration to achieve best performance.
    \item GOLEM was implemented using Python and Tensorflow code: \href{https://github.com/ignavierng/golem}{https://github.com/ignavierng/golem}. We use default parameters.
    % For unequal variance Gaussian noise, $z_i\sim \mathcal{N}(0,\sigma_i^2)$, $i=1,\ldots,d,$ where $\sigma_i\sim$ Unif$[1,2]$, we set $\lambda_1$ to $0.002$ and keep $\lambda_2$ unchanged. 
\end{itemize}
In the experiments, we use default hyperparameters for those baseline unless otherwise stated. 
\subsection{Metrics}
We evaluate the performance of each algorithm with the following three metrics:
\begin{itemize}
    \item Structure Hamming distance (SHD): A standard benchmark in the structure learning literature that counts the total number of edges additions, deletions, and reversals needed to convert the estimated graph into the true graph. For PC and GES, they all return CPDAG that may contain undirected edges, in which case we evaluate them favorably by assuming correct direction for undirected edges whenever possible. 
    \item Score: the value of least square score function.
    \item KKT: Whether solution satisfies the KKT conditions, 1 stands for Yes and 0 stands for No. Define a KKT matrix for $\theta$, denoted as $K(\theta)$.
    \[
    [K(\theta)]_{ij} = \left\{\begin{array}{cc}
        \Abs{\frac{\partial Q(\Theta)}{\partial \theta_{ij}}} & \text{if } \nabla h(W(\theta)) = 0 \\
         \Abs{W(\theta)_{ij}} & \text{if } \nabla h(W(\theta))>0
    \end{array}\right.
    \]
    \[
    \text{KKT} = \left\{\begin{array}{cc}
        1 &   \text{if } \max_{ij}{[K(\theta)]_{ij}} = 0 \\
         0 &  \text{if } \max_{ij}{[K(\theta)]_{ij}}\ne0 
    \end{array}\right.
    \]
    \item Timing: how much time the algorithm takes to run, we use it to measure the speed of the algorithms.
\end{itemize}

\subsection{Sensitivity of $\Ys,\Yl,s_0$}
\label{sec:sensitivity}
In Tables 2, 3, 4, and 5, we investigate the effect of sizes of search space and the number of searching times in larger spaces on Algorithm \ref{alg:algo1}. Here we focus on two cases: (1) Simple case: $ER1$ graphs with Gaussian noise and $d=100$. (2) Hard case: $SF4$ graphs with Guassian noise and $d=100$. 
Columns represent different $\Ys=50,150,200$. Rows represent different $\Yl=1000,2000,3000$. Blank implies algorithm has stopped at current iteration. Here we use $n_0$ to indicate how many large searches has been used. Generally speaking, for sparser graphs, using small search space and small $s_0$ are enough to return a good solution. While for denser graphs, the performance of Algorithm \ref{alg:algo1} is more sensitive to the choice of $\Ys,\Yl,s_0$.
\begin{table}[t]
    \centering
    \label{tb:senseER1SHD}
    \begin{tabular}{lcccccccccccc}\toprule
    & \multicolumn{3}{c}{$n_0=0$} & \multicolumn{3}{c}{$n_0=1$} & \multicolumn{3}{c}{$n_0=2$}  & \multicolumn{3}{c}{$n_0=3$} 
    \\\cmidrule(lr){2-4}\cmidrule(lr){5-7}\cmidrule(lr){8-10}\cmidrule(lr){11-13}
          & 50  & 150 & 200        & 50  & 150 & 200  & 50  & 150 & 200    & 50  & 150 &  200\\\midrule
    1000  & 136   & 136 & 136      & 20 & 11 & 0      & 8 & 6 &            & 5 & 0 &\\
    2000  & 136   & 136 & 136      & 19 & 11 & 0      & 8 & 6 &            & 4 & 0 &\\
    3000  & 136   & 136 & 136      & 19 & 11 & 0      & 8 &6 &             & 2 & 0 &\\\bottomrule
    \end{tabular}
    \caption{Structural Hamming Distance (SHD) for different $\Ys,\Yl,n_0$ with  $d=100$ and $n=1000$ on an Gaussian ER1 graph}
\end{table}

\begin{table}[H]
    \centering
    \resizebox{\columnwidth}{!}{
    \label{tb:senseER1loss}
     \begin{tabular}{lcccccccccccc}\toprule
    & \multicolumn{3}{c}{$n_0=0$} & \multicolumn{3}{c}{$n_0=1$} & \multicolumn{3}{c}{$n_0=2$}  & \multicolumn{3}{c}{$n_0=3$} 
    \\\cmidrule(lr){2-4}\cmidrule(lr){5-7}\cmidrule(lr){8-10}\cmidrule(lr){11-13}
          & 50  & 150 & 200    & 50  & 150 & 200 & 50  & 150 & 200    & 50  & 150 & 200 \\ \midrule
    1000  & 113.017& 113.017 & 113.017  & 49.570 &	48.291	&47.215 & 47.731 &	47.874& &47.451 &	47.219&\\
    2000   & 113.017   & 113.017 &113.017     &49.281	&48.291&	47.215 & 48.141	&47.874&
 & 47.438 &	47.219 &\\
    3000   & 113.017  & 113.017 & 113.017      &49.281&	48.291	&47.215 & 48.141 &	47.874	&
 &47.369&	47.219&	
    \\\bottomrule
    \end{tabular}
    }
    \caption{Score for different $\Ys,\Yl,n_0$ with  $d=100$ and $n=1000$ on Gaussian ER1 graphs.}
\end{table}
\begin{table}[H]
    \label{tb:senseSF4SHD}
    \centering
    \begin{tabular}{lcccccccccccc}\toprule
    & \multicolumn{3}{c}{$n_0=0$} & \multicolumn{3}{c}{$n_0=1$} & \multicolumn{3}{c}{$n_0=2$}  & \multicolumn{3}{c}{$n_0=3$} 
    \\\cmidrule(lr){2-4}\cmidrule(lr){5-7}\cmidrule(lr){8-10}\cmidrule(lr){11-13}
          & 50  & 150 & 200    & 50  & 150 & 200 & 50  & 150 & 200    & 50  & 150 & 200\\\midrule
    1000  & 776 & 405 & 322     & 672 & 244 & 144 & 568   & 185 & 143      & 295 & 58  & 143\\
    2000   &774 & 405 & 349      & 693  & 311 & 40 & 455   & 112 & 38      & 56  & 0 & 38\\
    3000   & 779 & 405 & 366      & 574 & 119 & 144 & 272   & 118 & 71      &44  &0  & 50\\\bottomrule
    \end{tabular}
    \caption{Structural Hamming Distance (SHD) for different $\Ys,\Yl,n_0$ with  $d=100$ and $n=1000$ on Gaussian SF4 graphs.}
\end{table}

\begin{table}[H]
    \centering
    \label{tb:senseSF4loss}
    \resizebox{\columnwidth}{!}{
    \begin{tabular}{lcccccccccccc}\toprule
    & \multicolumn{3}{c}{$n_0=0$} & \multicolumn{3}{c}{$n_0=1$} & \multicolumn{3}{c}{$n_0=2$}  & \multicolumn{3}{c}{$n_0=3$} 
    \\\cmidrule(lr){2-4}\cmidrule(lr){5-7}\cmidrule(lr){8-10}\cmidrule(lr){11-13}
          & 50  & 150 & 200    & 50  & 150 & 200 & 50  & 150 & 200    & 50  & 150 & 200\\\midrule
    1000  & 194.871  & 67.834 & 63.498  &162.679  &57.024  & 50.124 & 82.946&55.102&49.199     & 58.244 &	48.113&	49.198\\
    2000   & 189.561   & 67.834 & 62.848 & 157.953 &61.346  & 48.028  &83.159&	50.351	&47.800   & 47.905 &47.695&	47.799\\
    3000   & 187.662   & 67.843 & 62.79   &  106.329 & 54.097 &  51.45 & 56.241  & 49.924 & 49.70      & 47.925 & 47.694 & 47.71 \\\bottomrule
    \end{tabular}
    }
    \caption{Loss for different $\Ys,\Yl,n_0$ with $d=100$ and $n=1000$ on Gaussian SF4 graphs.}
\end{table}

\subsection{Linear Models}
\label{app:sec:linear_models}
\textbf{SHD comparisons: ER and SF graphs without FGES and PC}
\begin{figure}[H]
    \centering
    \includegraphics[width = 1 \textwidth]{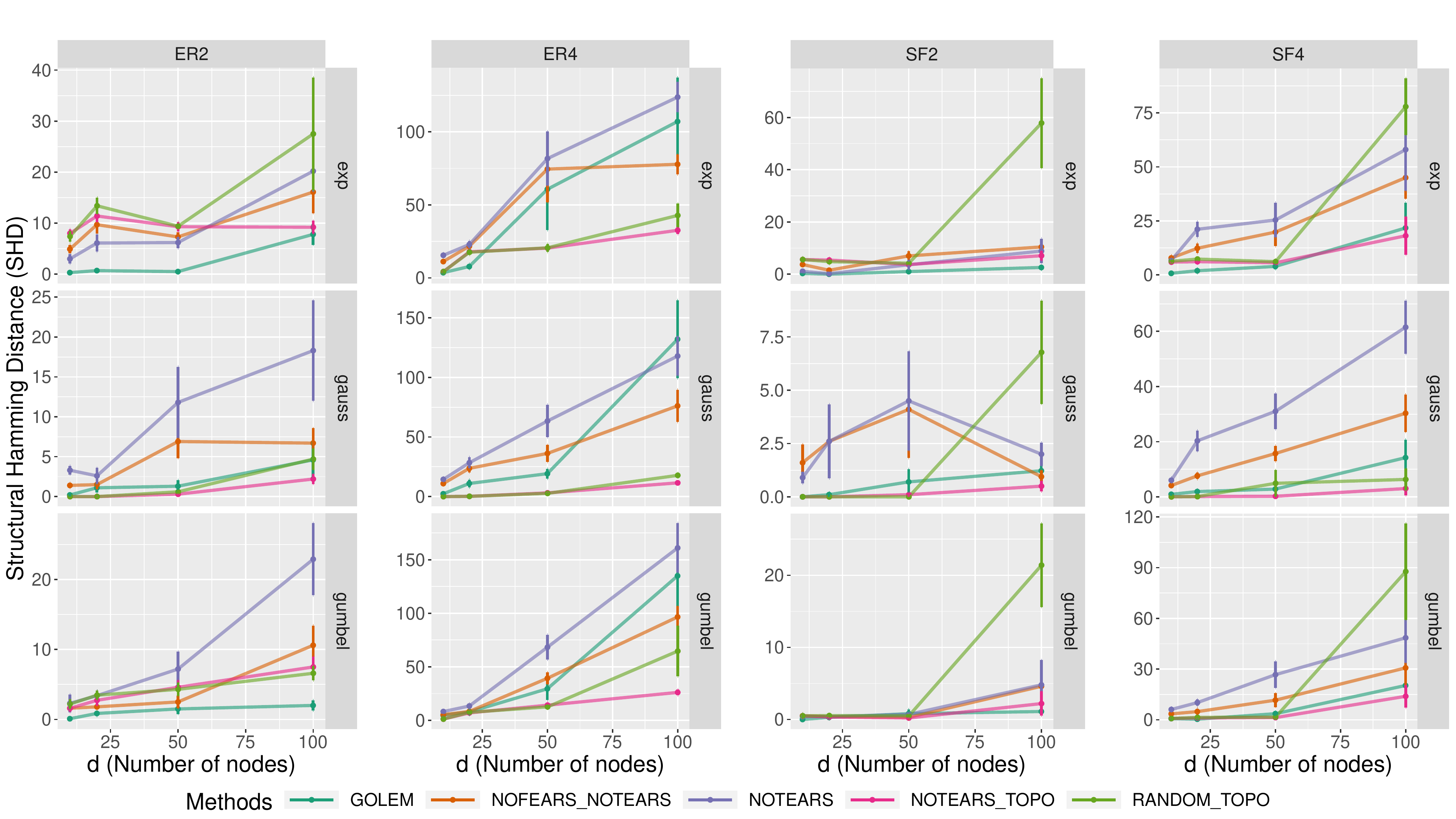}
    \caption{Stuctural Hamming distance (SHD) (lower is better). Row: noise type of SEM. Columns: random graph types, \{SF, ER\}-$k$ = \{Scale-Free,\Erdos-\Renyi\} graphs with $kd$ expected edges. Here, $\mathrm{nofears\_notears}$ (KKTS algorithm \cite{dennis2020} uses NOTEARS solution as initial point). 
	Our methods are $\topoRandom$ (random initialization), and $\topoNotears$ (using NOTEARS solution as initial point.)
	Error bars represent standard errors over 10 simulations.}
    \label{fig:shd_er_without_bad}
\end{figure}
\clearpage
\textbf{SHD comparisons: ER graphs}
\begin{figure}[H]
    \centering
    \includegraphics[width = 1 \textwidth]{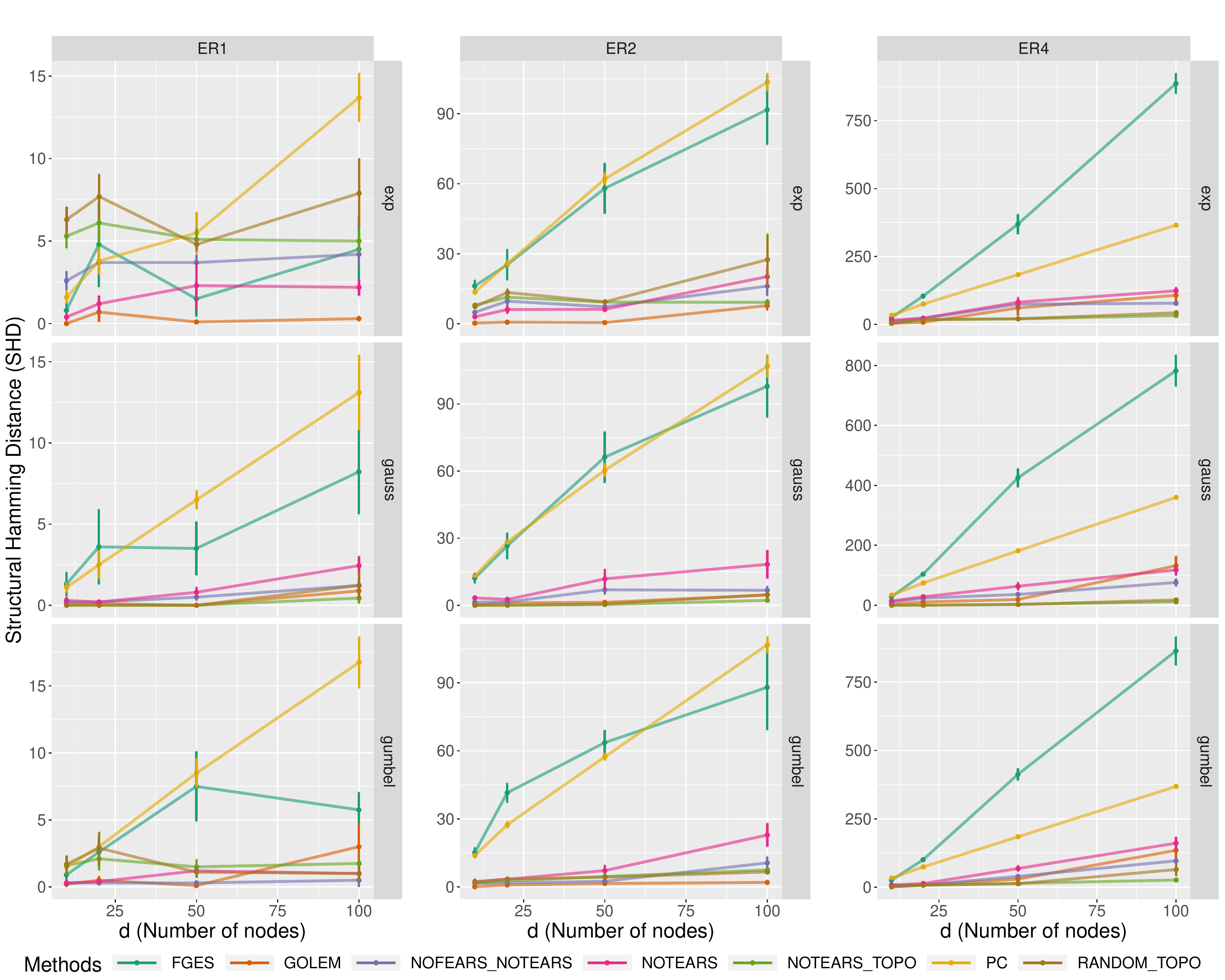}
    \caption{Stuctural Hamming distance (SHD) (lower is better). Row: noise type of SEM. Columns: random graph types, \{ER\}-$k$ = \{\Erdos-\Renyi\} graphs with $kd$ expected edges. Here, $\mathrm{nofears\_notears}$ (KKTS algorithm \cite{dennis2020} uses NOTEARS solution as initial point). 
	Our methods are $\topoRandom$ (random initialization), and $\topoNotears$ (using NOTEARS solution as initial point.)
	Error bars represent standard errors over 10 simulations.}
    \label{fig:shd_er}
\end{figure}
\clearpage
\textbf{SHD comparisons: SF graphs}
\begin{figure}[H]
    \centering
    \includegraphics[width =1 \textwidth]{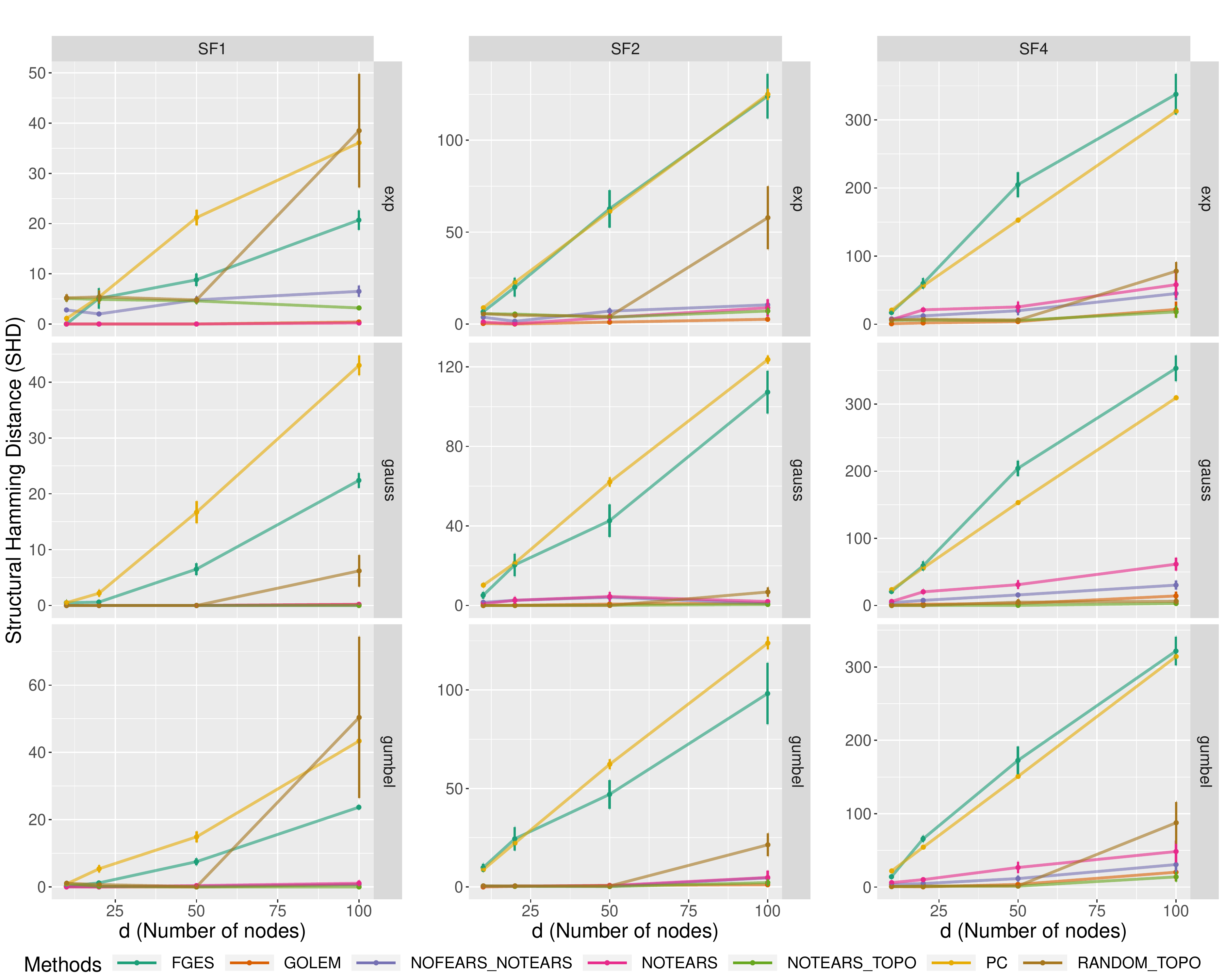}
    \caption{Structural Hamming distance(SHD) (lower is better). Row: noise type of SEM. Columns: random graph types, \{SF\}-$k$ = \{scale free\} graphs with $kd$ expected edges. Here, $\mathrm{nofears\_notears}$ (KKTS algorithm \cite{dennis2020} uses NOTEARS solution as initial point).
	Our methods are $\topoRandom$ (random initialization), and $\topoNotears$ (using NOTEARS solution as initial point.)
	Error bars represent standard errors over 10 simulations.}
    \label{fig:shd_sf}
\end{figure}
\clearpage
\textbf{Running time comparisons: ER graphs}
\begin{figure}[H]
    \centering
    \includegraphics[width =1 \textwidth]{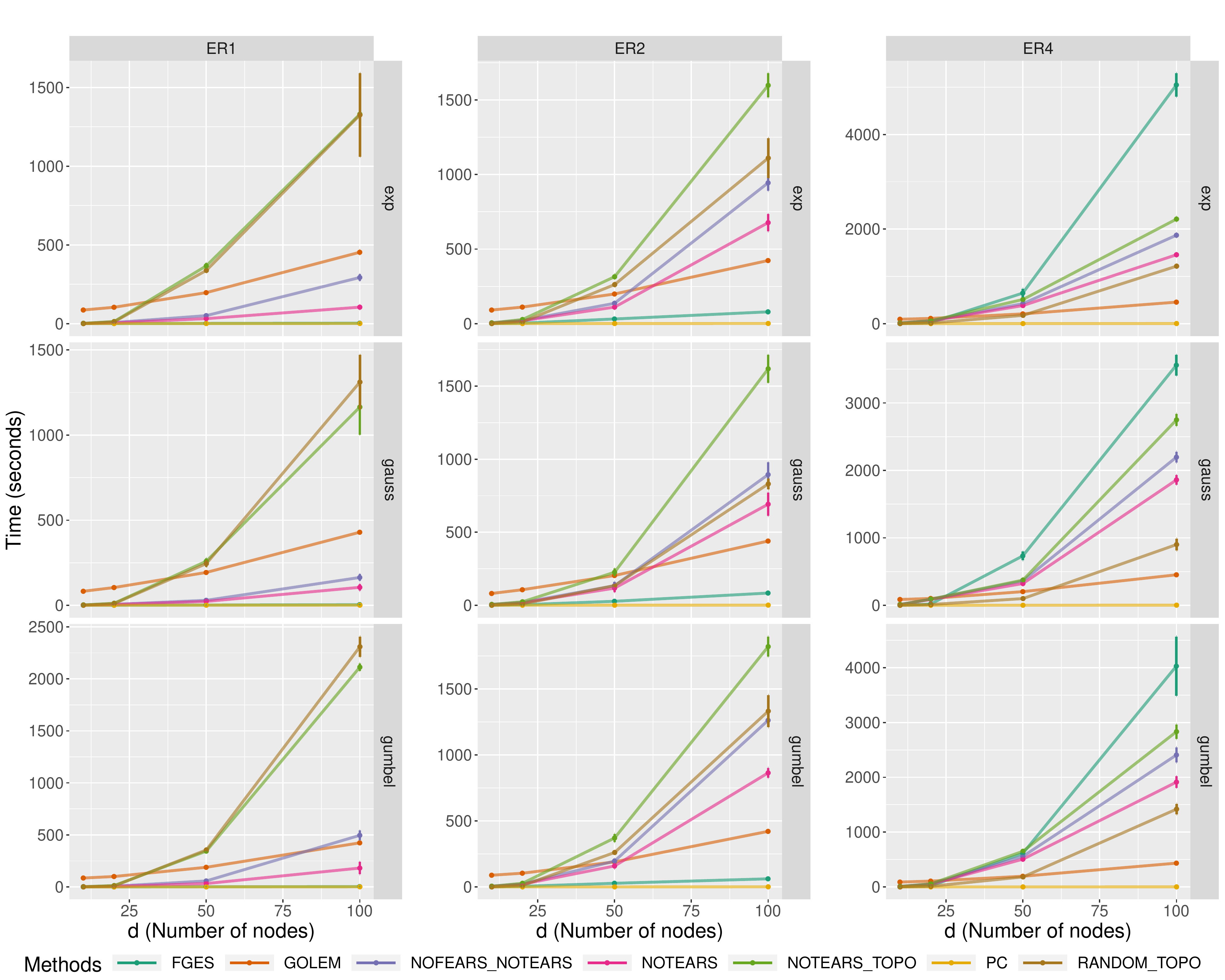}
    \caption{Runtime. Row: noise type of SEM. Columns: random graph types, \{ER\}-$k$ = \{\Erdos-\Renyi\} graphs with $kd$ expected edges. Here, $\mathrm{nofears\_notears}$ (KKTS algorithm \cite{dennis2020} uses NOTEARS solution as initial point).
	Our methods are $\topoRandom$ (random initialization), and $\topoNotears$ (using NOTEARS solution as initial point.)
	Error bars represent standard errors over 10 simulations.}
    \label{fig:time_er}
\end{figure}
\clearpage
\textbf{Running time comparisons: SF graphs}
\begin{figure}[!htb]
    \centering
    \includegraphics[width =1 \textwidth]{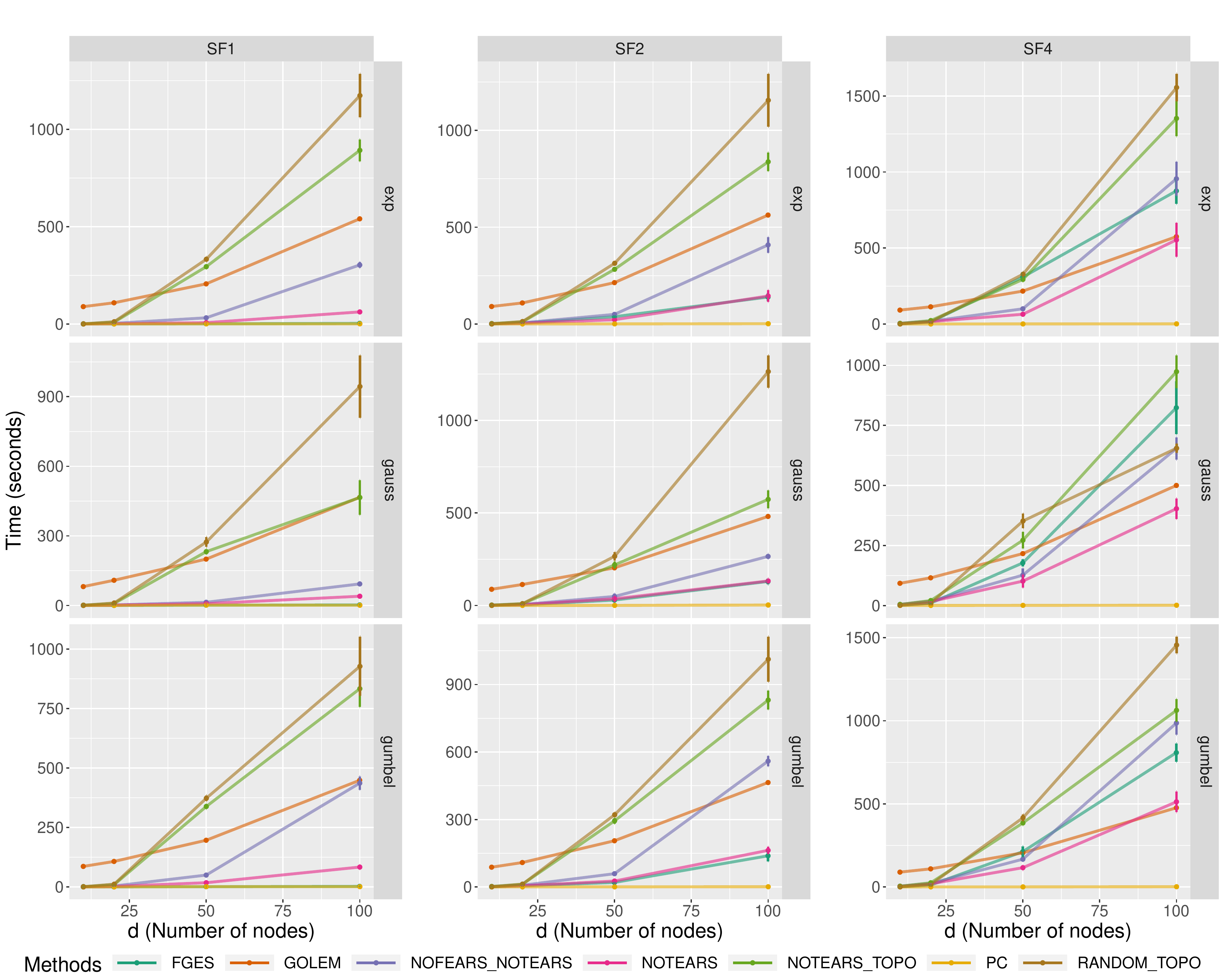}
    \caption{Structural Hamming distance(SHD) (lower is better). Row: noise type of SEM. Columns: random graph types, \{SF\}-$k$ = \{scale free\} graphs with $kd$ expected edges. Here, $\mathrm{nofears\_notears}$ (KKTS algorithm \cite{dennis2020} uses NOTEARS solution as initial point).
	Our methods are $\topoRandom$ (random initialization), and $\topoNotears$ (using NOTEARS solution as initial point.)
	Error bars represent standard errors over 10 simulations.}
    \label{fig:time_sf}
\end{figure}

\clearpage
\textbf{Score comparisons: ER graphs}
\begin{figure}[!htb]
    \centering
    \includegraphics[width = 1\textwidth]{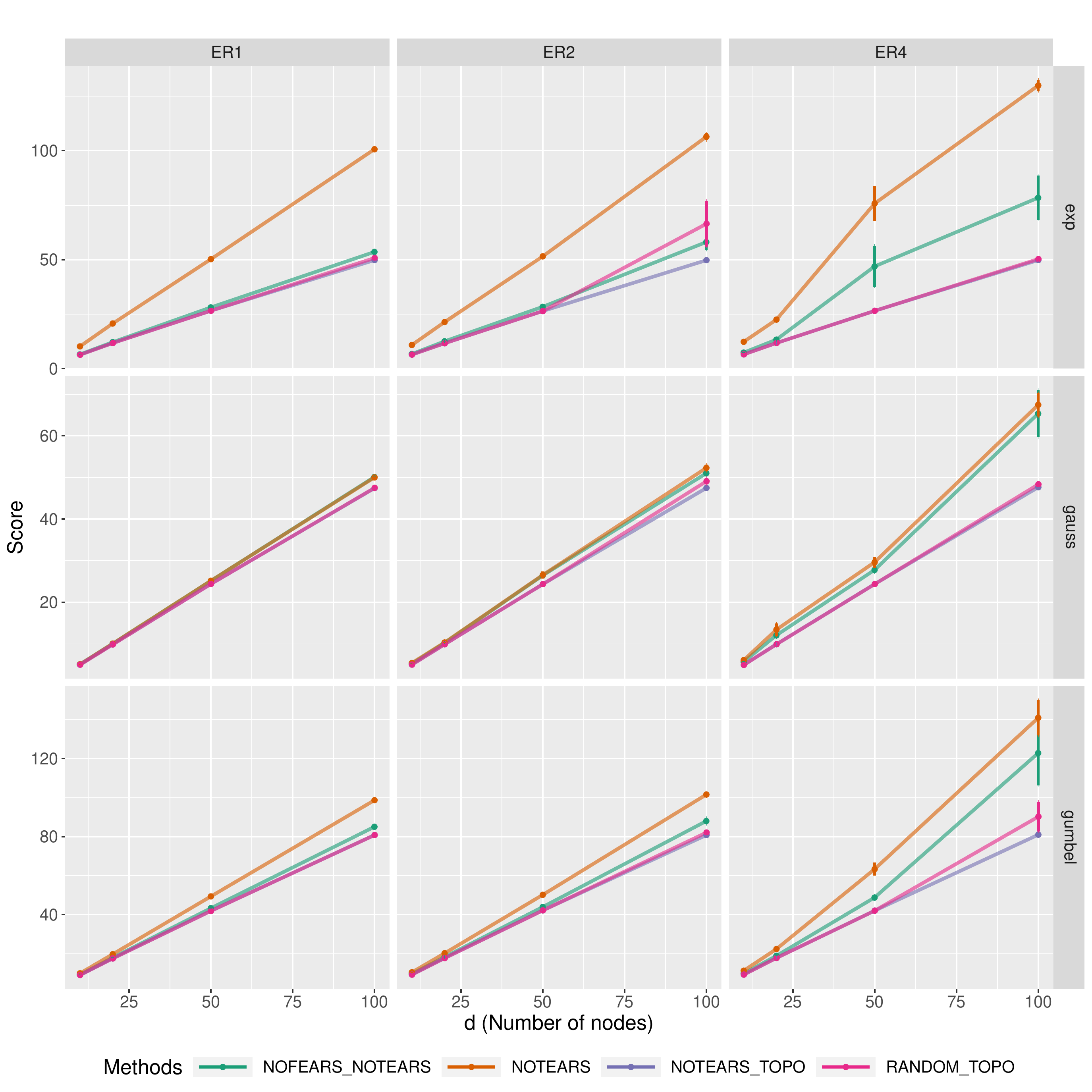}
    \caption{least square score (lower is better). Row: noise type of SEM. Columns: random graph types, \{ER\}-$k$ = \{\Erdos-\Renyi\} graphs with $kd$ expected edges. Here, $\mathrm{nofears\_notears}$ (KKTS algorithm \cite{dennis2020} uses NOTEARS solution as initial point). Our methods are $\topoRandom$ (random initialization), and $\topoNotears$ (using NOTEARS solution as initial point.)
	Error bars represent standard errors over 10 simulations.}
    \label{fig:loss_er}
\end{figure}

\clearpage
\textbf{Score comparisons: SF graphs}
\begin{figure}[H]
    \centering
    \includegraphics[width = 1\textwidth]{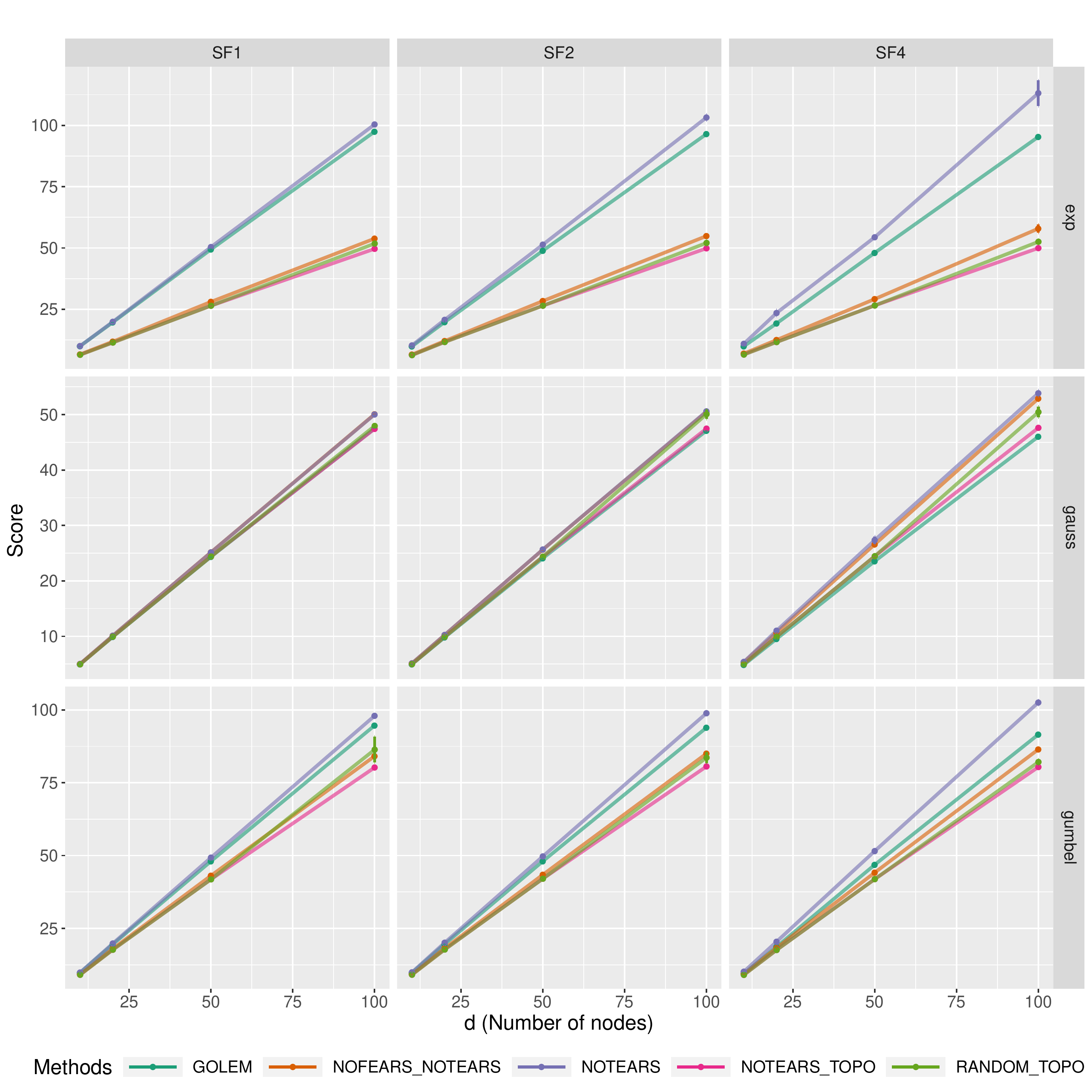}
    \caption{least square score (lower is better). Row: noise type of SEM. Columns: random graph types, \{SF\}-$k$ = \{scale free\} graphs with $kd$ expected edges. Here, $\mathrm{nofears\_notears}$ (KKTS algorithm \cite{dennis2020} uses NOTEARS solution as initial point). Our methods are $\topoRandom$ (random initialization), and $\topoNotears$ (using NOTEARS solution as initial point.)
	Error bars represent standard errors over 10 simulations.}
    \label{fig:loss_sf}
\end{figure}

\clearpage
\subsection{Nonlinear Models}
\label{app:sec:nonlinear_models}
\subsubsection{Logistic Model}
\begin{figure}[H]
    \centering
    \includegraphics[width = .75\textwidth]{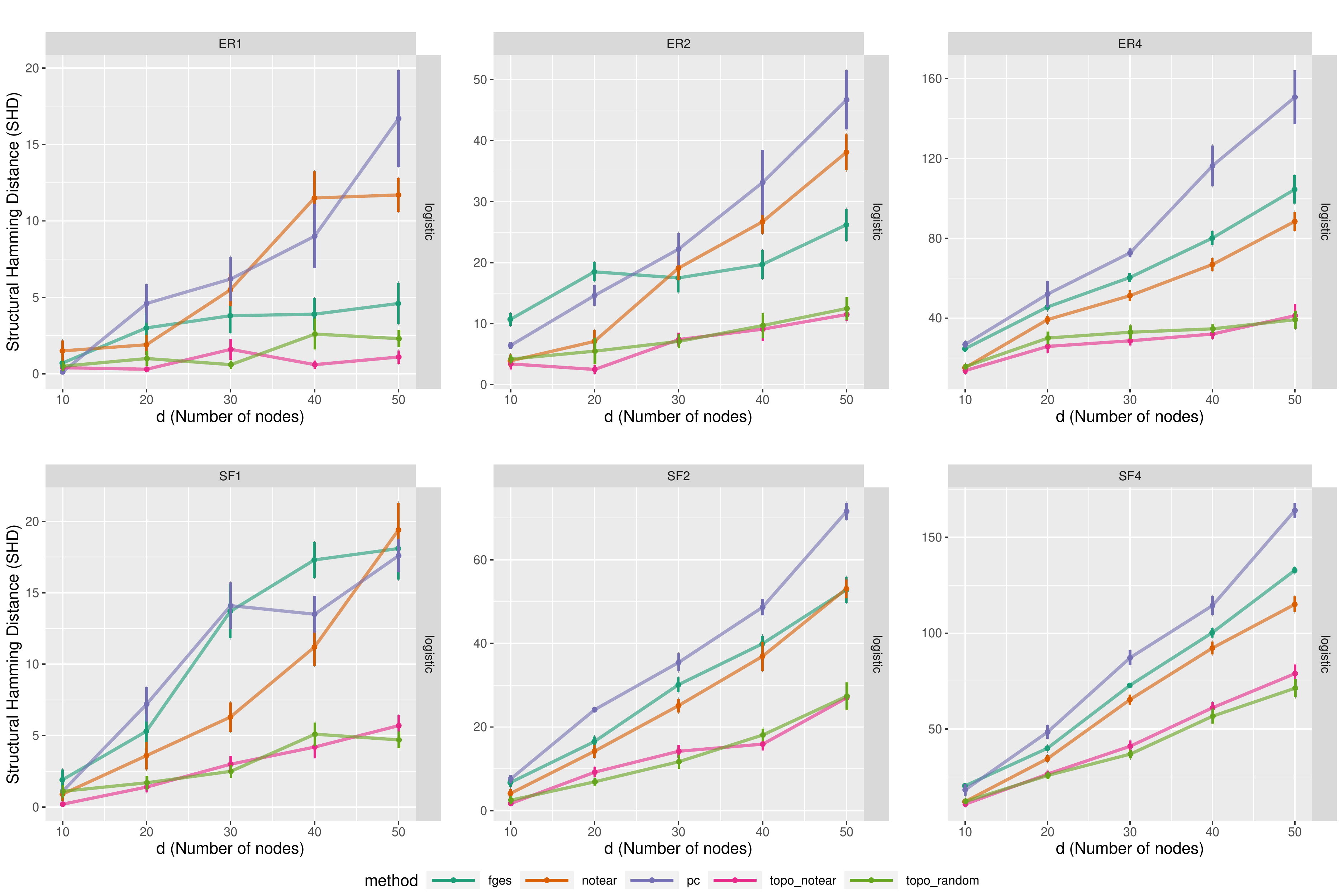}
    \caption{Structural Hamming distance(SHD) for Logistic Model, Row: random graph types, \{SF, ER\}-$k$ = \{Scale-Free,\Erdos-\Renyi\} graphs. Columns:  $kd$ expected edges.  Our methods are $\topoRandom$ (random initialization), and $\topoNotears$ (using NOTEARS solution as initial point.)
	Error bars represent standard errors over 10 simulations.}
    \label{fig:shd_logistic00}
\end{figure}
\subsubsection{Neural Networks}
\textbf{SHD comparison}
\begin{figure}[H]
    \centering
    \includegraphics[width =0.75 \textwidth]{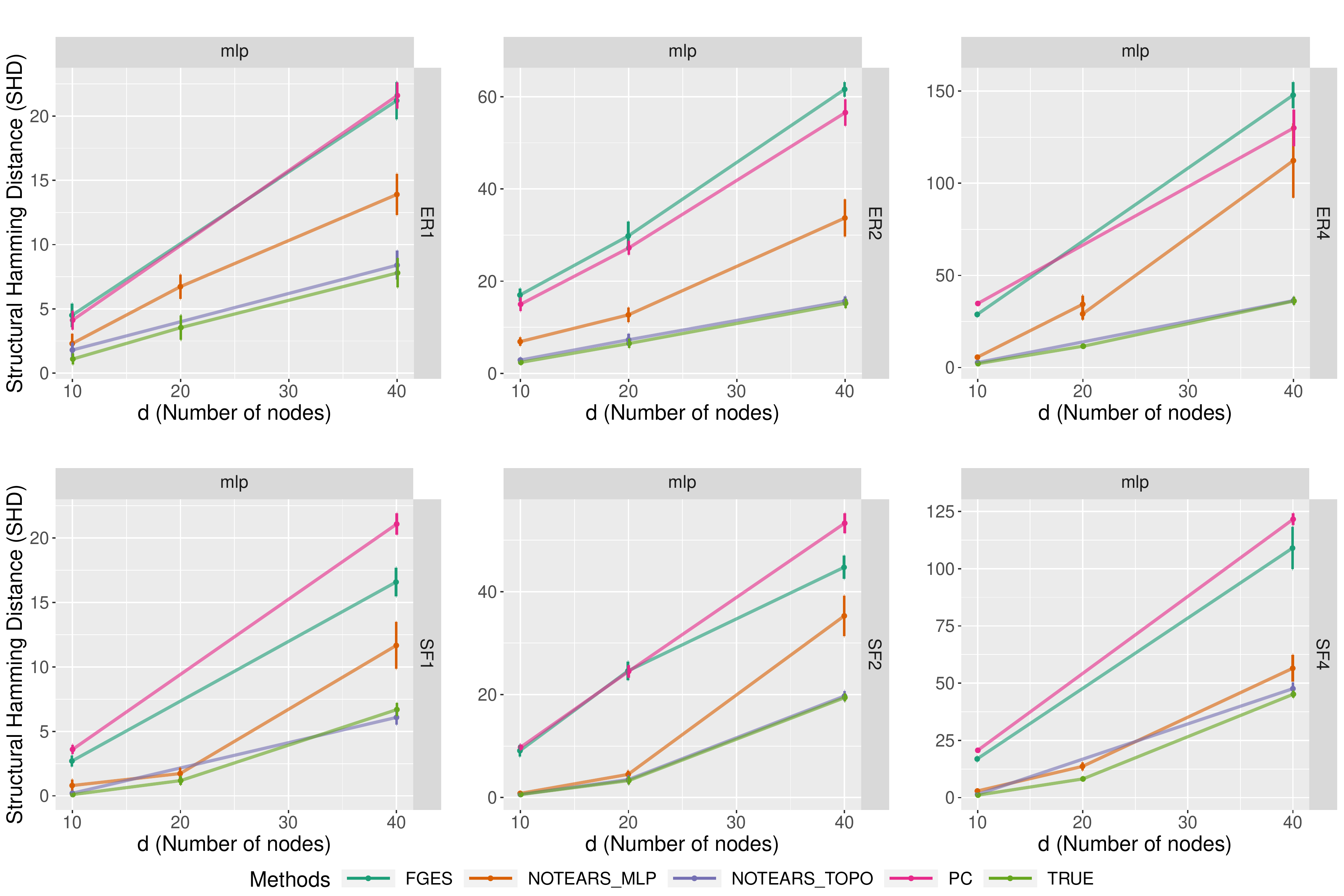}
    \caption{Structural Hamming distance(SHD) for Nonlinear Model with Neural Network, Row: random graph types, \{SF, ER\} = \{Scale-Free,\Erdos-\Renyi\} graphs. Columns:  $kd$ expected edges.  Our methods are $\topoRandom$ (random initialization), and $\topoNotears$ (using NOTEARS solution as initial point.) True(baseline): solution to \eqref{eq:order_opt_sol} with true topological sort using Neural Network.
	Error bars represent standard errors over 10 simulations.}
    \label{fig:shd_nonlinear00}
\end{figure}
\textbf{Score comparison}
\begin{figure}[H]
    \centering
    \includegraphics[width = 0.8\textwidth]{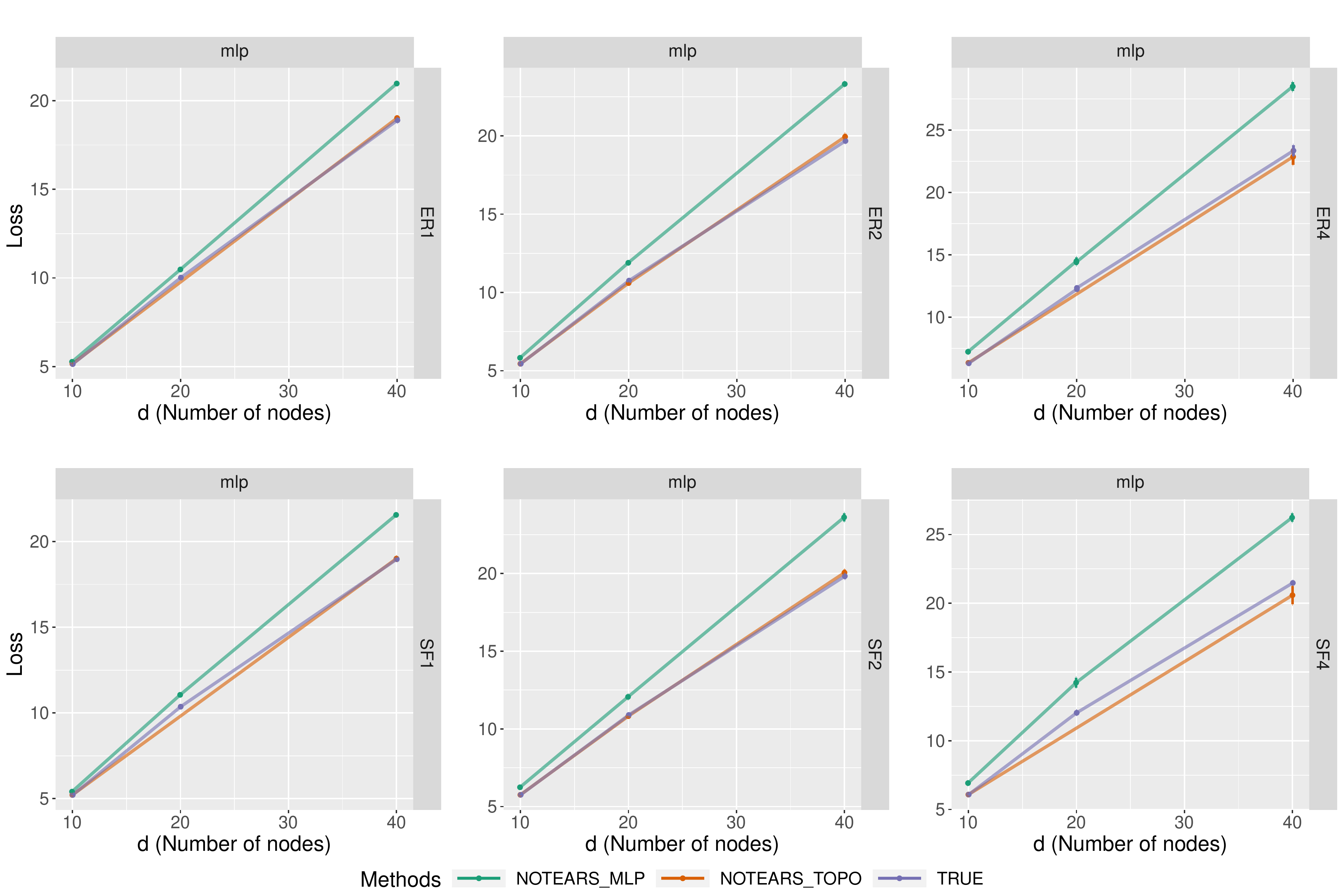}
    \caption{Score for Nonlinear Model with Neural Network, Row: random graph types, \{SF, ER\} = \{Scale-Free,\Erdos-\Renyi\} graphs. Columns:  $kd$ expected edges.  Our methods are $\topoRandom$ (random initialization), and $\topoNotears$ (using NOTEARS solution as initial point.) True(baseline): solution to \eqref{eq:order_opt_sol} with true topological sort using Neural Network.
	Error bars represent standard errors over 10 simulations.}
    \label{fig:loss_nonlinear}
\end{figure}
\subsection{Comparison against randomly chosen swapping set}
\label{app:sec:random_swapping_sets}
\begin{table}[H]
    \centering
    \begin{tabular}{ccccccc}\toprule
    & & & \multicolumn{2}{c}{TOPO} & \multicolumn{2}{c}{Random} 
    \\\cmidrule(lr){4-5}\cmidrule(lr){6-7}
       $n$   & $d$     & \# edge & SHD   & loss  & SHD   & loss \\\midrule
    1000  & 20    & 80      & 0.1   & 9.85       & 32.5   & 26.85\\
    1000  & 50    & 200     & 3     & 24.33      & 126.7   & 57.33\\
    1000  & 100   & 400     & 13.75 & 47.45      & 286.9 & 107.95\\\bottomrule
    \end{tabular}
    \caption{TOPO: the candidate swapping set $\mathcal{Y}(\theta,\tau,\xi)$ by \eqref{eq:cand_swaps} .``Random'': the TOPO algorithm chooses the candidate swapping set $\mathcal{Y}(\theta,\tau,\xi)$ randomly. Model: Linear model with Gaussian noise. Graph type: ER4 graphs. It justifies choosing swapping set $\mathcal{Y}(\theta,\tau,\xi)$ by \eqref{eq:cand_swaps} can significantly improve the performance of TOPO Algorithm.} 
    \label{Table:randomly_chosen_set}
\end{table}

\newpage
\subsection{Accuracy vs iteration}

\begin{figure}[H]
\centering
\subfigure[$d = 20$]{\label{fig:cc}\includegraphics[width=0.45\linewidth]{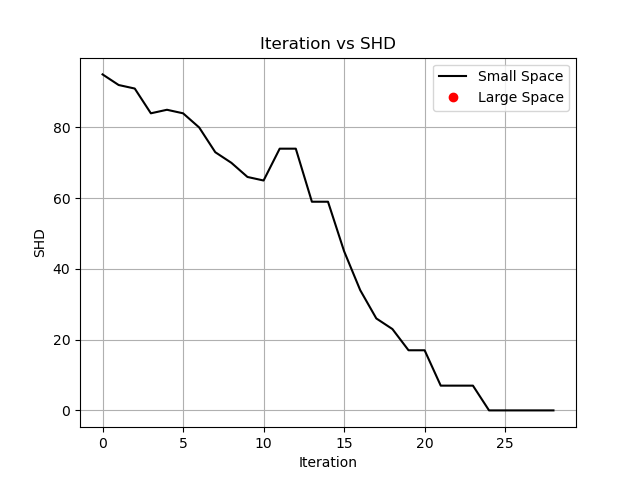}}
\subfigure[$d = 20$]{\label{fig:dd}\includegraphics[width=0.45\linewidth]{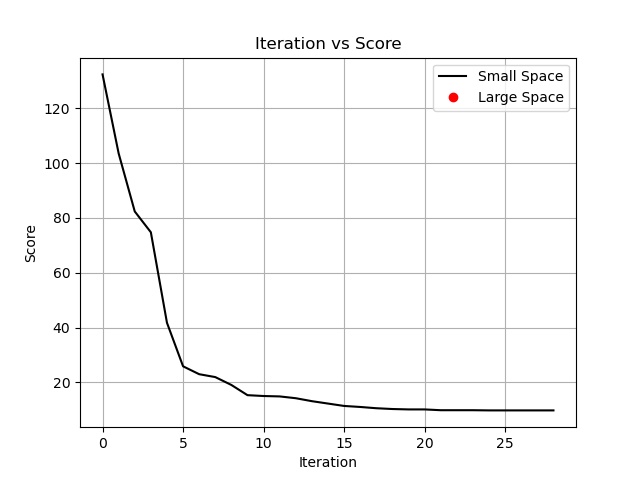}}
\\
\subfigure[$d = 50$]{\label{fig:ee}\includegraphics[width=0.45\linewidth]{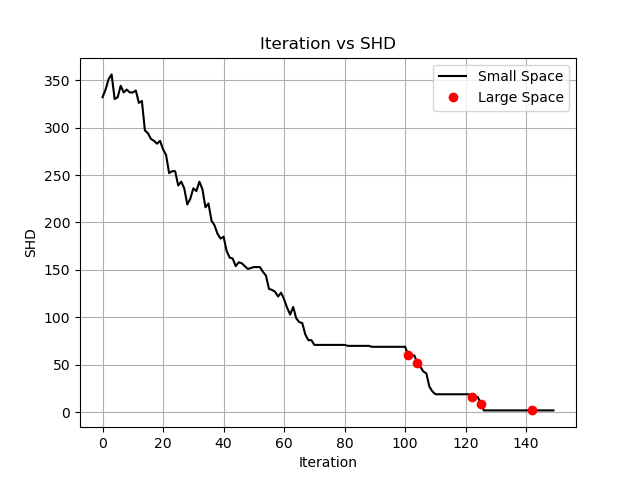}}
\subfigure[$d = 50$]{\label{fig:ff}\includegraphics[width=0.45\linewidth]{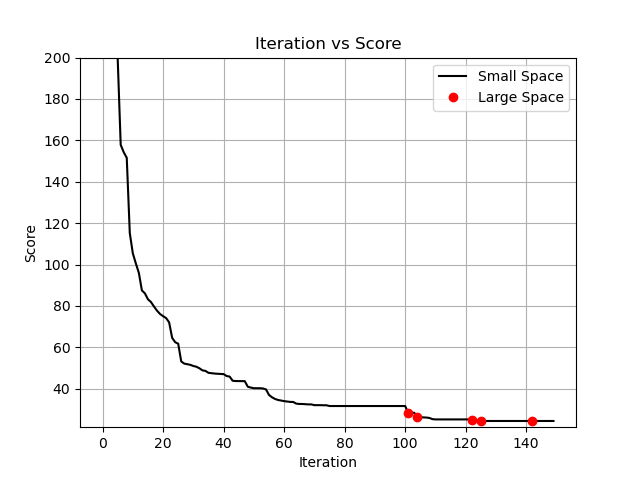}}\\
\subfigure[$d = 100$]{\label{fig:gg}\includegraphics[width=0.45\linewidth]{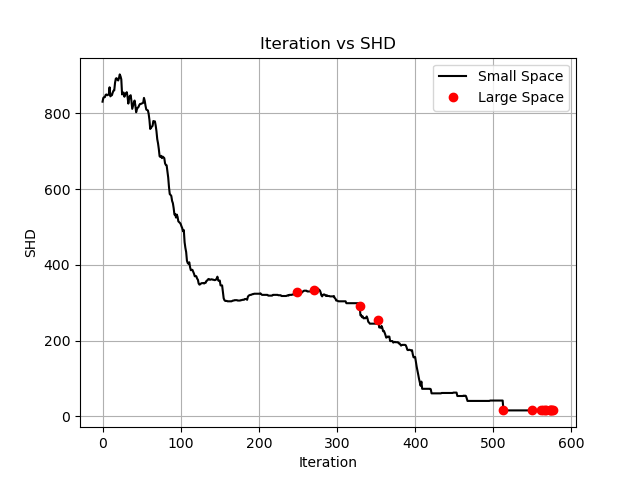}}
\subfigure[$d = 100$]{\label{fig:hh}\includegraphics[width=0.45\linewidth]{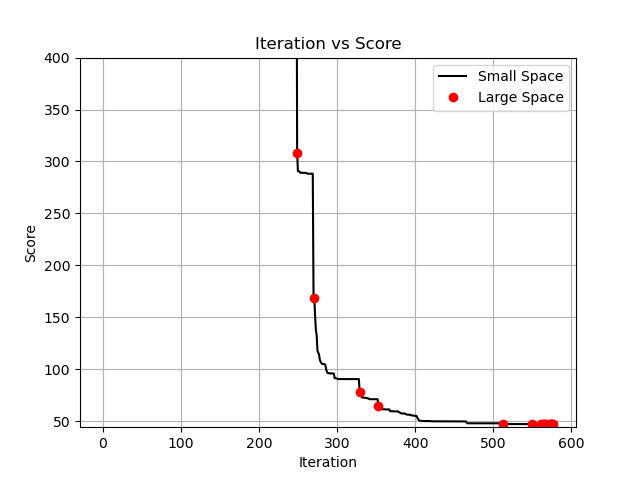}}
\caption{Iteration vs SHD (left)/Score (right). Model: linear model with Gaussian noise. Graph type: ER4 graphs. Black: search in small space. Red: search in large space. When graph is small, searching in small space is enough for finding a good local optimal, but when graph gets larger, searching in large space helps to jump out of local point and decrease the score.}
\end{figure}
\label{app:sec:accuracy_vs_iters}

\clearpage
\subsection{Greedy Strategy}

\begin{figure}[H]
    \centering
    \subfigure[Structural Hamming Distance (SHD) ]{\label{fig:ii}\includegraphics[width=0.7\linewidth]{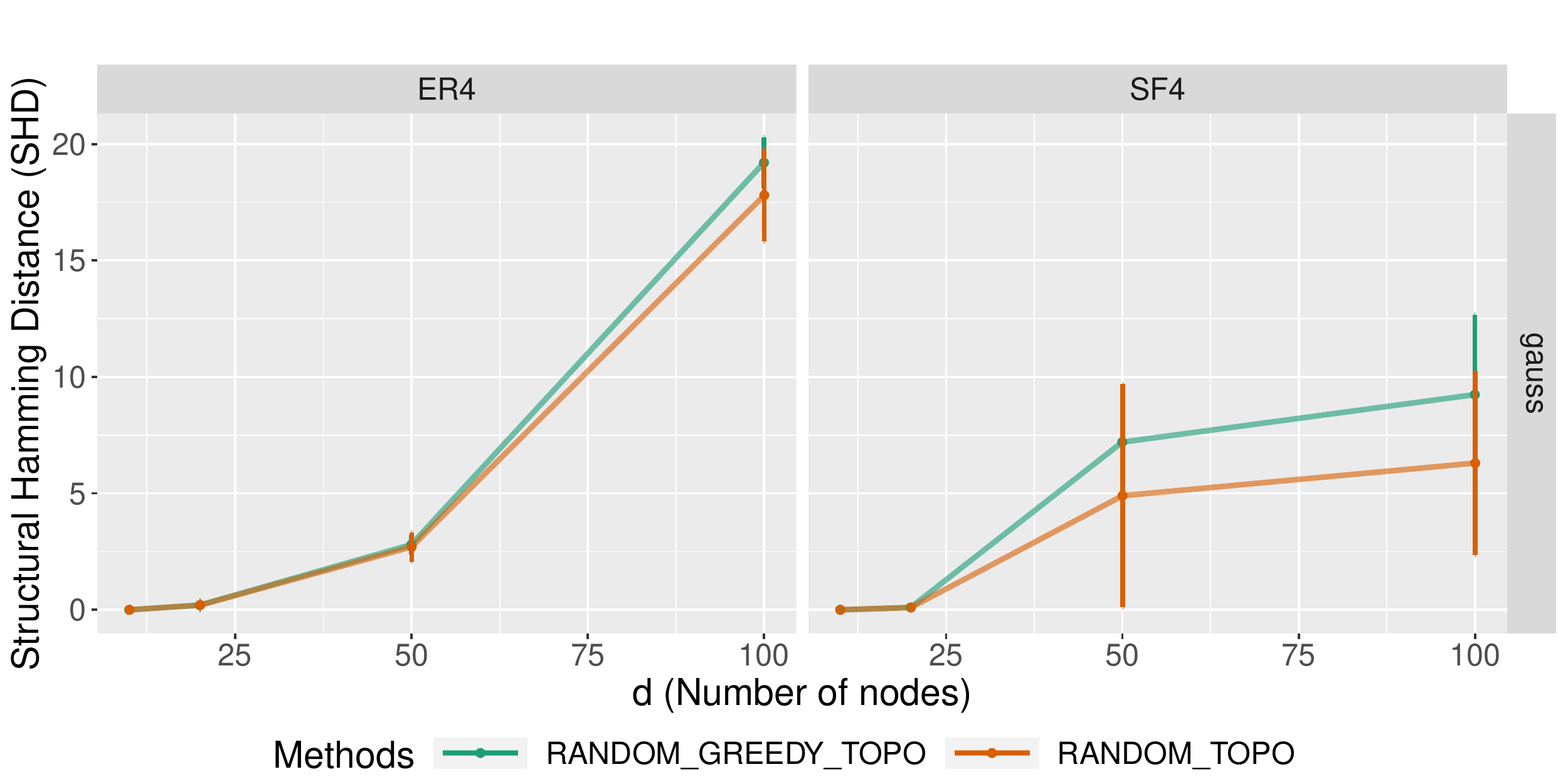}}\\
    \subfigure[Run-time (seconds)]{\label{fig:jj}\includegraphics[width=0.7\linewidth]{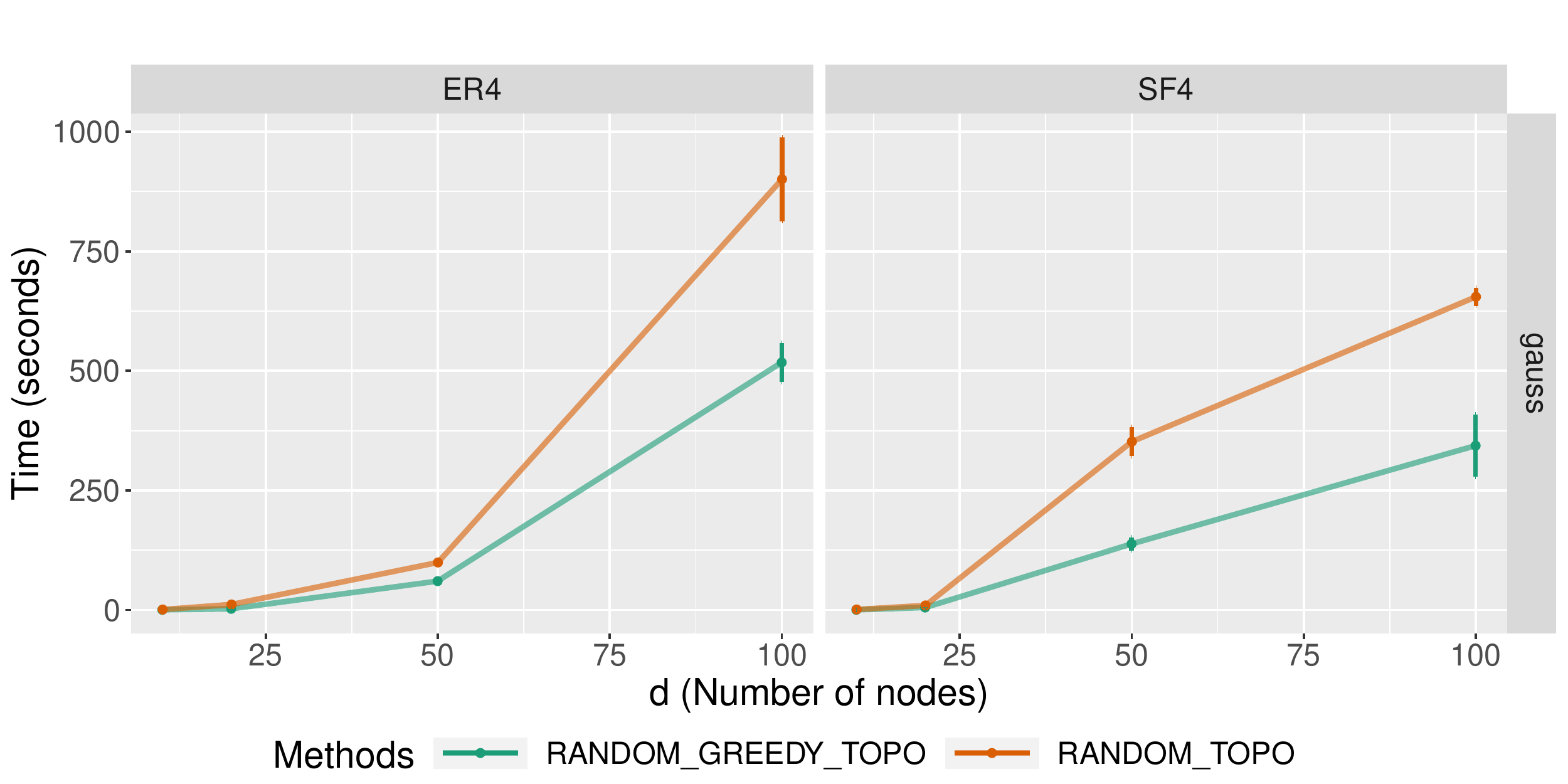}}
    \caption{Comparison between greedy scheme and non-greedy scheme by SHD \& running time.  $\topoRandom$ (TOPO starts with random initialization and uses the swap that decreases the score the most at each iteration), and $\topoGreedyRandom$ (TOPO starts with random initialization and uses the swap once it is found to decrease score.) Model: linear model with Gaussian noise. Graph type: ER4 graphs. Greedy scheme significantly improves time efficiency by sacrificing just a little accuracy. }
    \label{fig:greedy}
\end{figure}

\section{Broader Impacts}
Bayesian networks are fundamental models that represent the probabilistic relationship about how data are generated by a set of random variables. Our work contributes to the most fundamental questions: What is the underlying structure that generates data? Specifically speaking, how can one recover such structure accurately and efficiently? We propose an algorithm with theoretical guarantees to address them. The significant contribution of this work is about better solving a nonconvex continuous score-based structure learning formulation. The dramatic improvements in accuracy means better structure recovery and more accurate discovery about the underlying probabilistic relationships. 

A potential negative impact of this work is that errors in structure learning may compound into potentially more serious downstream errors. For example, a false discovery about causality may result in a company investing tons of money and efforts to remedy an incorrectly detected cause to a problem, resulting in immeasurable losses.
How to prevent incorrect causation under this continuous framework is a crucial and exciting future research direction. 

\end{document}